\def\eqref#1{equation~\ref{#1}}
\def\1{\bm{1}}
\def\vx{{\bm{x}}}
\def\vy{{\bm{y}}}
\DeclareMathAlphabet{\mathsfit}{\encodingdefault}{\sfdefault}{m}{sl}
\SetMathAlphabet{\mathsfit}{bold}{\encodingdefault}{\sfdefault}{bx}{n}
\setlist[enumerate]{noitemsep, topsep=0pt, leftmargin=12pt}
\setlist[itemize]{noitemsep, topsep=0pt, leftmargin=12pt}
\definecolor{lightblue}{rgb}{0.68, 0.85, 0.90}
\definecolor{gold}{rgb}{1.0, 0.84, 0.0}
\definecolor{salmon}{rgb}{0, 0, 0.5}
\def\rot{\rotatebox}
\newcommand{\hlc}[2][yellow]{{%
    \colorlet{foo}{#1}%
    \sethlcolor{foo}\hl{#2}}%
}
\definecolor{green}{rgb}{0.8,1,0.8}
\definecolor{yellow}{rgb}{1,1,0.87}
\definecolor{cyan}{rgb}{0.0, 1.0, 1.0}
\newcommand{\first}[1]{{\colorbox{green}{\textbf{#1}}}}
\newcommand{\second}[1]{{\colorbox{yellow}{\underline{#1}}}}
\newcommand{\graph}{\mathcal{G}}
\newcommand{\scm}{\mathcal{S}}
\newcommand{\scmhat}{\widehat{\scm}}
\newcommand{\nodes}{\mathbf{X}}
\newcommand{\edges}{\mathbf{E}}
\newcommand{\structeqns}{\mathcal{F}}
\newcommand{\exogenous}{\epsilon}
\newcommand{\exogenousv}{\bm{\epsilon}}
\newcommand{\Pa}[1]{\mathrm{Pa}_{X_{#1}}}
\newcommand{\pa}[1]{\mathrm{Pa}_{x_{#1}}}
\newcommand{\xcf}[1]{\xb^{\text{CF}(#1)}}
\newcommand{\xcfhat}[1]{\widehat{\xb}^{\text{CF}(#1)}}
\newcommand{\xint}[1]{\xb^{\text{int}(#1)}}
\newcommand{\xinthat}[1]{\widehat{\xb}^{\text{int}(#1)}}
\newcommand{\xscf}[1]{x^{\text{CF}(#1)}}
\newcommand{\xscfhat}[1]{\widehat{x}^{\text{CF}(#1)}}
\newcommand{\xsinthat}[1]{\widehat{x}^{\text{int}(#1)}}
\newcommand{\normalD}{D_{\text{trn}}}
\newcommand{\scorefn}{\varphi}
\newcommand{\candR}{\mathcal{R}_{\text{cand}}}
\newcommand{\predR}{\widehat{\mathcal{R}}}
\newcommand{\our}{IDI}   
\newtheorem{assumption}{Assumption}
\newcommand{\CommentBlue}[1]{\textcolor{black}{ \hfill $\triangleleft$  \small{\text{#1}}}}
\newcommand{\trnP}{P^{\text{trn}}}
\newcommand{\fixnode}[1]{x_{#1}^{\text{fix}}}
\newcommand{\blue}[1]{#1}
\renewcommand{\xb}{\vx}
\newcommand{\setint}{\alpha}
\newcommand{\gtrcs}{\alpha^\star}
\newcommand{\hatrcs}{\hat{\alpha}}
\newcommand{\corrmethod}{Correlation}
\newcommand{\causalanomaly}{Causal Anomaly}
\newcommand{\causalfix}{Causal Fix}
\newcommand{\hyp}{\mathcal{H}}
\newcommand{\rcdist}{Q^{\text{RC}(j)}}
\newcommand{\its}[1]{g_{#1}}
\newcommand{\cfthm}{
    \begin{equation}
    \begin{aligned}
        \EE_{\vx \sim \rcdist_X}[|\xscf{j}_n  - \xscfhat{j}_n|] 
        &\leq \sum_{i > j} K^{n-i} \bigg[ 2^{n-i+1} \EE_{x_{i-1} \sim \trnP_{X_{i-1}}}\left[|f_{i}(x_{i-1}) -  \hat{f}_{i}(x_{i-1})|\right] \\
        &\quad + M^{n-i+1} \cdot \Bigl(\text{tvd}\bigl(\trnP_{X_{i-1}}, \rcdist_{X_{i-1}}\bigr) + \text{tvd}\bigl(\trnP_{X_{i-1}}, \hat{\trnP_{X_{i-1}}}\bigr)\Bigr)\bigg]
    \end{aligned}
    \end{equation}
}
\newcommand{\intthm}{
    \begin{equation}
    \resizebox{\textwidth}{!}{$
        \begin{aligned}
            \EE_{\vx \sim \rcdist_X}\left[|\xscf{j}_{n} - \xsinthat{j}_{n}|\right] 
            &\leq \sum_{i > j} K^{n-i} \Biggl[ \EE_{x_{i-1} \sim \trnP_{X_{i-1}}}\left[|f_i(\xscf{j}_{i-1}) - \hat{f}_{i}(\xscf{j}_{i-1})|\right] \\
            &\quad + M^{n-i+1}\, \operatorname{tvd}\Bigl(\trnP_{X_{i-1}}, \hat{\trnP}_{X_{i-1}}\Bigr) + \operatorname{std}(\exogenous_{i}) \\
            &\quad + \operatorname{std}(\tilde{\exogenous}_{i}) + \bigl|\EE[\exogenous_{i}] - \EE[\tilde{\exogenous}_{i}]\bigr| \Biggr]
        \end{aligned} 
    $}
    \end{equation}
}
\newcommand{\intcor}{
    \begin{align}
         \EE_{\vx \sim \rcdist_X}[|\xscf{j}_{n}  - \xsinthat{j}_{n}|] 
         &\leq \sum_{i > j} K^{n-i} \Biggl[ \EE_{x_{i-1} \sim \trnP_{X_{i-1}}}\Bigl[|f_i(\xscf{j}_{i-1}) - \hat{f}_{i}(\xscf{j}_{i-1})|\Bigr] \nonumber \\
         &\quad + M^{n-i+1}\, \operatorname{tvd}\Bigl(\trnP_{X_{i-1}}, \hat{\trnP}_{X_{i-1}}\Bigr) + \operatorname{std}(\exogenous_{i}) \Biggr] 
    \end{align}
}
\title{Robust Root Cause Diagnosis using In-Distribution Interventions}
\author{
Lokesh Nagalapatti\textsuperscript{1}\thanks{Correspondence to: Lokesh Nagalapatti \texttt{<nlokeshiisc@gmail.com>}. Work done by L. N. and A. S. during an internship at Microsoft Research India.}, 
Ashutosh Srivastava\textsuperscript{2}, 
Sunita Sarawagi\textsuperscript{1}, 
Amit Sharma\textsuperscript{3} \\
\textsuperscript{1}Indian Institute of Technology Bombay\\
\textsuperscript{2}International Institute of Information Technology Hyderabad \\
\textsuperscript{3}Microsoft Research India
}
\begin{document}

\maketitle

\begin{abstract}
Diagnosing the root cause of an anomaly in a complex interconnected system is a pressing problem in today's cloud services and industrial operations.
We propose \blue{In-Distribution Interventions}~(\our), a novel algorithm that predicts root cause as nodes that meet two criteria: 1) \textit{Anomaly:} root cause nodes should take on anomalous values; 2) \textit{Fix}: had the root cause nodes assumed usual values, the target node would not have been anomalous. Prior methods of assessing the fix condition rely on counterfactuals inferred from a Structural Causal Model (SCM) trained on historical data. But since anomalies are rare and fall outside the training distribution, the fitted SCMs  yield unreliable counterfactual estimates. \our\ overcomes this by relying on interventional estimates obtained by solely probing the fitted SCM at in-distribution inputs. \blue{We present a theoretical analysis comparing and bounding the errors in assessing the fix condition using interventional and counterfactual estimates. We then conduct experiments by systematically varying the SCM's complexity to demonstrate the cases where IDI's interventional approach outperforms the counterfactual approach and vice versa.} Experiments on both synthetic and PetShop RCD benchmark datasets demonstrate that \our\ consistently identifies true root causes more accurately and robustly than nine existing state-of-the-art RCD baselines. Code will be released at \small{\url{https://github.com/nlokeshiisc/IDI_release}}.
\end{abstract}

\section{Introduction}

In recent years, cloud services have become increasingly popular due to their advantages in resource sharing, scalability, and cost efficiency~\citep{cloud_1}. These systems consist of multiple interconnected nodes operating over a complex topology~\citep{cloud_call_graph_1, petshop, cloud_call_graph_2}. To ensure the health of cloud systems, each node continuously monitors key performance indicators (KPIs), such as node latency, request counts, CPU utilization, and disk I/O~\citep{traversal_5}. However, given the inherent complexity of these environments, faults are inevitable. These faults often manifest as anomalies—significant deviations from normal behavior—which are rare but can propagate across the system, impacting neighboring nodes and potentially compromising the entire application~\citep{anomalies_rare}. Timely identification of the {\em root cause} of such anomalies is crucial to minimizing downtime and reducing operational costs. While cloud systems can detect anomalies by identifying deviations in KPI patterns, automated root cause diagnosis (RCD) remains a significant challenge~\citep{cost}.

We define root cause as nodes that satisfy two key conditions: (1) \textbf{Anomaly condition:} the root cause node exhibits anomalous behavior even while its causal parents are operating usually; and (2) \textbf{Fix condition:} had the root cause nodes assumed their usual values, the anomaly at the target node would not have occurred. While many prior RCD methods address the anomaly condition~\citep{traversal_1, traversal_2, traversal_3, circa}, the fix condition is often overlooked, with some notable exceptions~\citep{shapley, toca, shapley_2} that assess it via counterfactuals.  Counterfactual estimation relies on learning a Structural Causal Model (SCM) (see Sec. \ref{sec:preliminary}). 
Given a causal graph linking the system's nodes, an SCM learns a set of functional equations that model the generation of each node as a function of its causal parents and latent exogenous variables. Such causal graphs can be derived from inverted call graphs that are easily available for cloud deployments~\cite{petshop}.

A key challenge with SCMs is that they are typically trained on historical data collected under usual operating conditions. As a result, while SCMs can produce accurate counterfactuals for in-distribution data, they become unreliable for anomalies that lie outside the training distribution~\citep{toca}. Our work reveals that these reliability issues stem from the first step in counterfactual estimation -- abduction, where the functional equations in the SCM need to be inverted to estimate exogenous variables. Fig.~\ref{fig:overview} provides an illustration.

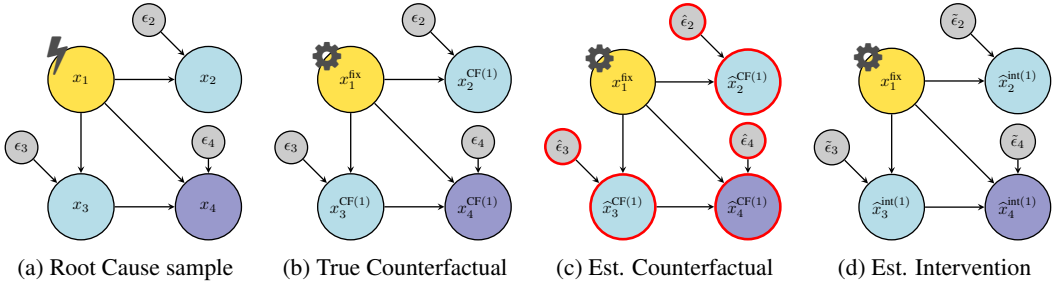
\begin{figure}[t]
    \centering
    \begin{subfigure}[b]{0.23\textwidth}
        \centering
        \resizebox{\textwidth}{!}{\begin{tikzpicture}[->,>=stealth,auto,node distance=2.7cm, thick]

    \node[draw, circle, minimum size=1.4cm, fill=gold!70] (X1) {$x_1$};
    \node[draw, minimum size=1.4cm,circle, fill=lightblue!90] (X2) [right of=X1] {$x_2$};
    \node[draw, circle, minimum size=1.4cm,fill=lightblue!90] (X3) [below of=X1] {$x_3$};
    \node[draw, circle, fill=salmon!40,  minimum size=1.4cm] (X4) [right of=X3] {$x_4$};

    \node[draw, circle, fill=gray!40] (epsilon2) [above left=0.5cm and 0.5cm of X2] {$\epsilon_2$};
    \node[draw, circle, fill=gray!40] (epsilon3) [above left=0.5cm and 0.5cm of X3] {$\epsilon_3$};
    \node[draw, circle, fill=gray!40] (epsilon4) [above = 0.3cm of X4] {$\epsilon_4$};

    \draw[->] (epsilon2) -- (X2);
    \draw[->] (epsilon3) -- (X3);
    \draw[->] (epsilon4) -- (X4);

    \node at ([xshift=-0.5cm, yshift=0.5cm]X1) {\textcolor{black!70}{\fontsize{25}{25}\faicon{bolt}}}; 

    \draw[->] (X1) -- (X2);
    \draw[->] (X1) -- (X3);
    \draw[->] (X3) -- (X4);
    \draw[->] (X1) -- (X4);  

\end{tikzpicture}}
        \caption{Root Cause sample}
        \label{fig:mrc}
    \end{subfigure}%
    \hfill
    \begin{subfigure}[b]{0.23\textwidth}
        \centering
        \resizebox{\textwidth}{!}{\begin{tikzpicture}[->,>=stealth,auto,node distance=2.7cm, thick]

    \node[draw, circle, minimum size=1.4cm, fill=gold!70] (X1) {$\fixnode{1}$};
    \node[draw, circle, minimum size=1.4cm, fill=lightblue!90] (X2) [right of=X1] {$\xscf{1}_2$};
    \node[draw, circle, minimum size=1.4cm, fill=lightblue!90] (X3) [below of=X1] {$\xscf{1}_3$};
    \node[draw, circle, minimum size=1.4cm, fill=salmon!40] (X4) [right of=X3] {$\xscf{1}_4$};

    \node[draw, circle, fill=gray!40] (epsilon2) [above left=0.5cm and 0.5cm of X2] {$\epsilon_2$};
    \node[draw, circle, fill=gray!40] (epsilon3) [above left=0.5cm and 0.5cm of X3] {$\epsilon_3$};
    \node[draw, circle, fill=gray!40] (epsilon4) [above = 0.3cm of X4] {$\epsilon_4$};

    \draw[->] (epsilon2) -- (X2);
    \draw[->] (epsilon3) -- (X3);
    \draw[->] (epsilon4) -- (X4);

    \node at ([xshift=-0.5cm, yshift=0.5cm]X1) {\textcolor{black!70}{\fontsize{20}{20}\faicon{cog}}};

    \draw[->] (X1) -- (X2);
    \draw[->] (X1) -- (X3);
    \draw[->] (X3) -- (X4);
    \draw[->] (X1) -- (X4);  

\end{tikzpicture}}
        \caption{True Counterfactual}
        \label{fig:mcf}
    \end{subfigure}%
    \hfill
    \begin{subfigure}[b]{0.23\textwidth}
        \centering
        \resizebox{\textwidth}{!}{\begin{tikzpicture}[->,>=stealth,auto,node distance=2.7cm, thick]

    \node[draw, circle, minimum size=1.4cm, fill=gold!70] (X1) {$\fixnode{1}$};
    \node[draw, circle, minimum size=1.4cm, fill=lightblue!90, ultra thick, draw=red] (X2) [right of=X1] {$\xscfhat{1}_2$};
    \node[draw, circle, minimum size=1.4cm, fill=lightblue!90, ultra thick, draw=red] (X3) [below of=X1] {$\xscfhat{1}_3$};
    \node[draw, circle, minimum size=1.4cm, fill=salmon!40, ultra thick, draw=red] (X4) [right of=X3] {$\xscfhat{1}_4$};

    \node[draw, circle, fill=gray!40, ultra thick, draw=red] (epsilon2) [above left=0.5cm and 0.5cm of X2] {$\hat{\epsilon}_2$};
    \node[draw, circle, fill=gray!40, ultra thick, draw=red] (epsilon3) [above left=0.5cm and 0.5cm of X3] {$\hat{\epsilon}_3$};
    \node[draw, circle, fill=gray!40, ultra thick, draw=red] (epsilon4) [above = 0.3cm of X4] {$\hat{\epsilon}_4$};

    \draw[->] (epsilon2) -- (X2);
    \draw[->] (epsilon3) -- (X3);
    \draw[->] (epsilon4) -- (X4);

    \node at ([xshift=-0.5cm, yshift=0.5cm]X1) {\textcolor{black!70}{\fontsize{20}{20}\faicon{cog}}};

    \draw[->] (X1) -- (X2);
    \draw[->] (X1) -- (X3);
    \draw[->] (X3) -- (X4);
    \draw[->] (X1) -- (X4);  

\end{tikzpicture}}
        \caption{Est. Counterfactual}
        \label{fig:mcfhat}
    \end{subfigure}%
    \hfill
    \begin{subfigure}[b]{0.23\textwidth}
        \centering
        \resizebox{\textwidth}{!}{\begin{tikzpicture}[->,>=stealth,auto,node distance=2.7cm, thick]

    \node[draw, circle, minimum size=1.4cm, fill=gold!70] (X1) {$\fixnode{1}$};
    \node[draw, circle, minimum size=1.4cm, fill=lightblue!90] (X2) [right of=X1] {$\xsinthat{1}_2$};
    \node[draw, circle, minimum size=1.4cm, fill=lightblue!90] (X3) [below of=X1] {$\xsinthat{1}_3$};
    \node[draw, circle, minimum size=1.4cm, fill=salmon!40] (X4) [right of=X3] {$\xsinthat{1}_4$};

    \node[draw, circle, fill=gray!40] (epsilon2) [above left=0.5cm and 0.5cm of X2] {$\tilde{\epsilon}_2$};
    \node[draw, circle, fill=gray!40] (epsilon3) [above left=0.5cm and 0.5cm of X3] {$\tilde{\epsilon}_3$};
    \node[draw, circle, fill=gray!40] (epsilon4) [above = 0.3cm of X4] {$\tilde{\epsilon}_4$};

    \draw[->] (epsilon2) -- (X2);
    \draw[->] (epsilon3) -- (X3);
    \draw[->] (epsilon4) -- (X4);

    \node at ([xshift=-0.5cm, yshift=0.5cm]X1) {\textcolor{black!70}{\fontsize{20}{20}\faicon{cog}}};

    \draw[->] (X1) -- (X2);
    \draw[->] (X1) -- (X3);
    \draw[->] (X3) -- (X4);
    \draw[->] (X1) -- (X4);  

\end{tikzpicture}}
        \caption{ Est. Intervention}
        \label{fig:minthat}
    \end{subfigure}
    
    \caption{\small{This figure demonstrates how abduction errors in counterfactuals impact RCD performance. In panel (a), an instance $\vx$ shows an anomaly at the purple target node $x_4$, with the root cause being the gold node $x_1$, which is affected by an abnormal intervention. \blue{Hence $x_1$ takes OOD values, and influences its downstream nodes also to take on OOD values}. Latent exogenous nodes are shown in grey.
    Panel (b) illustrates the true CF $\xcf{1}$ obtained by applying the fix. 
    Panel (c) presents the estimated CF for the fix, using exogenous \textit{estimates} $\hat{\epsilon}$ -- involving abduction that requires SCM evaluations in OOD regions.
    Finally, panel (d) shows the interventional estimate, which uses {\em sampled} $\tilde{\epsilon}$, yielding a resulting $\xsinthat{1}_4$ that conforms to the usual regime. Our theory in Sec \ref{sec:theory} captures this rigorously.}
    }
    \label{fig:overview}
    \vspace{-0.5cm}
\end{figure}

If the oracle values of latent exogenous variables were available, counterfactuals would be an ideal method for RCD. \blue{In practice, however, exogenous variables are never observed and must be estimated from causal mechanisms learned from observational training data.} These causal mechanisms are trained on data predominantly representing usual behavior, leading to significant estimation errors in exogenous variables compared to their oracle values. To address this, we introduce \our\ (\textbf{I}n-\textbf{D}istribution \textbf{I}nterventions), which evaluates the fix condition using in-distribution interventions. A key feature of \our\ is that, when assessing the fix condition for the true root cause, it requires inference of the SCM only in in-distribution regions, ensuring a robust diagnosis. Our theoretical analysis compares \our\ with counterfactual methods, demonstrating that the latter's error scales with the total variation distance between the usual training distribution and the rare anomalous distribution, which can be high for outliers. In contrast, \our's error is bounded by the standard deviation of the latent exogenous variables. 
Experiments using cloud-based synthetic SCMs and a widely used RCD benchmark dataset with known causal graph (PetShop) demonstrate that \our\ achieves greater robustness and accuracy in RCD compared to eleven baselines.


\section{Related Work}
Several anomaly detection methods~\citep{akoglu2021anomaly, chandola2009anomaly} use an anomaly scoring function $g$ and a threshold $\tau$ to detect anomalies when an instance $\vx$ has $g(\vx) > \tau$. Our focus in this work is on RCD, and we group existing RCD approaches into \corrmethod\ and causal methods.


\textbf{\corrmethod\ Approaches.} Non-causal RCD methods typically rely on correlation-based analyses~\citep{corr_baro, corr1, corr3, corr10, corr4, corr5, corr6, corr7, corr9} to assess relationships between a candidate root cause and the anomalous target node. Sometimes, spurious correlations make these methods predicting nodes that are not even causal ancestors of the target,
leading to misleading conclusions.

\textbf{Causal Approaches. } Causal methods use a causal graph $\graph$ to limit root cause search to ancestors. Some methods propose learning the graph $\hat{\graph}$, and may yield better results than the \corrmethod\ ones. 
In cloud deployments, it is best to leverage the easily available call graphs as learning a causal graph from training dataset requires some strong, and untestable assumptions~\citep{causal_discovery}.

%
\textbf{\causalanomaly\ Approaches.} Some methods appraoch RCD using just the anomaly condition to identify nodes with disrupted local causal mechanisms~\citep{traversal_1, traversal_2, traversal_3, traversal_4, traversal_5, can_2, can_3, toca, randomwalk_1, randomwalk_2, circa, epsilon}. They predict nodes $x_j$ with low empirical sampling probabilities $\trnP_{X_j}(x_j|\pa{j})$ conditioned on their causal parents as root causes. 
Some methods~\citep{shapley_2} attribute such abnormalities to latent exogenous disturbances, and they aim to infer abnormalities in the latent variables $\exogenous_j$.
%
%
\textbf{\causalfix\ Approaches.} Sometimes, an abnormal ancestor may have a negligible causal effect on the target, disqualifying it as a root cause. The fix condition becomes necessary to disregard such nodes. Prior methods assess the fix condition via soft interventions~\citep{fix_soft1, rcd}, interventions~\citep{toca}, and counterfactuals~\citep{shapley, shapley_2}. \blue{SAGE~\citep{sage} is a prominent method in this category that leverages conditional variational autoencoders to model SCMs and estimates counterfactuals by sampling from latent distribution predicted by the encoder. However, in the RCD context, the encoder itself is conditioned on OOD inputs during inference.} Most methods probe their trained models with OOD inputs. In contrast, our approach, \our, aims for RCD by probing its trained SCM on in-distribution inputs, thereby leading to a robust RCD approach.

\section{Problem Formulation}

\subsection{Training Setup}
Our training dataset comprises the causal graph $\graph$, and a set of $N$ samples $\set{\vx^i}_{i=1}^{N}$, where each $\vx^i \in \RR^n$. These samples are assumed to be drawn from an oracle SCM, $\scm = (\nodes = \set{X_1, \ldots, X_n}, \exogenousv=\set{\exogenous_1, \ldots, \exogenous_n}, \structeqns = \{f_1, \ldots, f_n\}, \trnP_{\exogenousv})$.  \blue{Our training dataset $\normalD$ encompasses samples that are obtained as follows from the Oracle SCM} 1) sample $\exogenousv^i$ i.i.d. from $\trnP_{\exogenousv}$, and 2) compute $\vx^i$ using the structural equations $\structeqns$. Thus, most of the samples in $\normalD$ exhibit usual behavior. To assess the anomaly level of each node $x_i \in \vx$, we train a set of anomaly detectors $\scorefn = \set{(g_1, \tau_1), (g_2, \tau_2), \ldots, (g_n, \tau_n)}$. Each detector $\scorefn_i$ consists of a scoring function $g_i: X_i \mapsto \RR^+$ and an anomaly threshold $\tau_i>0$. Thus, $\scorefn_i(x_i) = 1_{g_i(x_i) > \tau_i}$ is the binary indicator for $x_i$ being anomalous. These detectors can be trained unsupervised on $\normalD$ using algorithms such as Z-score~\citep{zscore}, isolation forest~\citep{iforest}, or IT score~\citep{shapley}.
Given a test instance $\xb = (x_1, \ldots, x_n)$ where an anomaly is detected at $x_n$, our \textbf{goal} is to trace the root cause of this anomaly to nodes in the system that caused it. We defer a detailed discussion on Structural Causal Models (SCMs) and the process of computing counterfactuals and interventions using SCMs to Appendix~\ref{sec:preliminary}.

\begin{figure}
    \centering
    \includegraphics[width=1.04\linewidth]{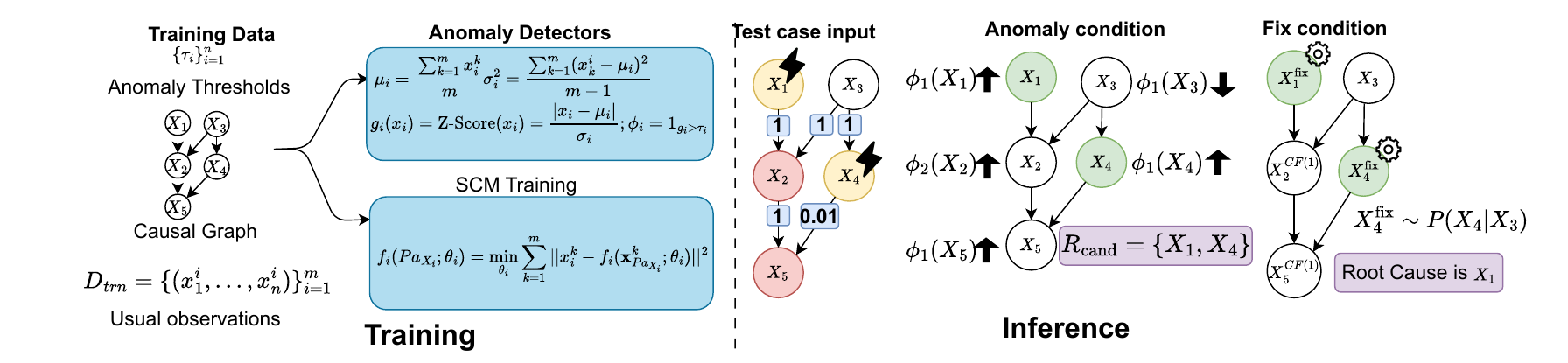}
    \caption{\small{\our's pipeline: During training, we use samples $\normalD$, a causal graph, and anomaly thresholds $\tau_i$ to learn anomaly detectors $\phi_i$, and structural equations $f_i$ as part of SCM training. During inference, we are given a root cause test case. The example illustrates two nodes  $X_1, X_4$ with abnormal interventions. The edge weights indicate the strength of parent's influence (e.g., $\hat{f_5}(x_2, x_4) = x_2 + 0.01 x_4$). While applying the anomaly condition,  \our\ discards $X_2$ in $\candR$ because of its anomalous parent $X_1$. Finally, the fix conditions excludes $X_4$ because it is insufficient to restore $X_5$ to its usual value. \our\ declares $X_1$ as the root cause.}}
    \vspace{-0.5cm}
    \label{fig:flow_diagram}
\end{figure}

\subsection{Qualifying Criteria for Root Cause} 

For ease of exposition, let us  assume there is a unique root cause $x_j$, and establish the criteria that $X_j$ needs to meet to qualify as a root cause. First, $X_j$ must be actionable; i.e., these must exist a fix value $\fixnode{j}$ such that applying it to $X_j$ avoids anomaly at $x_n$. 
Furthermore, the abnormality at the root cause node should not have originated from any of its ancestors; otherwise, resolving the anomaly at $x_n$ may also require fixing other nodes upstream of $x_j$. 
More formally, a root cause should satisfy the following two criteria:

\begin{enumerate}[leftmargin=12pt]
     \item \textit{Anomaly Condition:} $x_j$ must be anomalous, while its parent nodes are not, i.e., $\scorefn_j(x_j) = 1$ and $\scorefn_p(x_p) = 0$ for any parent node $p \in \Pa{j}$. 
    \item \textit{Fix Condition:} Setting $X_j$ to its fix value $\fixnode{j}$ should have resolved the anomaly $x_n$. This implies the counterfactual $\xcf{j}$, obtained by intervening on $X_j = \fixnode{j}$, should exhibit usual behavior at $X_n$; i.e., $\scorefn_n(\xscf{j}_n) = 0$.
\end{enumerate}

\xhdr{Defining the Root Cause Distribution} We define the distribution $\rcdist_X$ that governs unique root cause samples $\vx$, where $x_j$ is the root cause of anomaly at $x_n$. Such a distribution will have $\exogenousv_{-j}$, representing all exogenous variables except $\exogenous_j$, drawn from their usual distribution. Whereas, $\exogenous_j$ must be sampled strong enough to induce an anomaly at both $X_n, X_j$.



\begin{definition}
\label{def:rcdist}
We define the anomalous distribution $\rcdist_X$ for unique root cause at $X_j$ as:
\begin{align}
    \rcdist_{\exogenousv}(\exogenousv) &= \rcdist_{\exogenousv}(\exogenousv_{-j}) \rcdist_{\exogenousv}(\exogenous_j|\exogenousv_{-j})
\end{align}
These two factors are defined as:
\begin{align}
    \rcdist_{\exogenousv}(\exogenousv_{-j}) &= \trnP_{\exogenousv}(\exogenous_{-j}) \\
    \rcdist_{\exogenousv}(\exogenous_j | \exogenousv_{-j}) &= \trnP_{\exogenousv}\left(\exogenous_j | \exogenousv_{-j}, \scorefn_j(x_j) = 1, \scorefn_n\left(x_n\right) = 1\right)
\end{align}

    
%
We denote the distribution induced by $\rcdist_{\exogenousv}$ on the observed $X$ through the SCM $\scm$ as $\rcdist_X$.
\end{definition}

We determine the fix value $\fixnode{j}$ to be applied to the root cause node by sampling from the conditional distribution $\trnP_X(X_j \mid \pa{j})$, reflecting how humans would naturally adjust $X_j$ in practice. This approach also aligns with prior research~\citep{shapley}, which samples an exogenous fix $\exogenous_j^{\text{fix}}$ to induce usual values in $X_j$ as a consequence.  To estimate the causal effect of propagating $\fixnode{j}$ downstream to the target node $X_n$, we consider two approaches: the counterfactual estimate $\xscfhat{j}_n$ from \cite{shapley} and an alternative interventional estimate $\xsinthat{j}_n$ that we propose. We begin with a theoretical analysis to compare the errors associated with these two methods.

\section{Interventions vs. Counterfactuals for RCD}
\label{sec:theory}
We begin with the definitions used in our theoretical analysis. 

\begin{definition}
For two distributions $P, Q$ defined over a space $\mathcal{X}$,  the total variational distance~\citep{dabook} between them is defined as:
    $
    \text{tvd}(P, Q) = \frac{1}{2} \int_{x \in \mathcal{X}} |P(x) - Q(x)| dx
    $.
\end{definition}

\begin{definition}
    We say that an SCM $\scm$ is an additive noise model when the structural equations are of the form $x_i = f_i(\Pa{i}) + \exogenous_i$, where $f_i$ is a deterministic function, and $\exogenous_i$ is the exogenous variable. 
\end{definition}

Let $\hat{\trnP_{\exogenous_{i}}}$ denote the estimate for latent exogenous distribution $\trnP_{\exogenous_{i}}$, obtained from a validation dataset $D_V$. For additive noise models, $\hat{\trnP_{\exogenous_{i}}}$ is simply the empirical distribution of the error residuals $x_{i} - \hat{f}_{i}(\pa{i})$ computed for each $\vx \in D_V$. We need to use validation set to avoid the overfitting bias in estimation, that otherwise arises from using the train set~\citep{doubledebiased}. 

\begin{definition}
    Consider an additive noise model $\scm$, and let $\scmhat$ be its estimate learned from training data ($\normalD, \graph$). For a fix $\fixnode{j}$ applied to the node $X_j$, let $\xcf{j}$ be the true counterfactual from $\scm$, and $\xcfhat{j}, \xinthat{j}$ be the estimated counterfactual and intervention from $\scmhat$. Then, the following equations show how these quantities are derived:
    \begin{align}
        \xscf{j}_{j+1} &= f_{j+1}(\fixnode{j}) + \exogenous_{j+1} \text{ where } \exogenous_{j+1} = x_{j+1} - f_{j+1}(x_{j}) \label{eqn:thm:truecfj+1}\\
        \xscfhat{j}_{j+1} &= \hat{f}_{j+1}(\fixnode{j}) + \hat{\exogenous}_{j+1} \text{ where } \hat{\exogenous}_{j+1} = x_{j+1} - \hat{f}_{j+1}(x_{j}) \label{eq:thm:estcfj+1} \\
        \xsinthat{j}_{j+1} &= \hat{f}_{j+1}(\fixnode{j}) + \tilde{\exogenous}_{j+1} \text{ where } \tilde{\exogenous}_{j+1} \sim \hat{P}_{\exogenous_{j+1}} \label{eq:thm:estintj+1} 
    \end{align}
    This procedure iterates for $j+2, \ldots, n$ in a \blue{topological order}.
\end{definition}

Since assessing the fix relies on $\scorefn_n(\xscf{j}_n)$, we bound the error incurred in estimating the true $\xscf{j}_n$ derived from the Oracle SCM $\scm$ using $\xscfhat{j}_n$ from a fitted SCM $\scmhat$.

\begin{theorem}
    Suppose the oracle SCM $\scm$ is an additive noise model over a chain graph $\graph = X_1 \rightarrow \cdots \rightarrow X_n$, with structural equations of the form $f_i(x_{i-1}) + \exogenous_i$, where each $\exogenous_i$ has bounded variance $\sigma^2$, and each function $f_i$ is $K$-Lipschitz. Consider the hypothesis class $\hyp = \{\hyp_i\}_{i=1}^n$, where each $\hyp_i$ comprises bounded $K$-Lipschitz functions, resulting in losses bouned by $M>0$. Let $\scmhat$ be the SCM fitted on training data $\normalD$, with estimated functions $\{\hat{f}_i\}_{i=1}^n$. Then, for test samples $\vx$ drawn from the root cause distribution $\rcdist_X$, with $X_j$ as the unique root cause; for a fix $\fixnode{j} \sim \hat{\trnP_X}(X_j | x_{j-1})$ sampled from its empirical distribution, the estimated counterfactual $\xscfhat{j}_n$ computed from $\scmhat$ satisfies:
    \cfthm
    \label{thm:main:cf}
\end{theorem}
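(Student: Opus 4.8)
The plan is to prove the bound by an induction along the chain $X_{j+1}\to\cdots\to X_n$, tracking the per-node error $\Delta_i := |\xscf{j}_i - \xscfhat{j}_i|$ and exploiting two ingredients: (i) the $K$-Lipschitz property of both the oracle mechanisms $f_i$ and the fitted mechanisms $\hat{f}_i\in\hyp_i$, which controls how errors propagate downstream; and (ii) a change-of-measure argument that rewrites every expectation that naturally lives under $\rcdist_X$ or under the fix-induced law in terms of the training marginals $\trnP_{X_{i-1}}$, at the cost of total-variation penalties. The structure of the target bound already tells us which penalties to expect: a $\text{tvd}(\trnP_{X_{i-1}},\rcdist_{X_{i-1}})$ term from querying $\hat f_i$ on out-of-distribution anomalous inputs, and a $\text{tvd}(\trnP_{X_{i-1}},\hat{\trnP_{X_{i-1}}})$ term from drawing the fix $\fixnode{j}$ from the empirical rather than the true conditional.

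First I would establish the one-step recursion. Using the defining recursions (\ref{eqn:thm:truecfj+1}) and (\ref{eq:thm:estcfj+1}) together with the additive-noise abduction $\exogenous_i = x_i - f_i(x_{i-1})$ and $\hat{\exogenous}_i = x_i - \hat{f}_i(x_{i-1})$, the abducted noise terms telescope: $\exogenous_i - \hat{\exogenous}_i = \hat{f}_i(x_{i-1}) - f_i(x_{i-1})$. Writing $\xscf{j}_i - \xscfhat{j}_i = \big[f_i(\xscf{j}_{i-1}) - \hat{f}_i(\xscfhat{j}_{i-1})\big] + \big[\exogenous_i - \hat{\exogenous}_i\big]$ and inserting $\pm\,\hat{f}_i(\xscf{j}_{i-1})$, the $K$-Lipschitzness of $\hat{f}_i$ gives
\[
\Delta_i \;\le\; K\,\Delta_{i-1} \;+\; \underbrace{|f_i(\xscf{j}_{i-1}) - \hat{f}_i(\xscf{j}_{i-1})|}_{\text{error at the counterfactual input}} \;+\; \underbrace{|f_i(x_{i-1}) - \hat{f}_i(x_{i-1})|}_{\text{error at the abducted input}},
\]
with the base case at $i=j+1$ obtained identically, reading $\fixnode{j}$ in place of $\xscf{j}_{j}$ (the intervened node). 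Unrolling from $i=j+1$ to $n$ produces the geometric weights $K^{n-i}$ multiplying the two function-error terms introduced at each node, and this is where the $K^{n-i}$ factor of the statement comes from.

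Next I would carry out the change of measure. Since each function-error $|f_i - \hat f_i|$ is bounded by the loss bound $M$, the elementary inequality $|\EE_P[h]-\EE_Q[h]|\le M\,\text{tvd}(P,Q)$ applies with $h=|f_i-\hat f_i|$. Applied to the abducted-input term, whose argument is $x_{i-1}\sim\rcdist_{X_{i-1}}$ (here I use that $\rcdist$ keeps all exogenous except $\exogenous_j$ usual, so $x_{i-1}$ is genuinely anomalous), this converts the expectation to $\EE_{\trnP_{X_{i-1}}}[|f_i-\hat f_i|]$ plus $M\,\text{tvd}(\trnP_{X_{i-1}},\rcdist_{X_{i-1}})$. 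Applied to the counterfactual-input term, whose argument is the value obtained by resampling $X_j$ from $\hat{\trnP_X}(X_j\mid x_{j-1})$ and propagating usual noise through $\scm$, it converts to the same leading expectation plus $M\,\text{tvd}(\trnP_{X_{i-1}},\hat{\trnP_{X_{i-1}}})$.

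The main obstacle is precisely this last conversion: the law of the counterfactual argument $\xscf{j}_{i-1}$ is not literally $\trnP_{X_{i-1}}$, but the law induced by the empirical fix kernel propagated through the true $K$-Lipschitz mechanisms. Relating this propagated law back to the training marginal — and doing so at every node of the downstream chain rather than once — is the technical heart, and it is what inflates the clean per-step constants into the stated $2^{n-i+1}$ and $M^{n-i+1}$: each node along the chain contributes a triangle-inequality split (the factor $2$) together with a boundedness-based change of measure (the factor $M$), and these compound multiplicatively over the $n-i+1$ intermediate laws once the recursion on the counterfactual distributions is resolved. Making that compounding rigorous — via chained triangle inequalities on the intermediate counterfactual laws, tracking how the Lipschitz maps and the uniform bound $M$ accumulate — is the delicate part; by contrast, collecting the geometric $K^{n-i}$ weights from unrolling the recursion and summing over $i>j$ is routine bookkeeping.
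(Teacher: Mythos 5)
Your proposal is correct and follows essentially the same route as the paper's proof: the same telescoping of the abducted residuals ($\exogenous_i-\hat{\exogenous}_i=\hat{f}_i(x_{i-1})-f_i(x_{i-1})$), the same $\pm\hat{f}_i(\xscf{j}_{i-1})$ insertion with Lipschitzness of $\hat{f}_i$ giving the $K\Delta_{i-1}$ recursion, and the same bounded-loss/total-variation change of measure producing the $\text{tvd}(\trnP_{X_{i-1}},\rcdist_{X_{i-1}})$ penalty for the abducted input and the $\text{tvd}(\trnP_{X_{i-1}},\hat{\trnP}_{X_{i-1}})$ penalty for the fix-propagated input. The one divergence is immaterial: the ``delicate compounding'' you anticipate as the source of the $2^{n-i+1}$ and $M^{n-i+1}$ factors does not occur in the paper either---its unrolled recursion yields per-step constants $2$ and $M$ (times $K^{n-i}$), and the exponentiated constants in the theorem statement are simply a looser presentation of that bound.
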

\textit{Proof Sketch:} The main issue stems from abduction in Eq. \ref{eq:thm:estcfj+1}, where $\hat{f}_{j+1}$ is inferred at an OOD input $x_j$. This introduces $\text{tvd}(P, Q)$ terms in the bound. The exogenous error at a node propagates downstream to their descendants, further compounding the error at the target node. Please refer Fig. \ref{fig:overview} for an illustration.

\textbf{Remark:} 
We first discuss the $\text{tvd}(\trnP_{X_{i-1}}, \rcdist_{X_{i-1}})$ terms that arise from \textit{abduction}.
In a geometric series, the leading term dominates the sum. Here, the leading term involves $\text{tvd}(\trnP_{X_j}, \rcdist_{X_j})$, where $\scorefn_j(x_j) = 1$. Thus, for the root cause $X_j$ the above discrepancy lies between the usual training distribution and the anomalous test distribution. Since this discrepancy is substantial, $\xscfhat{j}_n$ is a poor estimate of $\xscf{j}_n$. Other terms in the bound reflect discrepancies between i.i.d. train, test errors, and can be shown to reduce with increasing training size $|\normalD|$ using generalization bounds like VC-dimensions, etc.~\citep{dabook}.


Next, we conduct a similar analysis to assess the error in using $\xsinthat{j}_n$ as an estimate for $\xscf{j}_n$.

\begin{theorem}
    Under the same conditions laid out in \blue{Theorem \ref{thm:main:cf}}, the error between the true counterfactual $\xscf{j}_n$ and the estimated intervention $\xsinthat{j}_n$ admits the following bound:
    \intthm
    \label{thm:main:int}
    where $\text{std}(\bullet)$ denotes the standard deviation.
\end{theorem}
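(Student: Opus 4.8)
The plan is to induct along the chain on the per-coordinate discrepancy $\delta_i := \xscf{j}_i - \xsinthat{j}_i$. Since the fix is applied at $X_j$ and every strict ancestor of $X_j$ is left untouched, we have $\delta_i = 0$ for $i \le j$; in particular $\xscf{j}_j = \xsinthat{j}_j = \fixnode{j}$. For $i > j$ I would substitute the defining recursions $\xscf{j}_i = f_i(\xscf{j}_{i-1}) + \exogenous_i$ and $\xsinthat{j}_i = \hat{f}_i(\xsinthat{j}_{i-1}) + \tilde{\exogenous}_i$, insert the hybrid term $\hat{f}_i(\xscf{j}_{i-1})$, and apply the triangle inequality together with the $K$-Lipschitzness of $\hat{f}_i$ to obtain the one-step bound
\[
|\delta_i| \le \bigl|f_i(\xscf{j}_{i-1}) - \hat{f}_i(\xscf{j}_{i-1})\bigr| + K\,|\delta_{i-1}| + |\exogenous_i - \tilde{\exogenous}_i|.
\]

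\textbf{Unrolling the envelope.} Taking expectations over $\vx \sim \rcdist_X$ (and over the fresh noise draws $\tilde{\exogenous}$) and unrolling this affine recursion from $\delta_j = 0$ produces the geometric envelope
\[
\EE|\delta_n| \le \sum_{i>j} K^{n-i}\Bigl[\,\EE\bigl|f_i(\xscf{j}_{i-1}) - \hat{f}_i(\xscf{j}_{i-1})\bigr| + \EE|\exogenous_i - \tilde{\exogenous}_i|\,\Bigr].
\]
This is structurally the same envelope as in Theorem~\ref{thm:main:cf}; the two results diverge only in how the exogenous term is controlled, which is exactly where the intervention pays off.

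\textbf{The two key estimates.} First, for the function term I would change measure from the law of $\xscf{j}_{i-1}$ to $\trnP_{X_{i-1}}$. The counterfactual argument $\xscf{j}_{i-1}$ is obtained by sampling $\fixnode{j} \sim \hat{\trnP}_X(X_j \mid x_{j-1})$ and propagating it through the \emph{true} mechanisms with usual exogenous noise; had $\fixnode{j}$ instead been drawn from the true conditional $\trnP_X$, the induced law on $X_{i-1}$ would be exactly $\trnP_{X_{i-1}}$. Hence the only discrepancy between the law of $\xscf{j}_{i-1}$ and $\trnP_{X_{i-1}}$ stems from the benign train-versus-estimate mismatch, which by the data-processing inequality for total variation is controlled by $\operatorname{tvd}(\trnP, \hat{\trnP})$ and crucially \emph{not} by the large $\operatorname{tvd}(\trnP_{X_{i-1}}, \rcdist_{X_{i-1}})$ appearing in Theorem~\ref{thm:main:cf}. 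Together with the bound $M$ on the loss and the same layer-by-layer bookkeeping as in Theorem~\ref{thm:main:cf}, this yields $\EE_{x_{i-1}\sim\trnP_{X_{i-1}}}|f_i - \hat{f}_i| + M^{n-i+1}\operatorname{tvd}(\trnP_{X_{i-1}}, \hat{\trnP}_{X_{i-1}})$. Second, for the exogenous mismatch I would exploit that $\tilde{\exogenous}_i \sim \hat{P}_{\exogenous_i}$ is drawn independently of the true abducted $\exogenous_i$, so centering both variables and applying the triangle inequality followed by Jensen gives
\[
\EE|\exogenous_i - \tilde{\exogenous}_i| \le \operatorname{std}(\exogenous_i) + \operatorname{std}(\tilde{\exogenous}_i) + \bigl|\EE[\exogenous_i] - \EE[\tilde{\exogenous}_i]\bigr|.
\]
Substituting both estimates into the envelope reproduces the claimed bound.

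\textbf{Main obstacle.} The Lipschitz recursion and the Jensen step are routine. The delicate step, and the entire point of the theorem, is the change of measure: arguing that, after the fix is applied, the propagated counterfactual values are in-distribution so that the operative total-variation gap is the harmless train-versus-estimate gap rather than the malignant train-versus-anomaly gap. Making this rigorous requires the data-processing inequality for $\operatorname{tvd}$ and the observation that the strict ancestors $x_{<j}$ (hence the conditioning value $x_{j-1}$ used to sample the fix) are themselves in-distribution under $\rcdist_X$, which holds precisely because the unique root cause injects its anomaly only through $\exogenous_j$.
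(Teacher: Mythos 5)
Your proposal is correct and takes essentially the same route as the paper's proof: the same hybrid-term decomposition $f_i(\xscf{j}_{i-1}) - \hat{f}_i(\xscf{j}_{i-1})$ plus the $K$-Lipschitz recursion on $\hat{f}_i$, the same change of measure showing that the fix-propagated counterfactual values are governed by $\trnP$ up to a $\operatorname{tvd}(\trnP, \hat{\trnP})$ penalty rather than the malignant $\operatorname{tvd}(\trnP, \rcdist)$ appearing in Theorem~\ref{thm:main:cf}, and the same centering-plus-Jensen argument bounding $\mathbb{E}|\exogenous_i - \tilde{\exogenous}_i|$ by $\operatorname{std}(\exogenous_i) + \operatorname{std}(\tilde{\exogenous}_i) + |\mathbb{E}[\exogenous_i] - \mathbb{E}[\tilde{\exogenous}_i]|$. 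The only differences are presentational: you unroll the recursion before bounding the individual terms and invoke the data-processing inequality explicitly, whereas the paper proceeds node by node ($j+1$, then $j+2$, then $n$) and routes the measure change through its discrepancy-distance lemma.
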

\textit{Proof Sketch:} Unlike in the CF case, the estimation of $\xsinthat{j}_n$ does not need abduction. Instead, it samples $\tilde{\exogenous}_{j+1}$ as shown in Eq. \ref{eq:thm:estintj+1}, and the error from abduction is limited by the standard deviation of the oracle $\exogenous_j$. A key advantage of this is that the challenging $\text{tvd}(P, Q)$ terms are eliminated from the bound, and these samples $\tilde{\exogenous}_{j+1}$ result in only in-distribution inputs to the fitted SCM.

\begin{corollary}
    Suppose in \blue{Theorem \ref{thm:main:int}}, the exogenous variables $\exogenous_i$ are zero-mean in addition to having bounded variance for any $i \in [n]$, then the two terms $\text{std}(\tilde{\exogenous}_{i})$ and  $|\mathbb{E}[\exogenous_{i}] - \mathbb{E}[\tilde{\exogenous}_{i}]|$ can be dropped from the bound in \blue{Theorem \ref{thm:main:int}}.
    \label{thm:cor:int}
\end{corollary}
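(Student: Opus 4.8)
The plan is to observe that the corollary modifies only a single sub-step in the proof of Theorem~\ref{thm:main:int} and inherits everything else verbatim. Recall that the three noise terms $\text{std}(\exogenous_i)$, $\text{std}(\tilde{\exogenous}_i)$ and $\big|\mathbb{E}[\exogenous_i]-\mathbb{E}[\tilde{\exogenous}_i]\big|$ all arise from bounding the noise-mismatch quantity $\mathbb{E}\big[|\exogenous_i-\tilde{\exogenous}_i|\big]$ that occurs once per level $i>j$ of the telescoping recursion: the true counterfactual reuses the abducted noise $\exogenous_i$, whereas the interventional estimate injects a freshly sampled $\tilde{\exogenous}_i\sim\hat{P}_{\exogenous_i}$. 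Inserting the two means and applying the triangle inequality followed by Jensen's inequality gives
\[
\mathbb{E}\big[|\exogenous_i-\tilde{\exogenous}_i|\big]\;\le\;\text{std}(\exogenous_i)+\text{std}(\tilde{\exogenous}_i)+\big|\mathbb{E}[\exogenous_i]-\mathbb{E}[\tilde{\exogenous}_i]\big|.
\]
So I only need to revisit this line; the $K^{n-i}$ factors, the function-mismatch errors $\mathbb{E}[|f_i-\hat{f}_i|]$, and the total-variation terms $\text{tvd}(\trnP_{X_{i-1}},\hat{\trnP}_{X_{i-1}})$ are untouched.

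The first term to eliminate is the mean gap. The hypothesis $\mathbb{E}[\exogenous_i]=0$ handles one side directly. For the other, $\tilde{\exogenous}_i$ is drawn from $\hat{P}_{\exogenous_i}$, the empirical distribution of the additive-noise residuals $x_i-\hat{f}_i(\pa{i})$ on the held-out set; since these residuals estimate the zero-mean oracle noise $\exogenous_i$ (and are centered by the intercept of the fitted mechanism), $\mathbb{E}[\tilde{\exogenous}_i]=0$ as well. Hence $\big|\mathbb{E}[\exogenous_i]-\mathbb{E}[\tilde{\exogenous}_i]\big|=0$ and this term is removed exactly.

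The genuine obstacle is $\text{std}(\tilde{\exogenous}_i)$: because $\tilde{\exogenous}_i$ is an independent fresh draw with its own positive variance it cannot literally vanish, so I would discharge it by showing it is not a new source of error. Writing $\text{std}(\tilde{\exogenous}_i)=\text{std}(\exogenous_i)+\big(\text{std}(\tilde{\exogenous}_i)-\text{std}(\exogenous_i)\big)$, the discrepancy between the spreads of $\tilde{\exogenous}_i\sim\hat{P}_{\exogenous_i}$ and $\exogenous_i\sim\trnP_{\exogenous_i}$ is controlled (through the bounded-loss constant $M$) by the estimation-error quantities already present in the bound, namely the function mismatch $\mathbb{E}[|f_i-\hat{f}_i|]$ and the total-variation term; up to these, $\text{std}(\tilde{\exogenous}_i)$ coincides with $\text{std}(\exogenous_i)$ and is absorbed into it. Consequently only $\text{std}(\exogenous_i)$ survives as a standalone noise term, yielding the simplified bound stated in the corollary. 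I expect this std-merging step to be where most care is needed, since here the zero-mean assumption has to be combined with the consistency of the residual-based noise estimate rather than used in isolation; a cleaner but definition-altering alternative would be to replace the sampled $\tilde{\exogenous}_i$ by its mean $0$, in which case $\text{std}(\tilde{\exogenous}_i)$ never enters and $\mathbb{E}[|\exogenous_i-\tilde{\exogenous}_i|]=\mathbb{E}[|\exogenous_i|]\le\text{std}(\exogenous_i)$ immediately.
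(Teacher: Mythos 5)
Your proposal ultimately contains the paper's proof, but only as the throwaway remark in your final sentence; the route you actually develop has two genuine gaps. The paper's argument is exactly your ``cleaner but definition-altering alternative'': under the zero-mean assumption one \emph{chooses} the sampling distribution $\hat{\trnP_{\exogenous_{i}}}$ to be the Dirac delta $\delta(\tilde{\exogenous}_{i}=0)$, i.e.\ replaces the sampled noise by its known mean. Then $\operatorname{std}(\tilde{\exogenous}_{i})=0$ and $|\mathbb{E}[\exogenous_{i}]-\mathbb{E}[\tilde{\exogenous}_{i}]|=0$ identically, while the remaining mismatch is
\begin{equation}
\mathbb{E}\bigl[|\exogenous_{i}-\tilde{\exogenous}_{i}|\bigr]=\mathbb{E}\bigl[|\exogenous_{i}|\bigr]\;\leq\;\sqrt{\mathbb{E}[\exogenous_{i}^{2}]}\;=\;\operatorname{std}(\exogenous_{i}),
\end{equation}
so both terms are deleted with the corollary's constants intact. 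This choice is not an illegitimate alteration: the corollary is a statement about what bound the interventional estimator admits, and the paper's own proof of Theorem~\ref{thm:main:int} already plugs in $\mathbb{E}[\tilde{\exogenous}_{i}]$ (rather than a fresh sample) in the downstream prediction step, and its remark below the std-decomposition explicitly licenses setting $\tilde{\exogenous}_{i}=0$ when the noise is zero-mean.

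The gaps in the route you chose to develop instead: (1) you claim $\mathbb{E}[\tilde{\exogenous}_{i}]=0$ exactly because the residuals are ``centered by the intercept of the fitted mechanism'', but the residuals defining $\hat{\trnP_{\exogenous_{i}}}$ are computed on a held-out validation set $D_V$, not the training set, so their empirical mean is only approximately zero (a fluctuation of order $1/\sqrt{|D_V|}$); the mean-gap term therefore does not vanish under your reading, it merely becomes small, which is not what the corollary asserts. (2) More seriously, your treatment of $\operatorname{std}(\tilde{\exogenous}_{i})$ --- writing it as $\operatorname{std}(\exogenous_{i})$ plus a discrepancy ``controlled by the estimation-error quantities already present'' --- is asserted rather than proved, and even if proved it would not yield the corollary as stated: absorbing $\operatorname{std}(\tilde{\exogenous}_{i})$ into $\operatorname{std}(\exogenous_{i})$ doubles the coefficient of that term (and/or inflates the coefficients of the function-mismatch and total-variation terms), whereas the corollary keeps every coefficient unchanged and simply drops the two terms. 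Had you led with your last sentence, the proof would have been complete in two lines.
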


\textbf{Remark:} All terms, except $\text{std}({\exogenous_i})$, can be simplified using generalization bounds. Importantly, interventional estimates remain stable despite the drift between the training distribution $\trnP_X$ and the anomalous distribution $\rcdist_X$. Thus, when exogenous variables have low variance, interventions provide a more robust method for estimating $\xscf{j}_n$ and evaluating the fix condition.

We defer all proofs to Appendix \ref{app:sec:proofs}.

\section{\blue{Our Approach: \our}}

\label{sec:our_approach}

\our\ uses interventions motivated by the analysis in Sec. \ref{sec:theory} as they offer a robust approach to RCD. We illustrate the training and inference procedure of \our\ in Fig.~\ref{fig:flow_diagram}. 
\our\ first applies the anomaly condition to filter the promising root cause candidates into a set $\candR$. Then, it applies the fix condition to nodes in $\candR$ to diagnose the root cause. 
We lay down the procedure for unique root cause and multiple root cause scenarios separately. For multiple root cause diagnosis, we need the following assumption to be met:

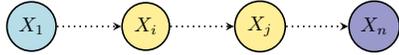
\begin{wrapfigure}{r}{0.38\textwidth}
    \vspace{-0.4cm}
    \begin{center}
    {\resizebox{0.38\textwidth}{!}{\begin{tikzpicture}[->,>=stealth,auto,node distance=2cm,thick]
    \node[draw, circle, fill=lightblue!90] (X1) {$X_1$};
    \node[draw, circle, fill=gold!40, right of=X1] (Xi) {$X_i$};
    \node[draw, circle, fill=gold!40, right of=Xi] (Xj) {$X_j$};
    \node[draw, circle, fill=salmon!40, right of=Xj] (Xn) {$X_n$};

    \draw[dotted, thick] (X1) -- (Xi);
    \draw[dotted, thick] (Xi) -- (Xj);
    \draw[dotted, thick] (Xj) -- (Xn);
\end{tikzpicture}}}
    \end{center}
    \caption{A chain graph with more than one root cause in a simple path. Dotted lines denote a directed path.}
    \label{fig:ocpp}
   \vspace{-0.6cm}
\end{wrapfigure}

\begin{assumption}
     At most one root cause exists in every simple path that leads to the target $X_n$ in  $\graph$. \label{ass:ocpp}
     \vspace{-0.3cm}
\end{assumption}

\xhdr{Step 1 of \our: Assessing the anomaly condition}
The goal of this step is to identify nodes whose exogenous variables are abnormal. \our's approach is straightforward; it iterates over all ancestors of $X_n$ in $\graph$, adding node $X_i$ to $\candR$ if $X_i$ shows an anomaly but none of its parents do. Theorem 4.5 in~\citep{toca} proves that this approach is sound for chain graphs.
To assess anomalies, we use the Z-Score, \blue{defined for $X_i$ as $\text{Z-score}(x_i) = \frac{|x_i - \mu_i|}{\sigma_i}$ where $\mu_i$ and $\sigma_i$ are the sample mean and standard deviation computed for the $i^{\text{th}}$ node in the training data.}
\our\ can easily accommodate any other methods proposed for the anomaly criterion~\citep{traversal_1, circa, toca}.

Next, We illustrate the need for assumption 
\ref{ass:ocpp} using a chain graph in Fig. \ref{fig:ocpp} that comprises two root cause nodes, $X_i$ and $X_j$, along a simple directed path to $X_n$. 
Since $X_i$ is a root cause, it is anomalous and is likely to influence its downstream nodes to take on anomalous values, including $X_j$'s parent. As a result, \our\ may discard $X_j$ incorrectly. 

\xhdr{Step 2 of \our: Assessing the fix Condition} 
We first describe the procedure for the simpler of unique root cause, and then generalize to multiple root causes. 

\textit{Unique root cause:} 
In this case, \our\ applies the fix condition on nodes in $\candR$ iteratively. 
Suppose $X_j$ is the true cause, our fix value $\fixnode{j}$ applied to $X_j$ suppresses the only abnormal $\exogenous_j$ that caused the anomaly $x_n$. 
Since \our\ samples all other exogenous variables downstream of $X_j$ from their usual distributions, the fix condition for true root cause is assessed 
by probing the learned SCM at in-distribution inputs. 
Therefore, 
$\scorefn_n(\xsinthat{j}_n)$ would evaluate to 0 and \our\ correctly predicts $X_j$.

\newcommand{\vstructcaption}{An example graph with more than one root cause.}
\begin{wrapfigure}{r}{0.2\textwidth}
    \begin{center}
    \vspace{-0.5cm}
    {\resizebox{0.2\textwidth}{!}{\begin{tikzpicture}[->,>=stealth,auto,node distance=2cm,thick]
    \node[draw, circle, fill=gold!40] (Xi) {$X_i$};
    \node[draw, circle, fill=gold!40, right of=Xi, node distance=2cm] (Xj) {$X_j$};
    \node[draw, circle, fill=lightblue!90, below of=Xi, node distance=1cm, xshift=1cm] (Xk) {$X_k$};
    \node[draw, circle, fill=salmon!40, below of=Xk, node distance=1.4cm] (Xn) {$X_n$};

    \draw[dotted, thick] (Xi) -- (Xk);
    \draw[dotted, thick] (Xj) -- (Xk);
    \draw[dotted, thick] (Xk) -- (Xn);
\end{tikzpicture}}}
    \end{center}
    \caption{\small{\vstructcaption}}
   \vspace{-0.6cm}
   \label{fig:tworc}
\end{wrapfigure}

\textit{Multiple root causes:} 
In this case, we require set-valued fixes to avoid OOD evaluations. We justify this need with an example in Fig. \ref{fig:tworc}. Let $\alpha^\star = \{X_i, X_j\}$ be two root cause nodes. 
Since they intersect at a common descendant $X_k$, a fix applied to $X_i$ leads to an OOD evaluation at $X_k$ due to the influence of its anomalous ancestor $X_j$. This can potentially causing errors in $\xsinthat{i}_k$ inference that propagate to $\xsinthat{i}_n$. 
However, when  both $X_i$ and $X_j$ are fixed simultaneously, all evaluations will be in-distribution. 

\blue{Another issue with interventions is that any superset of $\alpha^\star$ may appear to resolve the anomaly, leading to over-prediction of root causes. For instance, applying a fix to $\alpha \subset \alpha^\star$ suppresses the abnormalities in both the root causes in $\alpha^\star$ and also some redundant nodes in $\alpha^\star \setminus \alpha$. To address this, we leverage Shapley values~\citep{shapley_value} from cooperative game theory.}
%
%
Given a set $\mathcal{U}$ of $m$ items and a utility function $\phi: 2^{\mathcal{U}} \mapsto \mathbb{R}^+$ that assigns a score to each subset $\alpha \subseteq \mathcal{U}$, Shapley values provide a fair method to distribute credit among the $m$ players. A player $i$ that achieves a high score of $\phi(\alpha \cup \{i\}) - \phi(\alpha)$ across multiple $\alpha \subset \mathcal{U}$ receives a high Shapley value. We treat $\candR$ as the items and define the utility function for $\setint \subseteq \candR$ as the reduction in raw anomaly scores at $X_n$ after fixing $\alpha$: $g_n(x_n) - g_n(\xsinthat{\setint}_n)$. Any subset that includes $\alpha^\star$ reduces the raw anomaly scores significantly thereby achieving high Shapley values. 
The pseudocode for the multi-root-cause diagnosis algorithm is presented in Alg \ref{alg:our_algo} in the Appendix.

\vspace{-0.2cm}
\section{Experiments}
\vspace{-0.2cm}
We conduct a toy experiment on a four-variable dataset to showcase when interventions outperform counterfactuals and vice versa. We then test our approach on PetShop and multiple synthetic datasets, comparing it against a broader set of baseline RCD methods.
\vspace{-0.3cm}
\subsection{Toy Experiments}
\label{sec:main:toy}
We consider a causal graph where the root nodes $X_1, X_2, X_3$ each have an edge to a common child $X_4$. We instantiate two Oracle SCMs: a linear additive noise model and a non-linear one. We sample the linear weights $w_1, w_2, w_3$ from $\mathcal{N}(0,1)$ and define the non-linear model as  $
X_4 = \frac{\sin(X_1) + \sqrt{|X_2|} + \exp(-X_3)}{3} + \epsilon_4$. We draw the exogenous variables $\exogenous_1, \exogenous_2, \exogenous_3$ from $\mathcal{N}(0,1)$ and define the structural equations for the root nodes as $X_i = f_i(\exogenous_i) = \exogenous_i$ for $i \in \{1, 2, 3\}$.   To create root cause test samples, we assign one of $\{X_1, X_2, X_3\}$ a value from $U[3,10]$, ensuring at least a 3-standard deviation shift that induces an abnormality at $X_4$. We generate $n \in \{25, 50, 100, 1000\}$ training samples, along with 100 validation and 100 test samples, each with a unique root cause. Since we consider a simple SCM with a single function over 3 variables  and the linear model is estimated in closed form, small training sample sizes are chosen to study SCM fitting errors. 
Learning $\scmhat$ reduces to fitting $\hat{f}_4$: we fit the linear model using closed-form regression and train the non-linear model as a three-layer MLP with 10 hidden nodes and ReLU activations via gradient descent. For the toy experiment, we sample $\fixnode{j}$ from its \textit{true} distribution $\mathcal{N}(0,1)$. We assess the following errors:

%
\begin{enumerate}[leftmargin=12pt]
    \item Validation Error: Measures the accuracy of $\hat{f}_4$ on in-distribution data; $|f_4(\vx_{1:3}^{\text{val}}) - \hat{f_4}(\vx_{1:3}^{\text{val}})|^2$.
    \item Test (RC) Error: Measures the prediction accuracy on OOD samples; $|f_4(\vx_{1:3}^{\text{RC}}) - \hat{f_4}(\vx_{1:3}^{\text{RC}})|^2$
    \item Counterfactual Error: Quantifies the error arising from using CFs; $|\xscf{j}_4 -  \xscfhat{j}_4|^2$.
    \item Interventional Error: Measures the error for using interventions; $|\xscf{j}_4 - \xsinthat{j}_4|^2$.
\end{enumerate}

\begin{figure}
    \centering
    \begin{subfigure}{0.9\textwidth}
        \centering \includegraphics[width=\textwidth]{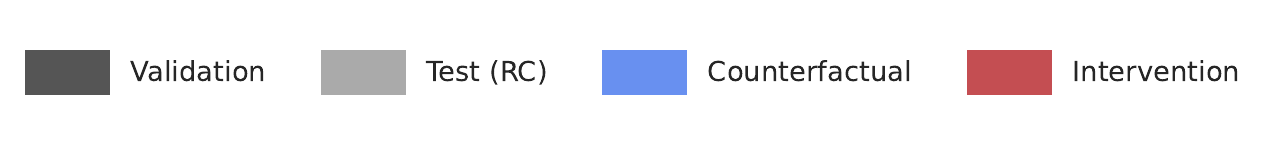}
    \end{subfigure}%
    \vspace{0.2cm}
    
    \centering
    \begin{subfigure}{0.49\textwidth}
        \centering
        \includegraphics[width=\textwidth]{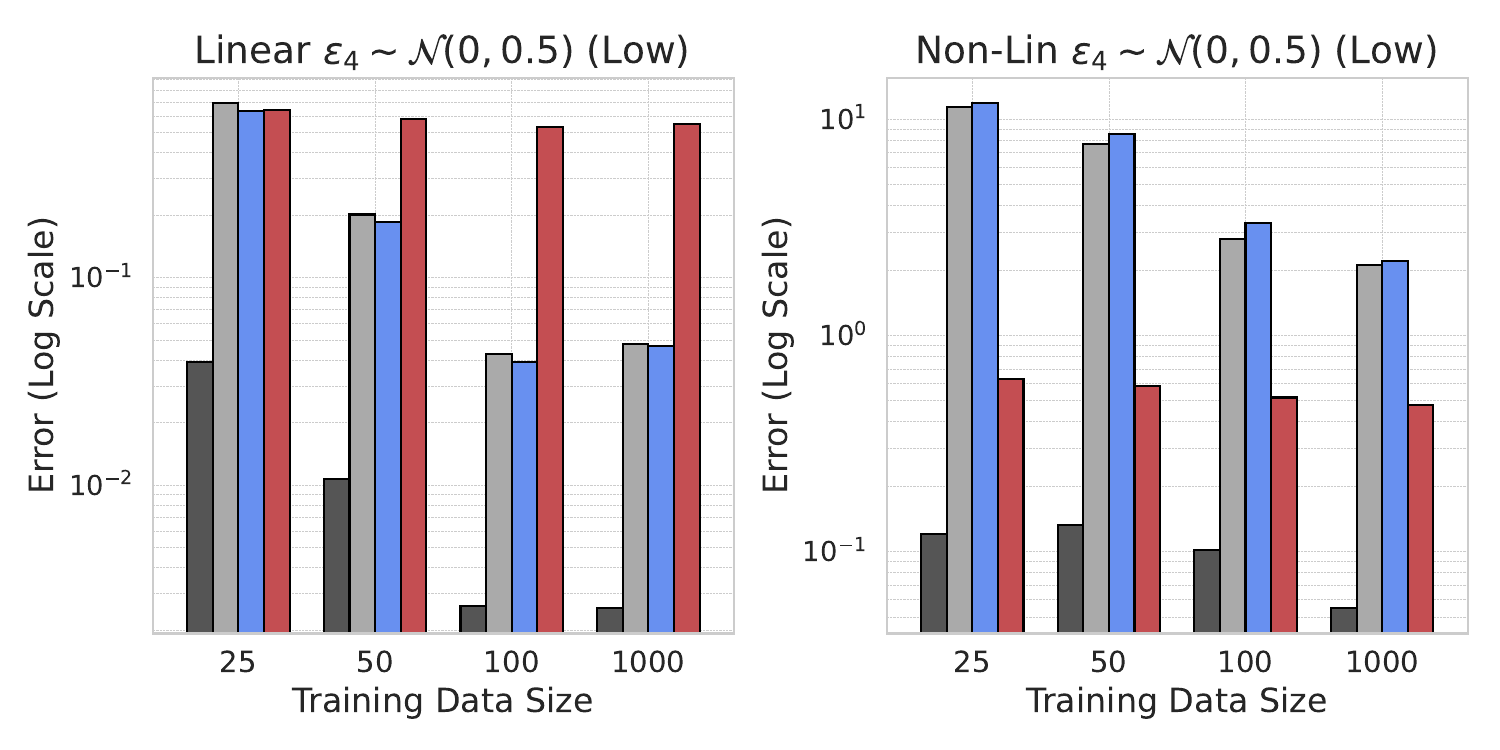}
        \caption{Low Variance}
        \label{fig:toy_1}
    \end{subfigure}%
    \hfill
    \begin{subfigure}{0.49\textwidth}
        \centering
        \includegraphics[width=\textwidth]{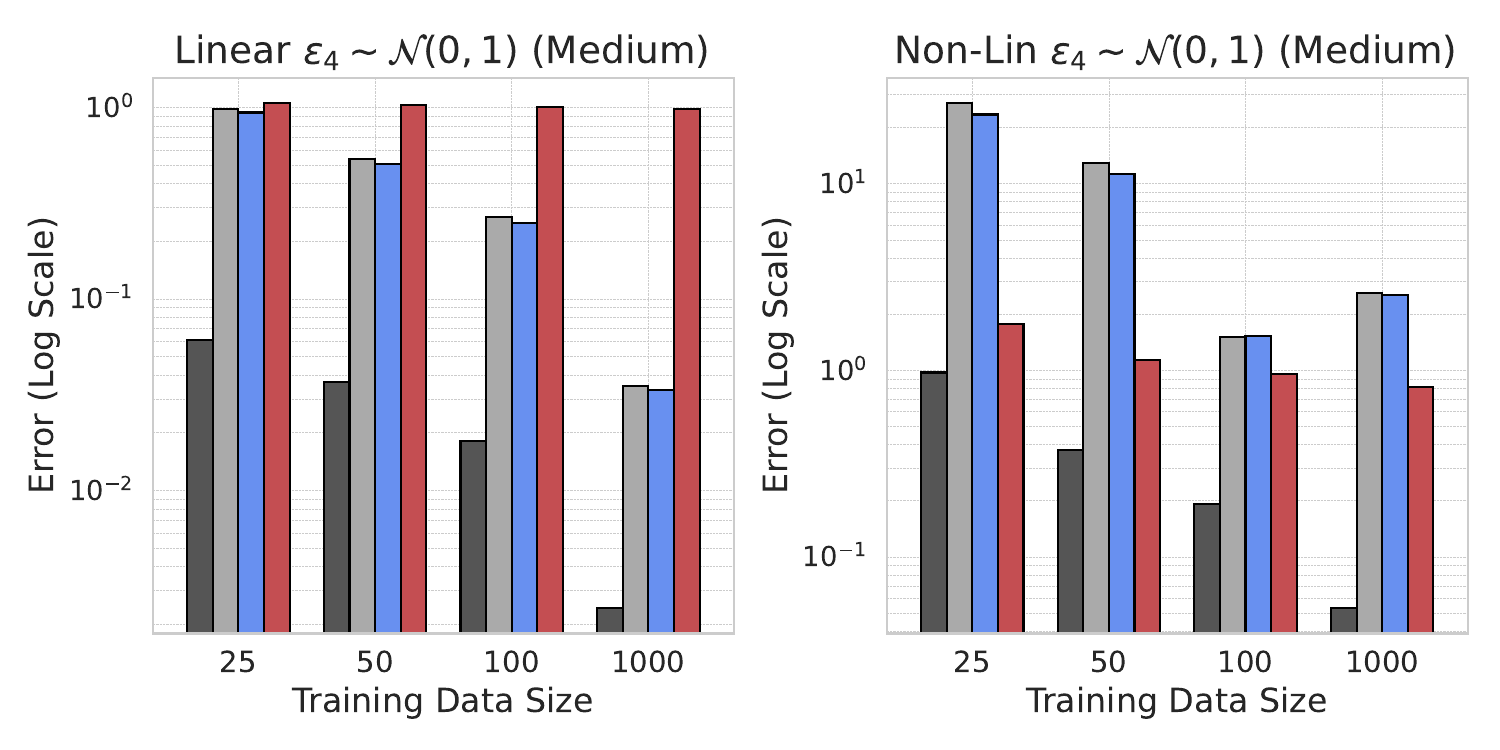}
        \caption{Medium Variance level}
        \label{fig:toy_2}
    \end{subfigure}
    
    \begin{subfigure}{0.49\textwidth}
        \centering
        \includegraphics[width=\textwidth]{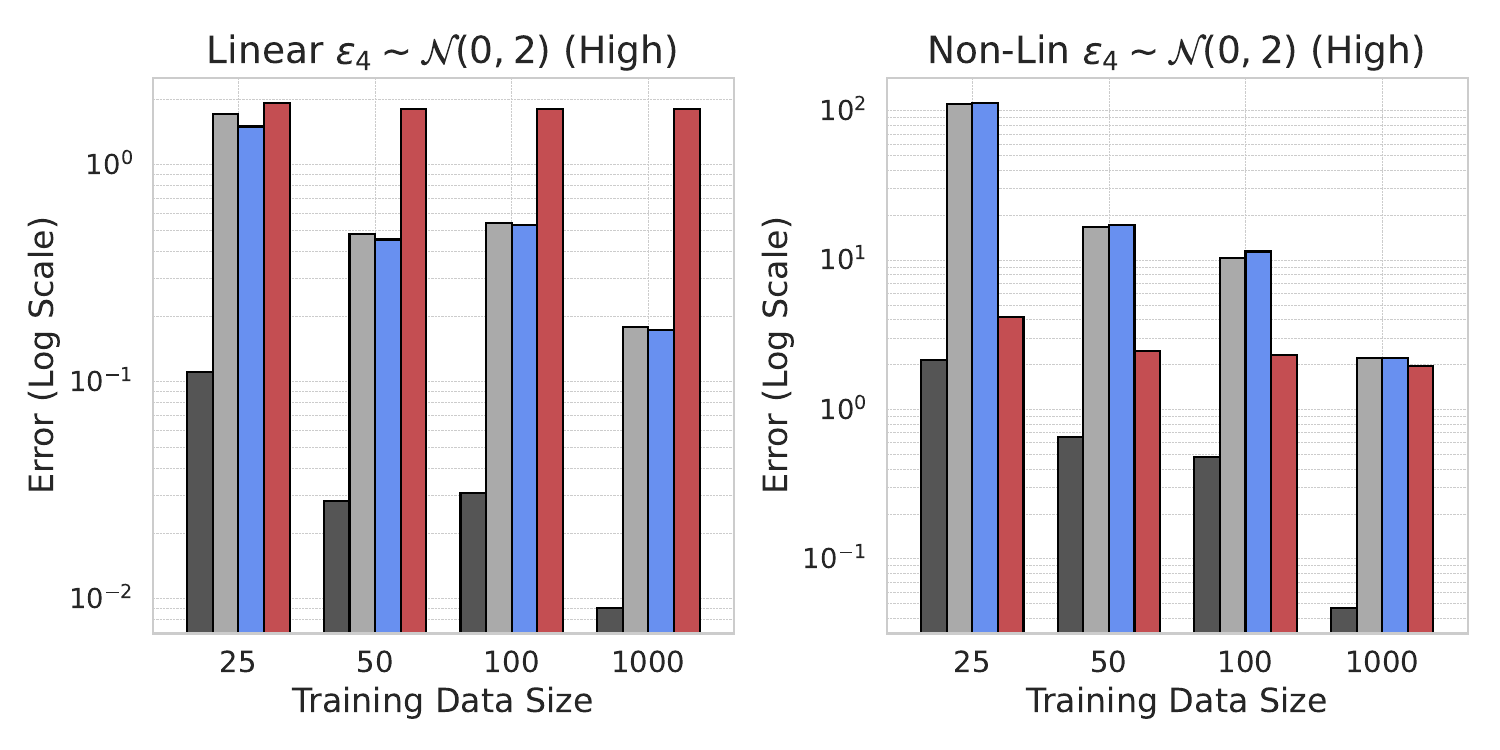}
        \caption{High Variance}
        \label{fig:toy_3}
    \end{subfigure}%
    \hfill
    \begin{subfigure}{0.49\textwidth}
        \centering
        \includegraphics[width=\textwidth]{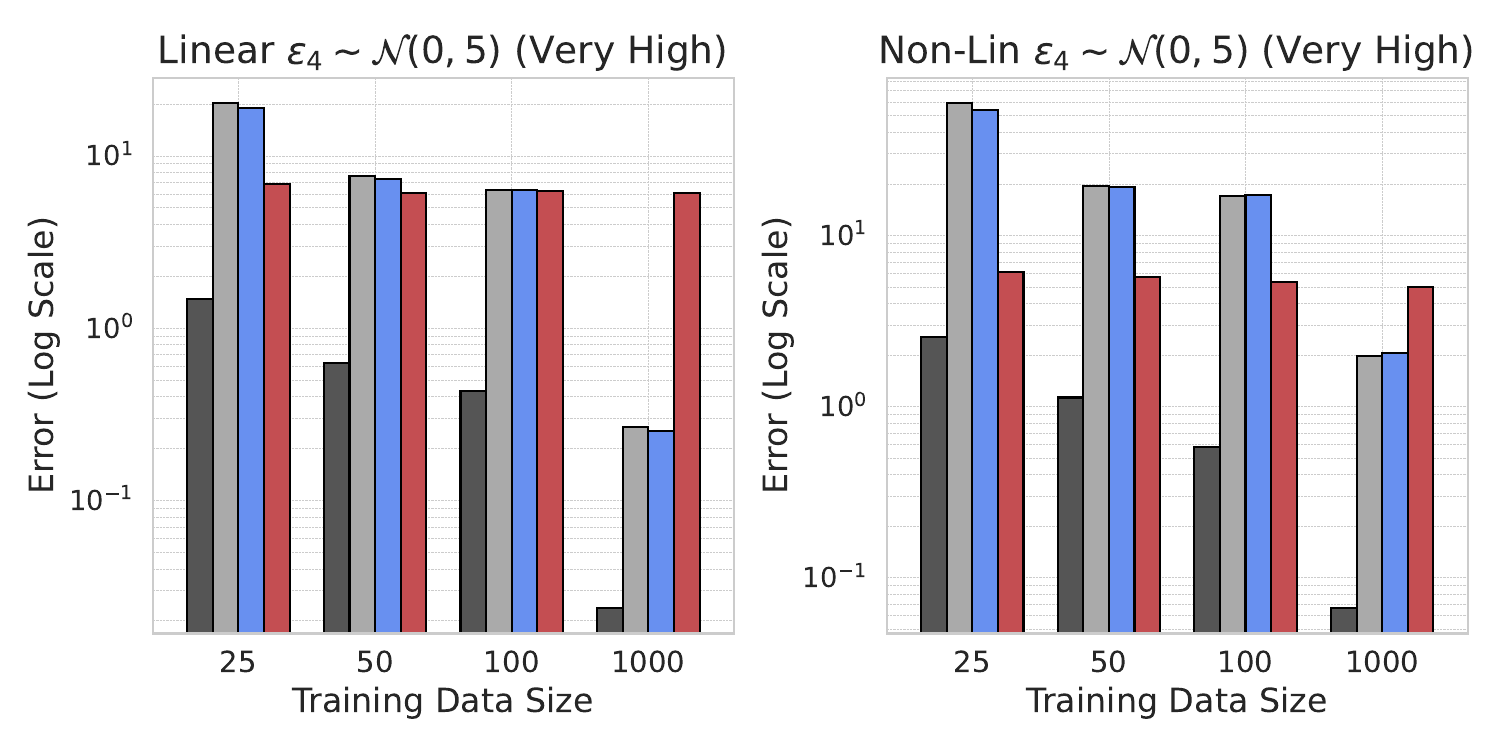}
        \caption{Very High Variance}
        \label{fig:toy_4}
    \end{subfigure}
    
    \caption{Assessing the impact of variance of $\exogenous_i$ using a four variable additive noise toy dataset.}
    \label{fig:toys_main}
    \vspace{-0.4cm}
\end{figure}

We defer a detailed description of the toy setup and additional experiments to Appendix~\ref{app:toy}. Figure~\ref{fig:toys_main} presents results across different $\exogenous_4$ variance levels. Each panel in the figure compares linear (left) and non-linear (right) settings. We make the following key observations.
\begin{itemize}[leftmargin=12pt]
    \item \textbf{Low Variance:} Linear models generalize well to the OOD root cause distribution, leading to small CF errors (Fig.~\ref{fig:toy_1}). Non-linear models, however, overfit, resulting in high CF errors.
    \item \textbf{Medium Variance:} Trends remain consistent, confirming that CF error correlates with OOD test error whereas interventional error remains bounded by the standard deviation of $\exogenous_4$.
    \item \textbf{High Variance:} Interventions outperform CFs, especially in low-data regimes, as high $\exogenous_4$ variance destabilizes the learning $\hat{f}_4$ leading to very poor OOD generalization.
    
    \item \textbf{Very High Variance:} CFs slightly outperform interventions, but only because the additive noise assumption holds. This fails when the assumption is violated, as we will see in Sec.~\ref{sec:expt:syn} (RQ3).
 \end{itemize}   
    \textbf{Summary:} This simple toy experiment shows that while interventional error is bounded by the standard deviation of $\exogenous_4$, CF error closely aligns with the test error on OOD root cause samples. Thus, when the learned SCM deviates from the true SCM on distributions unseen during training, interventional estimates provide a more reliable approach to root cause diagnosis.

\subsection{Comparison with Baselines}

We compare \our\ against following baselines:

\renewcommand{\tt}[1]{\texttt{#1}}

\begin{itemize}[leftmargin=12pt]
    \item \textbf{\corrmethod} This class includes \tt{$\epsilon$-diagnosis}~\citep{epsilon}, which identifies root causes by testing for significant behavior changes during anomalies, and \tt{ranked correlation}~\citep{petshop}, which assigns root cause to nodes correlated with the target.
    
    \item \textbf{\causalanomaly} This class encompasses \tt{traversal} methods~\citep{traversal_1, traversal_2, traversal_3, traversal_5}, broadly grouped by~\citep{toca}, including the \tt{smooth traversal} method introduced by them. These methods implement a version our anomaly condition. \tt{CIRCA}~\citep{circa} identifies nodes connected to the target through fully anomalous paths, while \tt{random walk} methods~\citep{randomwalk_1} use heuristics. 
    
    \item \textbf{\causalfix} This includes Hierarchical RCD (\tt{HRCD})~\citep{rcd}, which predicts root causes as nodes that suffered local mechanism changes affecting the target.
    \tt{TOCA}~\citep{toca} implements the fix condition jointly over all nodes. The \tt{CF Attribution}~\citep{shapley} method uses Shapley values to perform CF contribution analysis on {\em all} $n$ nodes.
\end{itemize}

We implemented \our\ in the RCD library released by PetShop~\citep{petshop}\footnote{\url{https://github.com/amazon-science/petshop-root-cause-analysis}}. 
PetShop uses Dowhy~\citep{dowhy} and gcm~\citep{bloebaum2022dowhygcm} for causal inference and PyRCA~\citep{pyrca} for root cause analysis. To ensure fair comparisons, we applied the same experimental settings for \our\ as those used for the baselines in PetShop.

\xhdr{Evaluation Metric} We assess root cause prediction accuracy using \metric@k~\citep{rcd}. 
Let $\gtrcs \subset \nodes$ be the ground truth root causes, and $\hatrcs$ the ordered list of predicted root causes, where $\hatrcs[1]$ is the most prominent. For any $k > 0$
\begin{align}
    \text{\metric@k}(\hatrcs, \gtrcs) = \frac{\sum_{i=1}^{|\gtrcs|+k-1}1_{\hatrcs[i] \in \gtrcs}}{|\gtrcs|}; \quad  1_{\hatrcs[i] \in \gtrcs} = 1 \text{ if } \hatrcs[i] \in \gtrcs\label{eq:perf_metric}
\end{align}
For $k=1$, we see that $\text{\metric@1}(\hatrcs, \gtrcs) = 1$ iff every node in $\gtrcs$ is present in $\hatrcs[1:k]$, while for larger $k$, we need $\gtrcs$ to be present in the first $|\gtrcs| + k-1$ predictions of $\hatrcs$. We assess using $k=1, 3$. 

\subsection{Experiments on PetShop: RCD in a Deployed Cloud System}

{\renewcommand{\arraystretch}{1}%
\begin{table*}[!h]
    \centering
    \setlength\tabcolsep{4pt}
    \resizebox{0.76\textwidth}{!}{
    \centering
    \begin{tabular}{c|l|rr|rr|rr}
    \hline
    & & \multicolumn{2}{c|}{Low} & \multicolumn{2}{c|}{High} & \multicolumn{2}{c}{Temporal} \\
    \hline
    Recall@ & & \multicolumn{1}{c}{k=1} & \multicolumn{1}{c|}{k=3} & \multicolumn{1}{c}{k=1} & \multicolumn{1}{c|}{k=3} & \multicolumn{1}{c}{k=1} & \multicolumn{1}{c}{k=3} \\
    \hline \hline
    \multirow{3}{*}{\rot{20}{\corrmethod}} & Random Walk~\citep{randomwalk_1} & 0.00 & 0.10 & 0.00 & 0.20 & 0.00 & 0.33 \\
    & Ranked Correlation~\citep{petshop} & 0.40 & 0.60 & 0.70 & \second{0.90} & 0.50 & 0.67 \\
    & $\epsilon$-Diagnosis~\citep{epsilon} & 0.00 & 0.00 & 0.00 & 0.00 & 0.17 & 0.17 \\
    \hline
    \multirow{3}{*}{\rot{20}{\causalanomaly}} & Circa~\citep{circa} & 0.60 & \second{0.80} & 0.60 &\first{1.00}& 0.67 &\first{1.00}\\
    & Traversal~\citep{traversal_1} & \second{0.80} & \second{0.80} & \first{0.90} & \second{0.90} &\first{1.00}&\first{1.00}\\
    & Smooth Traversal~\citep{toca} & 0.40 & 0.60 & 0.00 & 0.60 & 0.50 & \first{1.00} \\
    \hline
    \multirow{4}{*}{\rot{20}{\causalfix}} & HRCD~\citep{rcd} & 0.07 & 0.21 & 0.00 & 0.07 & 0.25 & 0.75 \\
    & TOCA~\citep{toca} & 0.40 & 0.40 & 0.20 & 0.20 & 0.00 & 0.00 \\
    & CF Attribution~\citep{shapley} & 0.40 & 0.60 & 0.40 & 0.70 & 0.00 & 0.50 \\
    & \our~(Ours) & \first{0.90} & \first{0.90} & \first{0.90} & \second{0.90} & \first{1.00} & \first{1.00} \\
    \hline
    \end{tabular}}
    \caption{\small{Diagnosing root causes of latency issues in PetShop, a cloud-based microservices dataset. Methods are categorized into \corrmethod, \causalanomaly, and \causalfix.}}
    \vspace{-0.5cm}
\end{table*}}

PetShop~\citep{petshop} is a recent dataset designed for benchmarking RCD methods in the cloud domain, featuring a call graph $\graph$ that causally links key performance indicators (KPIs). The baseline methods in the PetShop library use a linear additive noise model $\scmhat$, which we also adopted for \our. This dataset encompasses three types of latency issues: low, high, and temporal, with many methods successfully identifying the temporal issues. Overall, \our\ outperforms other methods in most settings, except for Recall@$3$ in high latency, where \texttt{CIRCA} emerged as the best performer. \texttt{CIRCA} identifies root causes as nodes connected to the target through all anomalous nodes, and possibly high latency test cases showed such a favorable behavior. Otherwise, \tt{Traversal} proved to be a strong contender against \our. The improvements observed in \our\ across various settings can be attributed solely to its robust implementation of the fix condition. In contrast, the gains seen in other methods assessing the fix condition are less pronounced compared to \causalanomaly\ methods, as their evaluations involve applying $\scmhat$ to out-of-distribution inputs.

\subsection{Experiments on Synthetic SCMs}
\label{sec:expt:syn}
Next, we design synthetic experiments to answer three key research questions:

\begin{enumerate}[leftmargin=25pt]
    \item[RQ1] \textit{Linear SCM:} How effective is RCD when the Oracle SCM $\scm$ has linear structural equations, so that an $\hat{\scm}$ fit using samples from the usual distribution also generalizes OOD? 
    \item[RQ2] \textit{Non-Lin Invertible SCM:} How effective is RCD when $\hat{\scm}$ closely approximates a non-linear $\scm$ within the usual regime, and $\scm$ allows abduction, meaning $f_i$ is invertible with respect to $\exogenous_i$? 
    \item[RQ3] \textit{Non-Lin Non-Invertible SCM:} What are the implications for root cause identification when $\hat{\scm}$ closely matches a non-linear $\scm$ in the usual distribution, but $\scm$ does not support abduction?
\end{enumerate}

We evaluate each option under unique and multiple root cause scenarios. For multiple root causes, we ensure they follow the assumption \ref{ass:ocpp}, and later perform ablations under its violations in Table \ref{tab:ocpp_violations}. Our synthetic setup involved a single anomalous test sample for RCD, so we did not run two baselines: 1) \tt{$\epsilon$-diagnosis} method, which requires multiple anomalous samples for conducting two-sample tests, and 2) \tt{HRCD} which learns the causal graph solely from anomalous samples.

\textbf{Generating the Oracle SCM $\scm$:} We randomly sample a causal graph $\graph$ using the Networkx library~\cite{networkx}. 
We select the node with the most ancestors as the anomaly target, and the root cause nodes $\alpha^\star$ arbitrarily. \blue{Since cloud KPIs, such as node latency and CPU utilization, are typically positive~\citep{traversal_5}, we ensured that all nodes in the synthetic data assume positive values.} Exogenous variables follow a uniform distribution $\exogenous_i \sim U[0,1]$ \blue{making their standard deviation $\text{std}(\exogenous_i) = 0.3$. We explore other choices for $\exogenous_i$ in Appendix~\ref{sec:app:high_var}.} Finally, We generate the training dataset $\normalD$ by sampling exogenous variables $\exogenousv$, and then use $\scm$ to  generate the observed nodes $\vx$ \blue{in a topological order}.  Each node in $\graph$ is assigned a local causal mechanism as follows:

\begin{itemize}[leftmargin=12pt]
    \item For linear SCMs in RQ1, we define the functional equations as linear, with random coefficients as:
     $x_i = w_i^\top \pa{i}$ where $w_i \sim U[0.5, 2]$. 
    \item For non-lin invertible SCMs in RQ2, we set each local mechanism $f_i$ as an additive noise three layer ELU activation based MLP. 
    \item For non-lin non-invertible SCMs, we use the same MLP architecture, but without additive noise. In this case, each MLP $f_i$ receives both the parents $\pa{i}$ and the noise $\exogenous_i$ as inputs.
\end{itemize}

For linear SCM, we fit a linear additive noise SCM $\scmhat$, whereas for both RQ2 and RQ3, we fit an additive noise MLP-based SCM $\scmhat$. We show some example graphs in Appendix \ref{app:synscm}.

\textbf{Test case generation:} 
For the root cause set $\alpha^\star$ to cause an anomaly at $x_n$, we first sample $\exogenousv_{-\alpha^\star}$ from $U[0,1]$, then apply a grid search over $\exogenousv_{\alpha^\star}$ so that $(\exogenousv_{\alpha^\star}, \exogenousv_{-\alpha^\star})$ together lead the SCM $\scm$ to induce an anomaly at $x_n$. 
We also introduced some irrelevant anomalies at nodes that have weak functional relationships with $x_n$ to assess the impact on methods that ignore the fix condition. We repeat each experiment $10$ times and report the average values of Recall@$k\in\set{1, 3}$.

\begin{table}[htbp]
\centering
\resizebox{\textwidth}{!}{%
\begin{tabular}{l|rr|rr|rr|rr|rr|rr}
\hline
\multirow{2}{*}{} & \multicolumn{4}{c|}{RQ1 Linear} & \multicolumn{4}{c|}{RQ2 Non-Lin Invertible} & \multicolumn{4}{c}{RQ3 Non-Lin Non-Invertible} \\ \cline{1-13} 
Number root causes & \multicolumn{2}{c|}{Unique} & \multicolumn{2}{c|}{Multiple} & \multicolumn{2}{c|}{Unique} & \multicolumn{2}{c|}{Multiple} & \multicolumn{2}{c|}{Unique} & \multicolumn{2}{c}{Multiple} \\ \cline{1-13} 
Recall@ & $k=1$       & $k=3$       & $k=1$       & $k=3$       & $k=1$       & $k=3$       & $k=1$       & $k=3$       & $k=1$       & $k=3$       & $k=1$       & $k=3$       \\ \hline \hline
Random Walk      & 0.00  & 0.00  & 0.03  & 0.03  & 0.10  & 0.30  & 0.27  & 0.27  & 0.10  & 0.30  & 0.27  & 0.27  \\ 
Ranked Correlation & 0.00  & 0.00  & 0.00  & 0.00  & 0.00  & 0.00  & 0.00  & 0.00  & 0.00  & 0.00  & 0.00  & 0.00  \\ \hline
Traversal         & 0.50  &\first{1.00} & 0.40  & 0.67  & 0.00  & \first{1.00}  & 0.03  & 0.48  & 0.00  & \first{0.80}  & 0.00  & 0.33  \\ 
Smooth Traversal & 0.50  & \first{1.00}  & 0.52  & \first{0.90}  & 0.50  & \first{1.00}  & \first{0.57}  & \first{0.83}  & \second{0.40}  & \first{0.80}  & \second{0.40}  & \second{0.73}  \\ 
CIRCA            & \second{0.80}  & \first{1.00}  & 0.33  & 0.33  & \second{0.60}  & \second{0.80}  & 0.38  & 0.48  & \second{0.40}  & \second{0.70}  & 0.03  & 0.03  \\ \hline
TOCA             & 0.00  & 0.00  & 0.00  & 0.00  & 0.00  & 0.00  & 0.00  & 0.00  & 0.00  & 0.00  & 0.00  & 0.00  \\ 
CF Attribution & \first{1.00} & \first{1.00} & \second{0.97} & \first{1.00} & 0.30 & 0.70 & 0.20 & 0.40 & 0.00 & 0.20 & 0.00 & 0.23 \\  
\our~(ours) & \first{1.00}  & \first{1.00}  & \first{1.00}  & \first{1.00}  & \first{1.00}  & \first{1.00}  & \first{0.83}  & \first{0.97}  & \first{0.60}  & \first{0.80}  & \first{0.63}  & \first{0.83}  \\ \hline
\end{tabular}
}
\caption{\small{Recall@k=1 and 3 values for experiments on synthetic SCMs. We present the recall values averaged across ten runs. \hlc[green!20]{Best} methods are highlighted in green, while the \hlc[yellow!20]{second best} methods are shown in yellow. Overall, \our\ demonstrates the highest recall across all settings.
\label{tab:syn_results}
}} 
\vspace{-0.2cm}
\end{table}

\textbf{Results:} Table \ref{tab:syn_results} summarizes our findings. \corrmethod\ methods struggle across all settings as root cause nodes correlate all descendants with the target $X_n$. In linear SCMs, many methods detect the unique root cause. However, with multiple root causes, the number of anomalous downstream nodes increases significantly, leading to more false positives. Among \causalanomaly\ methods, \tt{smooth traversal} performs best. \tt{CF Attribution} excels \textit{only} in linear SCMs due to accurate abduction. \tt{TOCA} evaluates all nodes indiscriminately, causing numerous OOD SCM evaluations, while \our\ efficiently focuses on $\candR$. Overall, \our\ achieves the best performance.

\subsection{Experiment on \our\ vs. \our~(using CFs)}

{\renewcommand{\arraystretch}{1}%
\begin{table*}[!h]
    \centering
    \setlength\tabcolsep{2pt}
    \resizebox{0.8\textwidth}{!}{
    \centering
    \begin{tabular}{l|rr|rr|rr|r|r|r}
    \hline
    \multirow{2}{*}{} & \multicolumn{6}{c|}{Synthetic Oracle SCM} & \multicolumn{3}{c}{PetShop Latency} \\ \cline{2-10} 
                      & \multicolumn{2}{c|}{Linear} & \multicolumn{2}{c|}{Non-Lin Inv} & \multicolumn{2}{c|}{Non-Lin Non Inv} & \multirow{2}{*}{low} & \multirow{2}{*}{high} & \multirow{2}{*}{temporal} \\ \cline{2-7}
                      & Unique   & Multiple    & Unique    & Multiple    & Unique   & Multiple   &       &        &           \\ \hline \hline
    CF Attribution    & \first{1.00} & 0.97 & 0.30 & 0.20 & 0.00 & 0.00 & 0.40 & 0.40 & 0.00 \\ 
    \our~(CF)         & \first{1.00} & \first{1.00} & \first{1.00} & \second{0.77} & \second{0.60} & \second{0.12} & \second{0.70} & \second{0.70} & \second{1.00} \\ 
    \our              & \first{1.00} & \first{1.00} & \first{1.00} & \first{0.83} & \first{0.70} & \first{0.63} & \first{0.90} & \first{0.90} & \first{1.00} \\ \hline
    \end{tabular}
    }
    \caption{\small{Recall@1 for CF Attribution - as baseline method~\citep{shapley}, \our~(CF) a version of \our\ that uses CF in step-2, and \our. \our\ performs the best. \label{tab:idicf}}}
    \vspace{-0.2cm}
\end{table*}
}
In this experiment, we run \our\ in counterfactual (CF) mode, denoted as \our~(CF). \our~(CF) first filters root cause candidates $\candR$ and then applies Shapley analysis on estimated counterfactuals instead of interventions. As a baseline, we compare a CF attribution method that skips $\candR$ filtering and applies CF Shapley analysis to all nodes. Table~\ref{tab:idicf} shows that CF attribution underperforms \our~(CF), confirming that filtering $\candR$ isolates promising candidates for in-distribution SCM evaluations. While \our~(CF) excels in linear settings, its performance drops elsewhere compared to \our, solely due to errors in abduction, highlighting interventions as the more effective approach.

\section{Limitations and Conclusion}
\blue{
\textbf{Limitations: }(1) \our's performance can degrade when Assumption~\ref{ass:ocpp} is violated. (2) In additive noise models, when the training size is large enough for the estimated SCM to approach the oracle SCM, errors in using interventional estimates of counterfactuals plateau at the std. deviation of the exogenous variables (Appendix~\ref{sec:app:high_var}). Whereas, counterfactual estimates converge to the true values.}

\textbf{Conclusion: }In this paper, we introduced two key conditions—Anomaly and Fix—to identify root causes of anomalies. While prior methods effectively addressed the anomaly condition, the fix condition often relied on probing trained models with OOD inputs. To address this, we proposed \our, a novel in-distribution intervention method that ensures the fitted SCM is probed using in-distribution inputs while evaluating the potential of true root cause nodes to resolve the anomaly. Unlike previous methods that required a unique root cause assumption, \our\ operates under the more relaxed condition of at most one cause per path. We showed theoretically that \our's intervention method is superior to counterfactual approaches for additive noise chain SCMs. Our experiments with arbitrary SCMs reaffirmed \our's capability to deliver robust and accurate RCD.



 \section{Acknowledgements}
We thank Avishek Ghosh and Abhishek Pathapati for their help in reviewing the proofs. We also thank Srinivasan Iyengar and Shivkumar Kalyanaraman from Microsoft for providing motivating real-world examples and feedback on our solutions.

\bibliography{iclr2025_conference}
\bibliographystyle{iclr2025_conference}
\clearpage
\appendix
\section{Preliminaries on Structural Causal Models}
\label{sec:preliminary}
\xhdr{Notation} Let $\nodes = \set{X_1, X_2, \ldots, X_n}$ represent the $n$ random variables denoting the Key Performance Indicators (KPIs) for each node in a system. We use $\vx = (x_1, \ldots, x_n)$ to denote their realizations.  These nodes are interconnected via a topology defined by a graph $\graph = (\nodes, \edges)$ that is assumed to be directed and acyclic. An edge $(X_i, X_j) \in \edges$ indicates that changes to $X_i$ affect the values $X_j$ can take, but not vice versa. We denote the parents of node $i$ in the graph $\graph$ as $\Pa{i}$, and the corresponding values assigned to these parents in $\vx$ as $\pa{i}$. 
We assume that each observed instance $x_i$ is generated from its causal parents and a node specific latent exogenous variable $\exogenous_i$ via a 
structural causal model (SCM) as outlined below.

\xhdr{Structural Causal Models (SCM)} An SCM ~\citep{Pearl2019} is a four-tuple $\scm = (\nodes, \exogenousv, \structeqns, P_{\exogenousv})$, where $P_{\exogenousv}$ represents the distribution from which the exogenous variables are sampled. 
The observed variables $X_i \in \nodes$ are determined by a set of structural equations $\structeqns = \set{f_1, \ldots, f_n}$, where 
$f_i: \Pa{i} \times \exogenous_i \mapsto X_i$. This implies that, conditioned on its immediate parents, no other variable can exert a causal influence on $X_i$—a principle known as \textit{modularity} in causal inference. In summary, an SCM models a data-generating process where an observed instance $\vx$ is produced by sampling the latent exogenous variables $\exogenousv \sim P_{\exogenousv}$ first, and then \blue{subsequently computing the node values by applying the structural equations in a topological order defined over the causal graph $\graph$.}

\xhdr{Counterfactuals}
Given an observed instance $\vx = (x_1, \ldots, x_n)$, we may want to generate a hypothetical instance $\xcf{j}$ representing what $\vx$ would have been if $X_j$ had taken the value $x_j'$ instead of $x_j$. This hypothetical instance, called a counterfactual, is computed using an SCM $\scm$ in three steps:

\begin{enumerate}[leftmargin=12pt]
    \item \textit{Abduction}: For each $i$, estimate $\hat{\exogenous_i}$ by inverting the function $f_i$ so that $x_i = f_i(\pa{i}, \exogenous_i)$ holds in the SCM $ \scm $. 
    \item \textit{Action}: Set $\xscf{j}_i = x_i$ for any $X_i$ that is not a descendant of $X_j$ in $ \graph $. For $X_j$, set $\xscf{j}_j = x_j'$.
    \item \textit{Prediction}: For each descendant $X_i$ of $X_j$, in topological order, set $\xscf{j}_i = f_i(\text{pa}_{x_i}^{\text{CF(j)}}, \hat{\exogenous_i})$, using $\hat{\exogenous_i}$ obtained during the abduction step. 
\end{enumerate}

\xhdr{Interventions}
An intervention $\xint{j}$ represents the distributional effect of setting $X_j$ to $x_j'$ on its descendants in the causal graph $\graph$. They are sampled as as follows:

\begin{enumerate}[leftmargin=12pt]
    \item Sample $\tilde{\exogenous}_i$ from its marginal distribution $P_{\exogenous_i}$.
    \item Steps 2 and 3 are the same as for counterfactuals, except that interventions use the sampled value $\tilde{\exogenous}_i$ in place of the abducted $\hat{\exogenous}_i$ during prediction; i.e., $\xint{j}_i = f_i(\text{pa}_{x_i}^{\text{int(j)}}, \tilde{\exogenous}_i)$.
\end{enumerate}

Note that the abduction step in counterfactuals necessitates each $f_i$ to be  invertible with respect to $\exogenous_i$,  whereas interventions do not need this requirement. Further, while $\xcf{j}$ is a point estimate, $\xint{j}$ is a random variable due to the randomness in $\tilde{\exogenous}_{i}$. 


\section{\our\ Algorithm}
We provide the pseudocode for \our\ in Alg.~\ref{alg:our_algo}.

\begin{algorithm}
\label{alg:our_algo}
\begin{algorithmic}[1]
    \small
    \caption{\our\ Algorithm}
    \label{alg:our_algo}
    \REQUIRE Data $\normalD$, DAG $\graph$, anomalous instance $\vx$, anomaly config $\mathcal{A}$
    \STATE \textbf{Ensure} $\predR$ \CommentBlue{ predicted root causes}
    \STATE $\set{\scorefn_i, g_i} \gets \texttt{TrainAnomalyFns}(\normalD, \graph, \mathcal{A})$ \CommentBlue{$\mathcal{A}$ is part of problem spec; default $g_i$ is Z-Score} 
    \STATE $\candR \gets \set{i: \scorefn_i(x_i) = 1, \text{ and } \scorefn_p(x_p) = 0 ~~ \forall ~ p \in \Pa{i}}$ \CommentBlue{anomaly condition}
    \STATE $\scmhat \gets \textrm{FitSCM}(\normalD, \graph)$ \CommentBlue{Fits structural equation $\hat{f}_i$ for each $x_i$ on input $\pa{i}$ using $L_2$ loss}
    \FOR{$\setint \in 2^{\candR}$} 
                    \STATE $\setint_t \gets \textrm{TopologicalSort}(\setint, \graph)$ \CommentBlue{To apply the fixes in topological order}
                    \STATE Set $\fixnode{j} \sim P(X_j | \pa{j})\forall j \in \alpha_t$
                    \CommentBlue{Follow order $\alpha_t$}
                    \STATE $\xinthat{\setint} \gets \textrm{Intervene}(\scmhat, \vx, \text{fix})$ \CommentBlue{Intervene on $\setint$ using the SCM $\scmhat$}
        \STATE $\phi[\setint] \gets g_n(x_n) - \EE[g_n(\xsinthat{\setint}_n)]$ \CommentBlue{Shapley utility for the subset $\alpha$}
    \ENDFOR
    \STATE $\predR \gets \textrm{ShapleySort}(\phi, ~\text{desc})$ \CommentBlue{Compute Shapley values and Sort $\predR$ descending}
    \STATE \textbf{return} $\predR$
\end{algorithmic} 
\end{algorithm}

\section{Proofs}
\label{app:sec:proofs}
\begin{definition}[\citep{dabook}]
For a loss function $\ell$ and hypothesis class $\hyp$, we define the discrepancy distance between two distributions $P, Q$ as follows:
$$
\text{disc}_{\ell}^\hyp(P, Q) = \sup_{h, h' \in \hyp^2} \left| \mathbb{E}_{x \sim P}[\ell(h(x), h'(x))] - \mathbb{E}_{x \sim Q}[\ell(h(x), h'(x))] \right|
$$
\end{definition}

\begin{lemma}
    For bounded loss functions $\ell(\bullet, \bullet) \leq M$, where $M > 0$, we have:
    $$
    \text{disc}_{\ell}^\hyp(P, Q) \leq M \text{tvd}(P, Q)
    $$
\end{lemma}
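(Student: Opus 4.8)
The plan is to reduce the supremum over hypotheses to a single pointwise estimate: fix an arbitrary pair $h, h' \in \hyp$, abbreviate $g(x) = \ell(h(x), h'(x))$, and bound the scalar quantity $|\mathbb{E}_{x\sim P}[g(x)] - \mathbb{E}_{x\sim Q}[g(x)]|$ by $M\,\text{tvd}(P,Q)$. Since the bound I obtain will not depend on the particular $h, h'$, taking the supremum over $\hyp^2$ at the very end transfers it to $\text{disc}_\ell^\hyp(P,Q)$ and closes the proof. Throughout I read the hypothesis ``$\ell(\bullet,\bullet) \le M$'' as a nonnegative loss with $0 \le g(x) \le M$ pointwise, which is exactly what produces the constant $M$ rather than $2M$.

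First I would rewrite the difference of expectations as an integral against the signed density $P - Q$, namely $\mathbb{E}_P[g] - \mathbb{E}_Q[g] = \int_{\mathcal{X}} g(x)\,(P(x) - Q(x))\,dx$. The naive route — applying the triangle inequality directly to give $\int |g(x)|\,|P(x)-Q(x)|\,dx \le M \int |P(x)-Q(x)|\,dx = 2M\,\text{tvd}(P,Q)$ — loses a factor of two against the target, so the genuine content of the argument is recovering that factor. The key observation is that $P$ and $Q$ are both probability densities, hence $\int_{\mathcal{X}} (P(x)-Q(x))\,dx = 0$, which means I may subtract any constant $c$ from $g$ without changing the integral: $\int g(x)(P(x)-Q(x))\,dx = \int (g(x)-c)(P(x)-Q(x))\,dx$.

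Then I would center the loss at the midpoint of its range by choosing $c = M/2$. Because $0 \le g(x) \le M$, the centered integrand obeys $|g(x) - M/2| \le M/2$ for every $x$, so the triangle inequality now yields $|\int (g(x)-M/2)(P(x)-Q(x))\,dx| \le \tfrac{M}{2}\int |P(x)-Q(x)|\,dx = \tfrac{M}{2}\cdot 2\,\text{tvd}(P,Q) = M\,\text{tvd}(P,Q)$, where the last step is just the definition of $\text{tvd}$ with its factor of $\tfrac12$. Taking the supremum over $h, h' \in \hyp$ completes the argument.

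The single step I would flag as the crux is the centering: everything else is the triangle inequality plus the definition of total variation distance, but without subtracting $M/2$ one is stuck with the weaker constant $2M$. It is precisely the mean-zero property of $P - Q$, combined with the fact that a loss confined to $[0,M]$ deviates from its midpoint by at most $M/2$, that restores the sharp constant.
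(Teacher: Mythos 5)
Your proof is correct, and it is necessarily a different route from the paper's, because the paper does not prove this lemma at all: it defers entirely to Proposition~3.1 of its cited domain-adaptation reference. Your argument is the standard self-contained one, and its crux---the centering step---is exactly right: since $\int_{\mathcal{X}}(P(x)-Q(x))\,dx = 0$, you may replace $g$ by $g - M/2$ inside $\int g(x)(P(x)-Q(x))\,dx$ without changing its value, and the pointwise bound $|g(x)-M/2|\le M/2$ (valid because $0 \le g \le M$) then gives $\tfrac{M}{2}\int|P(x)-Q(x)|\,dx = M\,\text{tvd}(P,Q)$, recovering the stated constant where the naive triangle inequality yields only $2M\,\text{tvd}(P,Q)$. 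As for what each approach buys: the paper's citation is shorter but leaves the reader to verify that the referenced proposition matches the paper's conventions (definitions of total variation in that literature differ by a factor of $2$, so the constant is easy to get wrong), whereas your proof makes the constant and its provenance explicit and checkable. One caveat worth recording: your reading of ``$\ell(\bullet,\bullet)\le M$'' as $0 \le \ell \le M$ is doing real work---for a loss merely bounded in absolute value by $M$, centering at $0$ only yields $2M\,\text{tvd}(P,Q)$---but this nonnegativity reading is the standard one for losses and is consistent with how the paper invokes the lemma, where $\ell$ is the nonnegative $L_1$ loss $|h(x)-h'(x)|$ of bounded hypotheses. Taking the supremum over $h,h'\in\hyp^2$ at the end, as you do, is legitimate since your bound is uniform in $(h,h')$, so the proof is complete and sharp as stated.
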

See \citep{dabook} (Proposition 3.1) for the proof.

\textbf{Theorem~\ref{thm:main:cf}}
\textit{
    Suppose the true SCM $\scm$ is an additive noise model defined over a chain graph $\graph = X_1 \rightarrow \ldots \rightarrow X_n$ with structural equations of the form $f_i(x_{i-1}) + \exogenous_i$, where $\exogenous_i$ has bounded variance $\sigma^2$ and $f_i$ is $K$-Lipschitz. Let $\hyp = \{\hyp_i\}_{i=1}^n$ be the realizable hypothesis class for $\scm$, with each $\hyp_i$ containing bounded functions that are $K$-lipschitz. Let $\scmhat$ denote the SCM learned from training data $\normalD$. $\scmhat$ encompasses the estimated functions $\{\hat{f_i}\}_{i=1}^n$. For any $j$, let $\rcdist_X$ denote the distribution of samples with a unique root cause at $X_j$. Then, for 
    $\vx$ sampled from $\rcdist_X$  with fix $\fixnode{j} \sim \hat{\trnP_X}(X_j | X_{j-1} = x_{j-1})$ applied to the root cause $X_j$, the error in estimated counterfactual at the target node $X_n$ admits the following bound:
    \cfthm
}
\begin{proof}
    For the intervention $\fixnode{j}$ applied on $X_j$, let us denote the true counterfactual values obtained from the SCM $\scm$ as $\xcf{j}$, and the estimated counterfactual from the learned SCM $\scmhat$ as $\xcfhat{j}$. Then, at $j+1$, we have:
    
    \begin{align}
        \xscf{j}_{j+1} &= f_{j+1}(\fixnode{j}) + \exogenous_{j+1} \text{ where } \exogenous_{j+1} = x_{j+1} - f_{j+1}(x_{j}) \label{eqn:thm:truecfj+1}\\
        \xscfhat{j}_{j+1} &= \hat{f}_{j+1}(\fixnode{j}) + \hat{\exogenous}_{j+1} \text{ where } \hat{\exogenous}_{j+1} = x_{j+1} - \hat{f}_{j+1}(x_{j}) \label{eq:thm:estcfj+1}
    \end{align}

    Now, let us bound the error in the true CF $\xcf{j}$ and the estimated CF $\xcfhat{j}$ using $\scmhat$ 
    at index $j+1$. Taking difference of Eqs. \ref{eqn:thm:truecfj+1}, \ref{eq:thm:estcfj+1}
    
    \begin{align}
        |\xscf{j}_{j+1}  - \xscfhat{j}_{j+1}| &= |f_{j+1}(\fixnode{j}) - f_{j+1}(x_{j}) - \hat{f}_{j+1}(\fixnode{j}) + \hat{f}_{j+1}(x_{j})| \\ 
        & \leq |f_{j+1}(\fixnode{j}) - \hat{f}_{j+1}(\fixnode{j})| + |f_{j+1}(x_{j}) -  \hat{f}_{j+1}(x_{j})| \\
        \EE_{\vx \sim \rcdist_X}[|\xscf{j}_{j+1}  - \xscfhat{j}_{j+1}|] &\leq \underbrace{\EE_{\vx \sim \rcdist_X}\left[\EE_{\fixnode{j} \sim \hat{\trnP_{X_j}}(X_j|x_{j-1})}\left[|f_{j+1}(\fixnode{j}) - \hat{f}_{j+1}(\fixnode{j})|\right]\right]}_{\text{term 1}} \nonumber \\
        & \quad+ \underbrace{\EE_{\vx \sim \rcdist_X}[|f_{j+1}(x_{j}) -  \hat{f}_{j+1}(x_{j})|]}_{\text{term 2}} \label{eq:thm:cfmarker1}
    \end{align}
     Let us bound term 2 above. 
     
     Recall that $\scmhat$ is obtained by training from $\normalD$, $\hat{f}_{j+1}$ is learned using $N$ samples obtained i.i.d. from $\trnP_{X_j}$.
     Further, the loss is assessed only for $X_{j+1}$; we have $\EE_{\vx \sim \rcdist_X}[|f_{j+1}(x_{j}) -  \hat{f}_{j+1}(x_{j})|] = \EE_{x_j \sim \rcdist_{X_j}}[|f_{j+1}(x_{j}) -  \hat{f}_{j+1}(x_{j})|]$.
     
     Therefore, we have:
    \begin{align}
        & \EE_{x_j \sim \rcdist_{X_j}}[|f_{j+1}(x_j) - \hat{f}_{j+1}(x_j)|] \\ \nonumber 
        &= \EE_{x_j \sim \rcdist_{X_j}}[|f_{j+1}(x_j) - \hat{f}_{j+1}(x_j)|] + \EE_{x_j \sim \trnP_{X_j}}[|f_{j+1}(x_j) - \hat{f}_{j+1}(x_j)|] \\ \nonumber
        & \quad - \EE_{x_j \sim \trnP_{X_j}}[|f_{j+1}(x_j) - \hat{f}_{j+1}(x_j)|] \\
        &\leq \EE_{x_j \sim \trnP_{X_j}}[|f_{j+1}(x_j) - \hat{f}_{j+1}(x_j)|]  + \int_{x_j} |f_{j+1}(x_j) - \hat{f}_{j+1}(x_j)| \cdot |\trnP_{X_j}(x_j) - \rcdist_{X_j}(x_j)| dx_j \\
        &\leq \EE_{x_j \sim \trnP_{X_j}}[|f_{j+1}(x_j) - \hat{f}_{j+1}(x_j)|]  \\ \nonumber
        & \quad + \sup_{h, h' \in \hyp_{j+1}^2} \int_{x_j} |h(x_j) - h'(x_j)| \cdot |\trnP_{X_j}(x_j) - \rcdist_{X_j}(x_j)| dx_j \\
        &= \EE_{x_j \sim \trnP_{X_j}}[|f_{j+1}(x_j) - \hat{f}_{j+1}(x_j)|] + \text{disc}_{\ell_1}^{\hyp_{j+1}}(\trnP_{X_j}(x_j), \rcdist_{X_j}(x_j))  
    \end{align}

    The last inequality is valid as long as $f_{j+1} \in \hyp_{j+1}$ which is true because $\hyp_{j+1}$ is realizable. Further since $\hyp_{j+1}$ encompasses bounded functions, we can bound the $L_1$ loss $\ell_1$ using a constant $M > 0$. Therefore,
    \begin{align}
        \EE_{\vx \sim \rcdist_{X_j}}[|f_{j+1}(x_{j}) -  \hat{f}_{j+1}(x_{j})|] \leq \underbrace{\EE_{x_j \sim \trnP_{X_j}}[|f_{j+1}(x_{j}) -  \hat{f}_{j+1}(x_{j})|]}_{\text{term 1}}  + \underbrace{M \text{tvd}(\trnP_{X_j}(x_j), \rcdist_{X_j}(x_j))}_{\text{term 2}} \label{eq:thm:cfmarker2}
    \end{align}
    


    where term 1 represents a classical divergence between empirical risk and true risk, and can be bounded using classical VC, Rademacher complexity based generalization bounds. Term 2 is more interesting as it captures the divergence between the standard training distribution and the root cause distribution that governs when an anomaly occurs.

    Combining Eqs. \ref{eq:thm:cfmarker1}, \ref{eq:thm:cfmarker2} for the error at $X_{j+1}$, we get
    \begin{align}
        \EE_{\vx \sim \rcdist_X}[|\xscf{j}_{j+1}  - \xscfhat{j}_{j+1}|] 
        &\leq \underbrace{\EE_{\vx \sim \rcdist_X}\left[\EE_{\fixnode{j} \sim \hat{\trnP_{X_j}}(X_j|x_{j-1})}\left[|f_{j+1}(\fixnode{j}) - \hat{f}_{j+1}(\fixnode{j})|\right]\right]}_{\text{term 1}}\label{eq:thm:last_step}    \\ 
        & \quad + \underbrace{\EE_{x_j \sim \trnP_{X_j}}[|f_{j+1}(x_{j}) -  \hat{f}_{j+1}(x_{j})|]  + M \cdot \text{tvd}(\trnP_{X_j}, \rcdist_{X_j})}_{\text{term 2}} \nonumber
    \end{align}

    Now, we reduce the first term above. For a fix $\fixnode{j} \sim \hat{\trnP_{X_j}}(X_j | x_{j-1})$, the following holds:

    \begin{align}
        & \EE_{\vx \sim \rcdist_X}\left[\EE_{\fixnode{j} \sim \hat{\trnP_{X_j}}(X_j|x_{j-1})}\left[|f_{j+1}(\fixnode{j}) - \hat{f}_{j+1}(\fixnode{j})|\right]\right] \\ \nonumber
        & \quad = \EE_{\vx \sim \rcdist_X}\left[\EE_{\fixnode{j} \sim \hat{\trnP_{X_j}}(X_j|x_{j-1})}\left[|f_{j+1}(\fixnode{j}) - \hat{f}_{j+1}(\fixnode{j})|\right]\right]  \\ \nonumber
        & \quad + \EE_{\vx \sim \rcdist_X}\left[\EE_{\fixnode{j} \sim \trnP_{X_j}(X_j|x_{j-1})}\left[|f_{j+1}(\fixnode{j}) - \hat{f}_{j+1}(\fixnode{j})|\right]\right]  \\ \nonumber
        & \quad-  \EE_{\vx \sim \rcdist_X}\left[\EE_{\fixnode{j} \sim \trnP_{X_j}(X_j|x_{j-1})}\left[|f_{j+1}(\fixnode{j}) - \hat{f}_{j+1}(\fixnode{j})|\right]\right] \\
        & \quad \leq \EE_{x_j \sim \trnP_{X_j}}\left[|f_{j+1}(x_j) - \hat{f}_{j+1}(x_j)|\right] + M \cdot \text{tvd}(\trnP_{X_j}, \hat{\trnP_{X_j}}) 
        \label{eq:thm:fixnormal}
    \end{align}

    This follows from the definition of $\rcdist$ \ref{def:rcdist}. First observe that for the indices till $\{1, \ldots, j-1\}$, both root cause distribution $\rcdist_X$ and the training distribution $\trnP_X$ agree; i.e., $\rcdist_{X}(X_1, \ldots, X_{j-1}) = \trnP_X(X_1, \ldots, X_{j-1})$. The only index that changes is $j$ at which the root cause occurs. But since, while applying a fix, we sample it from $\trnP_X(X_j | x_{j-1})$, the marginal distribution of the fix simply reduces to $\trnP_{X_j}$.

    Finally, combining the inequalities in \ref{eq:thm:last_step}, \ref{eq:thm:fixnormal}, we get:

    \begin{mdframed}
        \begin{align}
         \EE_{x_j \sim \rcdist_X}[|\xscf{j}_{j+1}  - \xscfhat{j}_{j+1}|] &\leq 2 \EE_{x_j \sim \trnP_{X_j}}[|f_{j+1}(x_{j}) -  \hat{f}_{j+1}(x_{j})|]   \\ \nonumber
         & \quad +  M \cdot \left[\text{tvd}(\trnP_{X_j}, \rcdist_{X_j}) + \text{tvd}(\trnP_{X_j}, \hat{\trnP_{X_j}}) \right]
        \end{align}
    \end{mdframed}

    Now, let us carry forward these arguments to $j+2$.
    \begin{align}
        \xscf{j}_{j+2} &= f_{j+2}(\xscf{j}_{j+1}) + \exogenous_{j+2} \text{ where } \exogenous_{j+2} = x_{j+2} - f_{j+2}(x_{j+1}) \\
        \xscfhat{j}_{j+2} &= \hat{f}_{j+2}(\xscfhat{j}_{j+1}) + \hat{\exogenous}_{j+2} \text{ where } \hat{\exogenous}_{j+2} = x_{j+2} - \hat{f}_{j+2}(x_{j+1}) \\ 
    \end{align}

    Now for the estimated counterfactual at $X_{j+2}$, we have the error
    \begin{align}
        |\xscf{j}_{j+2}  - \xscfhat{j}_{j+2}| &= |f_{j+2}(\xscf{j}_{j+1}) - f_{j+2}(x_{j+1}) - \hat{f}_{j+2}(\xscfhat{j}_{j+1}) + \hat{f}_{j+2}(x_{j+1})| \\
        & \leq \underbrace{|f_{j+2}(\xscf{j}_{j+1}) - \hat{f}_{j+2}(\xscfhat{j}_{j+1})|}_{\text{term 1}} + \underbrace{|f_{j+2}(x_{j+1}) - \hat{f}_{j+2}(x_{j+1})|}_{\text{term 2}}
    \end{align}
    
    The term 2 above admits the same proof technique as we derived for $X_{j+1}$.
    \begin{align}
        \EE_{\vx \sim \rcdist_X}[|f_{j+2}(x_{j+1}) - \hat{f}_{j+2}(x_{j+1})|] &\leq \EE_{x_{j+1} \sim \trnP_{X_{j+1}}}[|f_{j+2}(x_{j+1}) -  \hat{f}_{j+2}(x_{j+1})|] \\ \nonumber
        & \quad + M \cdot \text{tvd}(\trnP_{X_{j+1}}, \rcdist_{X_{j+1}})
    \end{align}
    
    Now, let us analyze the first term. 
    
    Since $\hat{f}_{j+2}$ is $K$-Lipschitz, we have $|\hat{f}_{j+2}(\xscfhat{j}_{j+1}) - \hat{f}_{j+2}(\xscf{j}_{j+1})| \leq K |\xscfhat{j}_{j+1} - \xscf{j}_{j+1}|$. Therefore, 
    
    \begin{align}
        & \EE_{\vx \sim \rcdist_X}[|f_{j+2}(\xscf{j}_{j+1}) - \hat{f}_{j+2}(\xscfhat{j}_{j+1})|] \\ \nonumber
        &= \EE_{\vx \sim \rcdist_X}[|f_{j+2}(\xscf{j}_{j+1}) - \hat{f}_{j+2}(\xscf{j}_{j+1}) + \hat{f}_{j+2}(\xscf{j}_{j+1}) - \hat{f}_{j+2}(\xscfhat{j}_{j+1})|] \\ 
        & \leq \EE_{\vx \sim \rcdist_X}[|f_{j+2}(\xscf{j}_{j+1}) - \hat{f}_{j+2}(\xscf{j}_{j+1})|]  + \EE_{\vx \sim \rcdist_X}[|\hat{f}_{j+2}(\xscf{j}_{j+1}) - \hat{f}_{j+2}(\xscfhat{j}_{j+1})|]  \\
        & \leq \EE_{\vx \sim \rcdist_X}[|f_{j+2}(\xscf{j}_{j+1}) - \hat{f}_{j+2}(\xscf{j}_{j+1})|] + K \EE_{\vx \sim \rcdist_X}[|\xscf{j}_{j+1} - \xscfhat{j}_{j+1}|]
    \end{align}

    Using the same arguments as used in Eq. \ref{eq:thm:fixnormal}, it can be shown that:
    \begin{align}
        \EE_{\vx \sim \rcdist_X}[|f_{j+2}(\xscf{j}_{j+1}) - \hat{f}_{j+2}(\xscf{j}_{j+1})|] &\leq \EE_{x_{j+1} \sim \trnP_{X_{j+1}}}[|f_{j+2}(x_{j+1}) - \hat{f}_{j+2}(x_{j+1})|]  \\ \nonumber
        & \quad + M \cdot \text{tvd}(\trnP_{X_{j+1}}, \hat{\trnP_{X_{j+1}}}) 
    \end{align}
    
    Finally, we have:
    \begin{mdframed}
        \begin{align}
            \EE_{\vx \sim \rcdist_X}[|\xscf{j}_{j+2}  - \xscfhat{j}_{j+2}|] & \leq 2 \EE_{x_{j+1} \sim \trnP_{X_{j+1}}}[|f_{j+2}(x_{j+1}) -  \hat{f}_{j+2}(x_{j+1})|] \\ \nonumber
            &\quad + K \EE_{\vx \sim \rcdist_X}[|\xscf{j}_{j+1} - \xscfhat{j}_{j+1}|] \\ \nonumber
            & \quad + M \cdot \left[ \text{tvd}(\trnP_{X_{j+1}}, \rcdist_{X_{j+1}}) + \text{tvd}(\trnP_{X_{j+1}}, \hat{\trnP_{X_{j+1}}}) \right ]
        \end{align}
    \end{mdframed}

    Now, we can extend this result to assess the error at $X_n$ as follows:

    \cfthm

    The above inequality holds because from definition of $\rcdist_X$ \ref{def:rcdist}, we have $\EE_{\vx \sim \rcdist_X}[|\xscf{j}_i  - \xscfhat{j}_i|] = 0$ for any $i < j$. Further at $X_j$, because of performing an intervention we have $\xscf{j}_j = \xscfhat{j}_j = \fixnode{j}$. 
    
\end{proof} 

\textbf{Remark:}  
Several prior works ~\citep{traversal_1, traversal_2, traversal_3} defined root cause as a node that is anomalous and is connected to target node $X_n$ through a chain of anomalous nodes. i.e., they expect $\scorefn_i(x_i) = 1$ for all $i \geq j$, and in such a case, we see that  $\text{tvd}\left(\trnP_{X_{i}}, \rcdist_{X_{i}}\right)$ is large for any $i > j$. Nonetheless, the leading term in the above geometric progression has $\text{tvd}\left(\trnP_{X_j}, \rcdist_{X_k}\right)$ for which we know $\scorefn_j(x_j) = 1$ by root cause definition, and therefore, in practice $\xcfhat{j}$ is a poor estimate for $\xcf{j}$. All the other terms can be bounded using classical generalization bounds, and they go down with the size of training data $|\normalD|$.

\textbf{Theorem~\ref{thm:main:int}}
\textit{
    Under the same conditions laid out in Theorem \ref{thm:main:cf}, the error between the true counterfactual $\xcf{j}$ and the estimated intervention $\xinthat{j}$ admits the following bound:
    \intthm
    where $\text{bias}(\hat{f}_i) = \EE_{x_i \sim \trnP_{X_i}}[f_i(x_i) - \hat{f}_i(x_i)]$. For an unbiased learner, bias is zero.
}

\begin{proof}
    For the intervention $\fixnode{j}$ applied on $X_j$, let us denote the true counterfactual values obtained from the SCM $\scm$ as $\xcf{j}$, and the estimated intervention from the learned SCM $\scmhat$ as $\xinthat{j}$. Then, at $j+1$, we have:
    
    \begin{align}
        \xscf{j}_{j+1} &= f_{j+1}(\fixnode{j}) + \exogenous_{j+1} \text{ where } \exogenous_{j+1} = x_{j+1} - f_{j+1}(x_{j}) \label{eqn:thm:trueintj+1}\\
        \xsinthat{j}_{j+1} &= \hat{f}_{j+1}(\fixnode{j}) + \tilde{\exogenous}_{j+1} \text{ where } \tilde{\exogenous}_{j+1} \sim \hat{\trnP_{\exogenous_{j+1}}} \label{eq:thm:int_eqns}
    \end{align}
    where, $\hat{\trnP_{\exogenous_{j+1}}}$ represents the empirical distribution of the marginal $\trnP_{\exogenous_{j+1}}$, obtained from a validation dataset $D_V \subset \normalD$. For additive noise models, this is simply the empirical distribution of the error residuals, $x_{j+1} - \hat{f}_{j+1}(x_j)$ for $\vx \in D_V$. 

    Now, let us bound the error in the true CF $\xcf{j}$ and the estimated intervention $\xinthat{j}$ using $\scmhat$ at index $j+1$. Taking difference of above Eqs. 
    
    \begin{align}
        \EE_{\vx \sim \rcdist_X}[|\xscf{j}_{j+1}  - \xsinthat{j}_{j+1}|] &=  \underbrace{\EE_{\vx \sim \rcdist_X}\left[\EE_{\fixnode{j} \sim \hat{\trnP_{X_j}}(X_j|x_{j-1})}\left[|f_{j+1}(\fixnode{j}) - \hat{f}_{j+1}(\fixnode{j})|\right]\right]}_{\text{term 1}} \\ \nonumber
        & \quad +  \underbrace{\mathbb{E}_{\vx \sim \rcdist_X} \mathbb{E}_{\tilde{\exogenous}_{j+1} \sim \hat{\mathcal{P}}_{\exogenous_{j+1}}} \left[ |\exogenous_{j+1} - \tilde{\exogenous}_{j+1}| \right]}_{\text{term 2}}
    \end{align}
    We use Eq. \ref{eq:thm:fixnormal} to reduce term 1 to $\EE_{x_j \sim \trnP_{X_j}}\left[|f_{j+1}(x_j) - \hat{f}_{j+1}(x_j)|\right] + M\cdot \text{tvd}(\trnP_{X_j}, \hat{\trnP_{X_j}})$.
    
    Now, let us analyze term 2. Since $X_j$ is the unique root cause, we have $\rcdist_{\exogenous_{j+1}} = \trnP_{\exogenous_{j+1}}$. Therefore,
    \begin{align}
        \EE_{\exogenous_{j+1} \sim \rcdist_{\exogenous_{j+1}}}\mathbb{E}_{\tilde{\exogenous}_{j+1} \sim \hat{\mathcal{P}}_{\exogenous_{j+1}}}[|\exogenous_{j+1} - \EE[\tilde{\exogenous}_{j+1}]|] &= \EE_{\exogenous_{j+1} \sim \trnP_{\exogenous_{j+1}}}\mathbb{E}_{\tilde{\exogenous}_{j+1} \sim \hat{\mathcal{P}}_{\exogenous_{j+1}}}[|\exogenous_{j+1} - \EE[\tilde{\exogenous}_{j+1}]|]
    \end{align}
    Now,
    \begin{align}
        |\exogenous_{j+1} - \tilde{\exogenous}_{j+1}| &= |\exogenous_{j+1} - \EE[\exogenous_{j+1}] + \EE[\exogenous_{j+1}] - \EE[\tilde{\exogenous}_{j+1}] + \EE[\tilde{\exogenous}_{j+1}] - \tilde{\exogenous}_{j+1}| \\
        & \leq |\exogenous_{j+1} - \EE[\exogenous_{j+1}]| + |\EE[\exogenous_{j+1}] - \EE[\tilde{\exogenous}_{j+1}]| + |\EE[\tilde{\exogenous}_{j+1}] - \tilde{\exogenous}_{j+1}|
    \end{align}
    using $|a+b| \leq |a| + |b|$. Note that $|\EE[\exogenous_{j+1}] - \EE[\tilde{\exogenous}_{j+1}]|$ is a constant. 

    Taking expectation on both sides, we get:
    \begin{align}
        & \EE_{\exogenous_{j+1} \sim \trnP_{\exogenous_{j+1}}} \mathbb{E}_{\tilde{\exogenous}_{j+1} \sim \hat{\mathcal{P}}_{\exogenous_{j+1}}} \left[ |\exogenous_{j+1} - \mathbb{E}[\tilde{\exogenous}_{j+1}]| \right] \\ \nonumber
        & \quad \leq \mathbb{E}_{\exogenous_{j+1} \sim \trnP_{\exogenous_{j+1}}} \left[ |\exogenous_{j+1} - \mathbb{E}[\exogenous_{j+1}]| \right] 
        + \mathbb{E}_{\tilde{\exogenous}_{j+1} \sim \hat{\mathcal{P}}_{\exogenous_{j+1}}} \left[ |\mathbb{E}[\tilde{\exogenous}_{j+1}] - \tilde{\exogenous}_{j+1}| \right] 
        + |\mathbb{E}[\exogenous_{j+1}] - \mathbb{E}[\tilde{\exogenous}_{j+1}]| \\ 
        & \quad \leq \sqrt{ \EE_{\exogenous_{j+1} \sim \trnP_{\exogenous_{j+1}}} \left[(\exogenous_{j+1} - \mathbb{E}[\exogenous_{j+1}])^2 \right]} 
        + \sqrt{\mathbb{E}_{\tilde{\exogenous}_{j+1} \sim \hat{\mathcal{P}}_{\exogenous_{j+1}}} \left[ (\mathbb{E}[\tilde{\exogenous}_{j+1}] - \tilde{\exogenous}_{j+1})^2 \right]} 
        + |\mathbb{E}[\exogenous_{j+1}] - \mathbb{E}[\tilde{\exogenous}_{j+1}]| \\ 
        & \quad = \text{std}(\exogenous_{j+1}) + \text{\text{std}}(\tilde{\exogenous}_{j+1}) + |\mathbb{E}[\exogenous_{j+1}] - \mathbb{E}[\tilde{\exogenous}_{j+1}]| \label{eq:thm:int_stddev}
    \end{align}
    where $\text{std}(\exogenous_{j+1})$ is the standard deviation of the latent noise $\exogenous_{j+1}$.
    
    \textbf{Remark}: It is common in causal literature to assume that $\exogenous_i$ are zero-mean in addition to having bounded variance. When zero mean assumption holds, we can use $\tilde{\exogenous}_{j+1} = 0$, and in that case,  we will have $\mathbb{E}_{\exogenous \sim \rcdist_{\exogenous_{j+1}}} \mathbb{E}_{\tilde{\exogenous}_{j+1} \sim \hat{\mathcal{P}}_{\exogenous_{j+1}}} \left[ |\exogenous_{j+1} - \mathbb{E}[\tilde{\exogenous}_{j+1}]| \right] \leq \sigma(\exogenous_{j+1})$.
    
    Finally, for the estimated intervention at $X_j$, we have:
    \begin{mdframed}
    \begin{align}
        \EE_{\vx \sim \rcdist_X}[|\xscf{j}_{j+1}  - \xsinthat{j}_{j+1}|] &\leq \EE_{x_j \sim \trnP_{X_j}}\left[|f_{j+1}(x_j) - \hat{f}_{j+1}(x_j)|\right] \\ \nonumber
        & \quad + M\cdot \text{tvd}(\trnP_{X_j}, \hat{\trnP_{X_j}}) + \text{std}(\exogenous_{j+1}) + \text{\text{std}}(\tilde{\exogenous}_{j+1}) + |\mathbb{E}[\exogenous_{j+1}] - \mathbb{E}[\tilde{\exogenous}_{j+1}]|
    \end{align}
    \end{mdframed}

    For the estimated intervention at $X_{j+2}$, we have:
    \begin{align}
        \xscf{j}_{j+2} &= f_{j+2}(\xscf{j}_{j+1}) + \exogenous_{j+2} \text{ where } \exogenous_{j+2} = x_{j+2} - f_{j+2}(x_{j+1}) \label{eqn:thm:trueintj+1}\\
        \xsinthat{j}_{j+2} &= \hat{f}_{j+2}(\xsinthat{j}_{j+1}) + \EE [\tilde{\exogenous}_{j+2}] \text{ where } \tilde{\exogenous}_{j+2} \sim \hat{\trnP_{\exogenous_{j+2}}}
    \end{align}
    
    Therefore, the error at $j+2$ is:
    \begin{align}
        \EE_{\vx \sim \rcdist_X}[|\xscf{j}_{j+2}  - \xsinthat{j}_{j+2}|] &=  \underbrace{\EE_{\vx \sim \rcdist_X}\left[|f_{j+2}(\xscf{j}_{j+1}) - \hat{f}_{j+2}(\xsinthat{j}_{j+1})|\right]}_{\text{term 1}} \\ \nonumber
        & \quad +  \underbrace{\EE_{\vx \sim \rcdist_X}[|\exogenous_{j+2} - \EE[\tilde{\exogenous}_{j+2}]|]}_{\text{term 2}}
    \end{align}
    We know that we can bound $\EE_{\vx \sim \rcdist_X}[|\exogenous_{j+2} - \EE[\tilde{\exogenous}_{j+2}]|] \leq \text{std}(\exogenous_{j+2}) + \text{\text{std}}(\tilde{\exogenous}_{j+2}) + |\mathbb{E}[\exogenous_{j+2}] - \mathbb{E}[\tilde{\exogenous}_{j+2}]|$.
    
    To bound the first term, we use Lipschitz property.
    \begin{align}
        \ \EE_{\vx \sim \rcdist_X}[|f_{j+2}(\xscf{j}_{j+1}) - \hat{f}_{j+2}(\xscf{j}_{j+1})|] & \leq \EE_{\vx \sim \rcdist_X}[|f_{j+2}(\xscf{j}_{j+1}) - \hat{f}_{j+2}(\xscf{j}_{j+1})|] \\ \nonumber 
        & \quad + K \EE_{\vx \sim \rcdist_X}[|\xscf{j}_{j+1} - \xsinthat{j}_{j+1}|] \\
        & \leq \EE_{x_{j+1} \sim \trnP_{X_{j+1}}}\left[|f_{j+2}(x_{j+1}) - \hat{f}_{j+2}(x_{j+1})|\right]  \\ \nonumber
        & \quad \quad + M \cdot \text{tvd}(\trnP_{X_{j+1}}, \hat{\trnP_{X_{j+1}}})
    \end{align}
    Finally, we have:
    \begin{mdframed}
        \begin{align}
             \EE_{\vx \sim \rcdist_X}[|\xscf{j}_{j+2}  - \xsinthat{j}_{j+2}|] &\leq \EE_{\vx \sim \rcdist_X}[|f_{j+2}(\xscf{j}_{j+1}) - \hat{f}_{j+2}(\xscf{j}_{j+1})|] \\ \nonumber 
            & \quad + K \EE_{\vx \sim \rcdist_X}[|\xscf{j}_{j+1} - \xsinthat{j}_{j+1}|] \\ \nonumber
            & \quad + \text{std}(\exogenous_{j+2}) + \text{\text{std}}(\tilde{\exogenous}_{j+2}) + |\mathbb{E}[\exogenous_{j+2}] - \mathbb{E}[\tilde{\exogenous}_{j+2}]|
        \end{align}
    \end{mdframed}

    We can extend this result ts assess the error at $X_n$ as follows:
    
    \intthm
\end{proof} 

\textbf{Corollary~\ref{thm:cor:int}}
\textit{
    Suppose in Theorem \ref{thm:main:int}, the exogenous variables $\exogenous_i$ are zero-mean in addition to having bounded variance for any $i \in [n]$, then the error between interventions and counterfactuals admits the following bound:
    \intcor
}

\begin{proof}
    If we know that the latent exogenous variables are zero-mean, we can set $\hat{\trnP_{\exogenous_{j+1}}}$ in Eq. \ref{eq:thm:int_eqns} to the dirac-delta distribution $\delta(\tilde{\exogenous}_{j+1} = 0)$.

    Then we can see from the remark below Eq. \ref{eq:thm:int_stddev} that the proof simply follows.
\end{proof}



\section{Toy Experiments}
\label{app:toy}

Section~\ref{sec:main:toy} outlined the experiments on toy datasets. Here, we provide a detailed description of the dataset generation process and conduct additional experiments to further highlight the differences between interventional and counterfactual estimates for RCD.

\begin{figure}[!h]
    \centering
    \begin{subfigure}[b]{0.46\textwidth}
        \centering
        \resizebox{\textwidth}{!}{\begin{tikzpicture}[node distance=2cm and 4cm, 
    on grid,
    every node/.style={draw, circle, align=center}]

\node (X1) at (0, 0) {$X_1$};
\node[right=of X1] (X2) {$X_2$};
\node[right=of X2] (X3) {$X_3$};
\node[below=of X2] (X4) {$X_4$};

\node[draw, rectangle, right=2cm of X1] {$X_1 = \epsilon_1$};  
\node[draw, rectangle, right=2cm of X2] {$X_2 = \epsilon_2$};
\node[draw, rectangle, right=2cm of X3] {$X_3 = \epsilon_3$};
\node[draw, rectangle, right=5cm of X4] {$X_4 = f_4(X_1, X_2, X_3) = w_1 X_1 + w_2 X_2 + w_3 X_3 + \epsilon_4$};

\draw[->, thick] (X1) -- (X4);
\draw[->, thick] (X2) -- (X4);
\draw[->, thick] (X3) -- (X4);

\end{tikzpicture}}
        \caption{Linear}
        \label{fig:toy_lin}
    \end{subfigure}%
    \hfill
    \begin{subfigure}[b]{0.46\textwidth}
        \centering
        \resizebox{\textwidth}{!}{\begin{tikzpicture}[node distance=2cm and 4cm, 
    on grid,
    every node/.style={draw, circle, align=center}]

\node (X1) at (0, 0) {$X_1$};
\node[right=of X1] (X2) {$X_2$};
\node[right=of X2] (X3) {$X_3$};
\node[below=of X2] (X4) {$X_4$};

\node[draw, rectangle, right=2cm of X1] {$X_1 = \epsilon_1$};  
\node[draw, rectangle, right=2cm of X2] {$X_2 = \epsilon_2$};
\node[draw, rectangle, right=2cm of X3] {$X_3 = \epsilon_3$};
\node[draw, rectangle, right=5.5cm of X4] {$X_4 = f_4(X_1, X_2, X_3) = \frac{\big(\sin(X_1) + \sqrt{|X_2|} + \exp(-X_3)\big)}{3} + \epsilon_4$};

\draw[->, thick] (X1) -- (X4);
\draw[->, thick] (X2) -- (X4);
\draw[->, thick] (X3) -- (X4);

\end{tikzpicture}}
        \caption{Non-Linear}
        \label{fig:toy_non}
    \end{subfigure}%
    \caption{\small{This figure illustrates a toy causal graph with four variables: root nodes $X_1 $, $ X_2 $, and $ X_3 $, and a child node $ X_4 $, which is connected to all three root nodes. Panel (a) depicts a linear SCM where the root nodes are direct copies of the exogenous noise terms $ \epsilon $, while $ X_4 $ is a linear function of the root nodes. Panel (b) presents a non-linear version, where $ X_4 $ is the average of three non-linear functions applied to the root nodes. In both panels, the SCM follows an additive noise model.}}
    \caption{\small{Four variable toy SCMs.}}
    
    \label{fig:toy_scm}
\end{figure}
We illustrate the four-variable toy SCM for both linear and non-linear cases in Figure~\ref{fig:toy_scm}. Table~\ref{tab:toy_illustration} presents the equations used to generate the toy dataset samples.


{\renewcommand{\arraystretch}{1}%
\begin{table*}
    \centering
    \setlength\tabcolsep{4pt}
    \resizebox{0.7\textwidth}{!}{
    \renewcommand{\arraystretch}{1.5} 
    \centering
    \begin{tabular}{l|l|l|l|l}
    \hline
                      & $X_1$       & $X_2$       & $X_3$       & $X_4$                                         \\ \hline \hline
    Training $\vx^{\text{trn}}$        & $\mathcal{N}(0, 1)$  & $\mathcal{N}(0, 1)$  & $\mathcal{N}(0, 1)$  & $f_4(\vx_{1:3}^{\text{trn}}) + \mathcal{N}(0, \sigma^2)$       \\
    Validation $\vx^{\text{val}}$      & $\mathcal{N}(0, 1)$  & $\mathcal{N}(0, 1)$  & $\mathcal{N}(0, 1)$  & $f_4(\vx_{1:3}^{\text{val}}) + \mathcal{N}(0, \sigma^2)$       \\
    Test $\vx^{\text{RC}}$            & $\mathcal{N}(0, 1)$  & $U[3, 10]$  & $\mathcal{N}(0, 1)$  & $f_4(\vx_{1:3}^{\text{tst}}) + \mathcal{N}(0, \sigma^2)$       \\
    Counterfactual $\xcfhat{2}$        & $x_1^{\text{RC}}$ & $\mathcal{N}(0, 1)$  & $x_3^{\text{RC}}$ & $f_4(\xcfhat{2}_{1:3}) + \hat{\exogenous_4}$  \\
    Intervention $\xinthat{2}$         & $x_1^{\text{RC}}$ & $\mathcal{N}(0, 1)$  & $x_3^{\text{RC}}$ & $f_4(\xinthat{2}_{1:3}) + \tilde{\exogenous_4}$ \\ \hline
    \end{tabular}}
\caption{\small{This table outlines the process for generating each sample in our toy simulation study. The node $X_4$ draws its $\exogenous_4$ from a normal distribution $\mathcal{N}(0, \sigma^2)$, with $\sigma^2$ varied between 0.5 and 5 in our experiments. For the test sample shown here, the root cause lies at $X_2$, and the corresponding $\exogenous_2$ is sampled from uniform $U[3,10]$. The fix $\fixnode{2}$ is sampled from $\mathcal{N}(0, 1)$. For counterfactuals, $\hat{\exogenous_4}$ represents the abducted $\exogenous$, while for interventions, $\tilde{\exogenous_4}$ corresponds to a randomly sampled error residual obtained from the validation data.}}
\label{tab:toy_illustration}
\end{table*}


%

We now provide a detailed analysis of the key observations from the results shown in Figure~\ref{fig:toys_main} in our main paper.
\begin{itemize}[leftmargin=12pt]
    \item \textbf{Low Variance: } 
    (a) A linear model, being convex, generalizes well across both training and test domains. As a result, the abducted values of $\hat{\exogenous_4}$ closely approximate the true values. This trend is evident in Fig.~\ref{fig:toy_1}, where the interventional error plateaus around the standard deviation 0.5, while the counterfactual (CF) error decreases with more training.  
    (b) In contrast, for the nonlinear model, interventional estimates significantly outperform CF estimates (note the $\log$ scale on the Y-axis). Nonlinear models are non-convex, leading to local minima or overfitting during training. Consequently, while validation error decreases rapidly, the test (RC) error remains high as training increases. This results in poor CF accuracy when $\hat{f_4}$ is evaluated on out-of-distribution (OOD) inputs.

    \item \textbf{Medium Variance: } 
    Similar trends are observed in this setting. Figs.~\ref{fig:toy_1} and \ref{fig:toy_2} together show that CF error correlates strongly with OOD test (RC) error, while interventional estimates remain bounded by the standard deviation. These results confirm the tightness of our theoretical bounds in practice.

    \item \textbf{High Variance: } 
    In the linear dataset, the trends remain consistent. However, in the nonlinear dataset, interventional estimates show more pronounced advantages over CFs, especially in low-data regimes. This is because, in such scenarios, high variance $\exogenous_4$ destabilizes $\hat{f_4}$'s training, leading to high variance in its parameters. Only with sufficient training samples (e.g., 1000) does the validation error approach zero, causing the CF error to be comparable with the interventional error.

    \item \textbf{Very High Variance: } 
    In this regime, we observe the first instance where CF error outperforms interventional error for the nonlinear model. One interpretation of the extreme variance in $\exogenous_4$ is that the features $X_1, X_2, X_3$ collected in the training data are insufficiently rich to explain the variance of $X_4$. This means that certain additional features must be collected, and thereby reduce the variance $\sigma^2$. Moreover, CFs perform better in this scenario only because their assumption of additive noise holds. We will demonstrate in the subsequent experiment that CFs perform very poorly when this assumption is violated.

    \item \textbf{Summary: } 
    While interventions may appear less favorable for linear models, we observed them to be adequate for in the root cause diagnosis problem. This is because, even if the estimated $\xsinthat{2}_4$ deviates from the true $\xscf{j}_4$, it suffices to infer the correct signal $\phi_4(\xscf{2}_4)$ regarding whether the fix suppresses the root cause at the target. Consequently, both interventions and CFs achieve near-perfect recall in our linear experiments, as reported in the main paper. However, for non-linear models, interventions surely emerge as a better choice.
\end{itemize}

\subsection{Additional Experiments on Linear Toy Dataset}

\begin{figure}[!h]
    \centering
    \begin{subfigure}{0.95\textwidth}
        \centering \includegraphics[width=\textwidth]{images/toy/toy_legend.pdf}
    \end{subfigure}%
    \vspace{0.2cm}
    
    
    

    \begin{subfigure}{\textwidth}
        \centering
        \includegraphics[width=\linewidth]{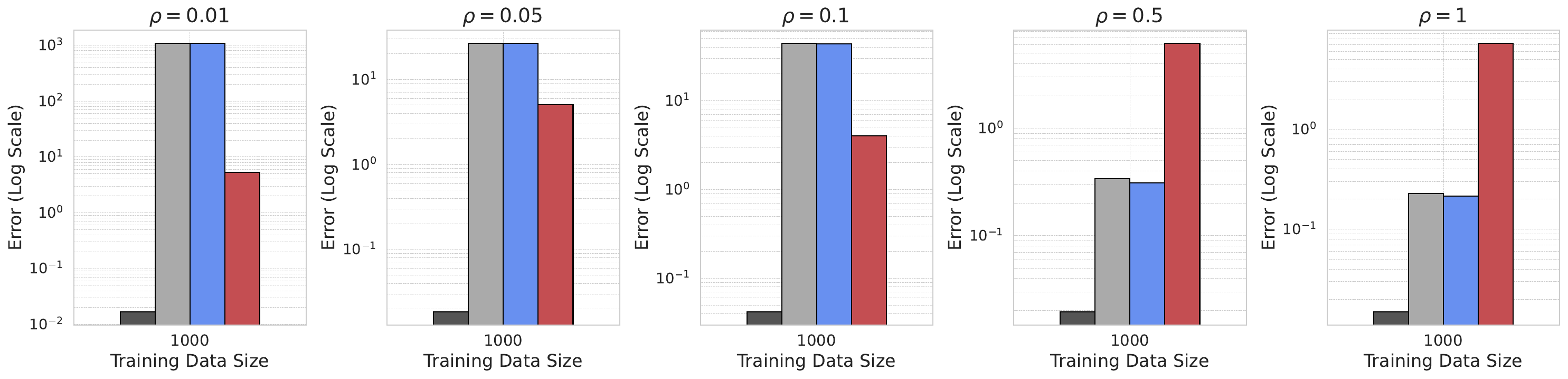}
        \caption{Linear toy dataset with multi collinear features in the high training size regime and very high variance regime}
        \label{fig:multi_col_1000}
    \end{subfigure}
    \begin{subfigure}{\textwidth}
        \centering
        \includegraphics[width=\linewidth]{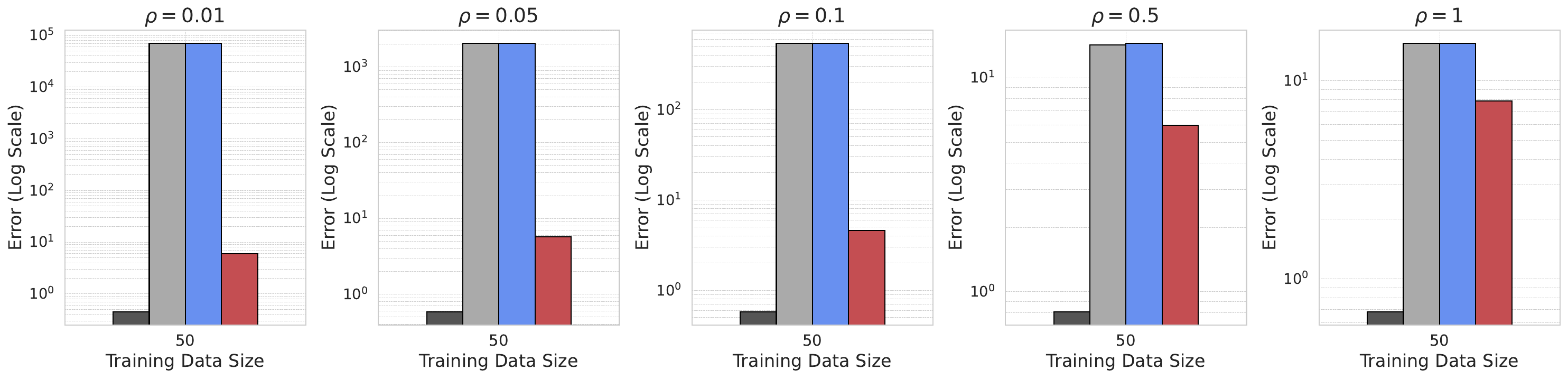}
        \caption{Linear toy dataset with multi collinear features in the low training size regime and very high variance regime}
        \label{fig:multi_col_50}
    \end{subfigure}
    
    \caption{Assessing the impact of variance of $\exogenous_i$ using a four variable toy dataset.}
    \label{fig:toys_app}
\end{figure}

We extended the four-variable linear toy dataset above in the very high variance setting to examine scenarios where counterfactuals (CFs) and interventions perform differently. We considered low training size with 50 samples and high training size with 1000 samples to study the impact of such correlated features on the intervention and counterfactual errors. The input features, $X$, were expanded to six dimensions. The first three features, $X[:, 0:3]$, were sampled i.i.d. from a standard Gaussian distribution as before. The remaining three features were designed to be correlated with the first three as follows:  
\[
X[:, 4] = X[:, 1] + \mathcal{N}(0, \rho), \quad X[:, 5] = X[:, 2] + \mathcal{N}(0, \rho), \quad X[:, 6] = X[:, 3] + \mathcal{N}(0, \rho)
\]
Here, $\rho$ takes values in $\{0.01, 0.1, 0.5, 1\}$. When $\rho = 0.01$, the fourth column is heavily correlated with the first, while for $\rho = 1$, the correlation coefficient is less pronounced.

The results shown in Fi.~\ref{fig:multi_col_50},~\ref{fig:multi_col_1000} reveal the following observations:

\textbf{For high training size and high variance settings:}
\begin{itemize}
        \item When $\rho$ is small, interventions outperform CFs because multicollinearity induces high error in the abnormal root cause regime for CFs. Validation error remains close to zero across all $\rho$ settings, but CF test error blows up in the abnormal regime.
        \item As $\rho$ increases (e.g., $\rho > 0.5$), the effects of multicollinearity diminish, and CFs regain their dominance over interventions.
\end{itemize}

\textbf{For low training size across all $\rho$ values:}
\begin{itemize}
    \item Interventions consistently outperform CFs. CF error escalates to as high as $10^3$, while interventions remain robust with their error bounded by the standard deviation of $\epsilon$.
\end{itemize}

\textbf{General Insights:}  
The central thesis of our work is that while there are specific scenarios where counterfactuals (CFs) outperform interventions—such as when the abducted values are accurately estimated and closely align with the oracle values (e.g., when $\hat{f}_i \approx f_i$)—these cases are exceptions rather than the norm. When consolidating the performance across the diverse SCMs examined in our study, interventions surely emerge as the superior approach for RCD compared to CFs.

\section{\blue{Description of Datasets}}
\subsection{PetShop}

The PetShop application is a microservices-based pet adoption platform deployed on Amazon Web Services (AWS). It allows users to search for pets and complete adoption transactions. The system is built using multiple interconnected microservices, which include storage systems, publish-subscribe systems, load balancers, and other custom application based logic. These services were containerized and deployed using Kubernetes. \blue{The main focus of this dataset is to diagnose the root cause for anomalies that occur at a target node called PetSite -- the front-end interface where users interact to browse for the pets displayed in the website. An anomaly is triggered at Petsite upon a violation of the service level objective (SLO), such as when the website's response time exceeds 200 microseconds.}

\xhdr{Causal Graph} The service dependencies in the PetShop application are captured in a directed graph, with edges indicating the call relationships between services. This call graph is inverted by reversing the edge directions to obtain the underlying causal graph for this dataset. \blue{While the Oracle causal graph is available in PetShop, the Oracle SCM remains unavailable since the true structural equations 
 underlying the Oracle SCM are unknown. Moreover, obtaining the exact Oracle SCM (i.e., the precise functions $f_i$) is infeasible for any real-world dataset, and rather only observed samples can be recorded. Previous studies leveraging PetShop, such as~\citep{toca} have adopted a learning-based approach, approximating the Oracle SCM by training the structural equations 
 on the collected finite samples. Our work also follows the same methodology.}

\xhdr{Node KPIs}
The target node, PetSite, has several ancestor nodes, including PetSearch ECS Fargate, petInfo DynamoDB Table, payforadoption ECS Fargate, lambdastatusupdater Lambda Function, and petlistadoptions ECS Fargate.  Key Performance Indicators (KPIs) for these nodes were collected using Amazon CloudWatch, with system traces logged at 5-minute intervals. Metrics include the number of requests, and latency (both average and quantiles) for each microservice. Overall, the dataset includes 68 injected issues across five nodes, with each issue associated with a unique ground-truth root cause. Consequently, each test case in this dataset features a \textbf{unique} root cause.

\xhdr{Root Cause Test cases}
The issues injected into the system span various types, including request overloads, memory leaks, CPU hogs, misconfigurations, and artificial delays. These issues affect different services, such as petInfo DynamoDB Table, payforadoption ECS Fargate, and lambdastatusupdater Lambda Function.   \blue{ Each root cause test case is created by selecting a root cause node and then issuing an abnormally high number of requests through that node. This surge in traffic from the selected root cause node ultimately triggers an SLO violation at the Petsite node.} 
\blue{
The dataset includes three types of root cause test cases: Low, High, and Temporal latency. On average, the request rates were 485 requests per second for Low latency, 690 for High latency, and 571 for Temporal latency cases. In the High and Temporal latency cases, the root cause nodes deviated significantly from the request patterns observed during training, allowing baseline methods such as traversal to perform well. However, for the more challenging Low latency cases, where the statistical discrepancy of the root cause nodes between the training regime and the abnormal root cause regime is less pronounced, IDI demonstrated the best performance.} 

The delays are parameterized with varying severities and affect different services, such as:
\begin{itemize}
    \item PetSearch ECS Fargate: 500-2000ms delays for bunny search requests
    \item payforadoption ECS Fargate: 250-1000ms delays for all requests
    \item Misconfiguration errors: Affecting 1-10\% of requests depending on the service
\end{itemize}

\subsection{Synthetic Datasets}
We experiment with several synthetic datasets to benchmark the perfomance of the baselines against \our. We desribe the generation of the causal graph first.

\xhdr{Causal Graph}
\noindent We follow the graph generation procedure outlined in prior work~\citep{shapley}. Given the total number of nodes $n$ and the number of root nodes $n_r$, the causal graph is constructed as follows: The nodes $X_1, X_2, \ldots, X_{n_r}$ are assigned as root nodes. For each internal node $X_k$ (where $k > n_r$), we randomly select its parent nodes from the set of preceding nodes $X_1, X_2, \ldots, X_{k-1}$. Each internal node has at most one parent, with a slight bias towards selecting a single parent. This sampling strategy helps to maintain a hierarchical structure and ensures the graph remains sufficiently deep. Once the parent nodes are chosen, directed edges are added from each parent to $X_k$. This process is repeated until all $n$ nodes are connected, forming a directed acyclic graph. Finally, we randomly choose a node that is at least 10 levels deep from a root node to serve as the target anomalous node. We show some example causal graphs in Fig.~\ref{fig:two_figures}.

\xhdr{Node KPIs}
Since cloud KPIs take on positive values -- such as latency, number of requests, throughput, and availability~\citep{petshop, traversal_5}—which are the commonly logged metrics, we ensured that all the nodes remain positive. To achieve this at root nodes, we sample $\exogenous_i$ from a uniform distribution $U[0, 1]$. For the internal nodes, we generate their observed values using one of the following three approaches:

\begin{itemize}[leftmargin=12pt]
    \item \textbf{Linear:} For a node $X_i$ with parents $\Pa{i}$, we define the functional equation linearly with random coefficients as: 
    $
    x_i = w_i^\top \pa{i} + \exogenous_i 
    $
    where $w_i \sim U[0.5, 2]$. These coefficients ensure that the KPI remains positive. We did not use $U[0, 1]$ for the weights because doing so caused deeper nodes in the graph to diminish, eventually reaching zero.
    
    \item \textbf{Non-Linear Invertible:} For non-linearity, we define each local mechanism $f_i$ as an additive noise three-layer ELU-activated MLP. We initialized the MLP weights using uniform distribution to ensure that even the internal nodes remain positive. Since many $X_i$ nodes have only one parent, we found that using four hidden nodes in the MLP was sufficient to generate non-trivial test cases. In this case, the equation becomes:
    $
    X_i = \text{mlp}(\Pa{i}) + \exogenous_i
    $
    where $\exogenous_i \sim U[0, 1]$.
    
    \item \textbf{Non-Linear Non-Invertible:} In this setting, we use the same MLP architecture as in the previous case, but without additive noise. Here, each MLP $f_i$ receives both the parents $\pa{i}$ and the noise $\exogenous_i$ as inputs. Thus, the equation is:
    $
    X_i = \text{mlp}(\Pa{i}, \exogenous_i)
    $
    with $\exogenous_i \sim U[0, 1]$. This configuration generates the most complex test cases in our experiments.
\end{itemize}

\xhdr{Root Cause Test cases}
We consider two cases for the root cause test cases:

\begin{itemize}[leftmargin=12pt]
    \item \textbf{Unique:} In this case, we randomly sample a node from the ancestors of the target node.
    \item \textbf{Multiple:} Here, we randomly sample at most three nodes from the ancestors, ensuring that our assumption~\ref{ass:ocpp} holds. Specifically, no two root causes lie on the same path to the target. We experiment with test cases that satisfy this assumption and perform ablations to investigate its impact.
\end{itemize}

Suppose $\alpha^\star$ denotes the root cause node set. For $\alpha^\star$ to cause an anomaly at $x_n$, we apply a grid search over $\exogenousv_{\alpha^\star}$ such that $(\exogenousv_{\alpha^\star}$ they lead the SCM $\scm$ to induce an anomaly at $x_n$. We start our search from $0$ and increase them in steps of size $0.25$ until the Z-score of the target node $\phi_n(x_n)$ hits the anomaly threshold $3$.

\subsection{Experiments under Assumption \ref{ass:ocpp} Violations}

{\renewcommand{\arraystretch}{1}%
\begin{table*}[!h]
    \centering
    \resizebox{0.9\textwidth}{!}{%
    \begin{tabular}{l|rr|rr|rr}
    \hline
    \multirow{2}{*}{} & \multicolumn{2}{c|}{Linear} & \multicolumn{2}{c|}{Non-Lin Inv} & \multicolumn{2}{c}{Non-Lin Non-Inv} \\ \cline{2-7} 
                            & Top-1      & Top-3      & Top-1      & Top-3      & Top-1      & Top-3      \\ \hline \hline
    Random Walk~\citep{randomwalk_1}             & 0.10       & 0.10       & 0.13       & 0.13       & 0.13       & 0.13       \\ 
    Ranked Correlation~\citep{petshop}      & 0.00       & 0.00       & 0.00       & 0.00       & 0.00       & 0.00       \\ \hline
    Traversal~\citep{traversal_1}               & 0.00       & 0.40       & 0.03       & 0.27       & 0.00       & 0.27       \\ 
    Smooth Traversal~\citep{toca}        & 0.23       & 0.50       & 0.17       & 0.47       & \second{0.30}       & \second{0.43}       \\ 
    CIRCA~\citep{circa}                   & 0.13       & 0.13       & 0.27       & 0.27       & 0.13       & 0.13       \\ \hline
    TOCA~\citep{toca}                    & 0.07       & 0.07       & 0.03       & 0.03       & 0.00       & 0.00       \\ 
    CF Attribution~\citep{shapley} & \first{0.83}     & \first{0.97}       & \second{0.33}       & \second{0.57}       & 0.07       & 0.23       \\ \hline
    \our\                   & \second{0.57}       & \second{0.57}       & \first{0.53}       & \first{0.60}       & \first{0.40}       & \first{0.53}       \\ \hline
    \end{tabular}
    }
    \caption{\small{Experiment under Assumption \ref{ass:ocpp} violations. A simple path to $X_n$ can features more than one root cause.
    \label{tab:ocpp_violations}}}
    \vspace{-0.4cm}
\end{table*}}

In this experiment, we injected root causes at arbitrary nodes, resulting in  Assumption \ref{ass:ocpp} violations. The results are shown in Table \ref{tab:ocpp_violations}. Both \our\ and other \causalanomaly\ methods face challenges in this scenario as they need parents of a root cause node to be usual.  While \tt{CF attribution} performs best in the linear setting,  it struggles in other settings due to abduction errors being amplified by the presence of \emph{multiple} root causes in the same path.  For non-lin inv SCMs, \our\ achieves the highest Recall, while in non-lin non-inv cases, it surpasses the CF method by $2\times$ Recall. Overall, \our\ achieved the best method Recall even under assumption~\ref{ass:ocpp} violations.

\section{\blue{Timing Analysis}}
{\renewcommand{\arraystretch}{1}%
\begin{table*}[!h]
    \centering
    \setlength\tabcolsep{4pt}
    \resizebox{0.95\textwidth}{!}{
    \centering
    \begin{tabular}{c|l|r|r|r}
    \hline
    ~ & Method & PetShop & Syn Linear & Syn Non-Linear \\ 
    \hline \hline
    \multirow{3}{*}{\rot{20}{\corrmethod}} & Random Walk~\citep{randomwalk_1} &     2.36 &       1.74 &          4.81 \\
    & Ranked Correlation~\citep{petshop}&     0.60 &       0.21 &          2.99 \\
    & $\epsilon$-Diagnosis~\citep{epsilon} &     2.11 &         -- &            -- \\
    \hline
    \multirow{3}{*}{\rot{20}{\causalanomaly}} & Circa~\citep{circa} &     0.52 &       0.36 &          2.73 \\
    & Traversal~\citep{traversal_1} &     0.27 &       0.24 &          1.05 \\
    & Smooth Traversal~\citep{toca} &     0.30 &       0.26 &          0.99 \\
    \hline
    \multirow{4}{*}{\rot{20}{\causalfix}} & HRCD~\citep{rcd} &    11.69 &         -- &            -- \\
    & TOCA~\citep{toca}&     1.96 &       0.95 &          9.16 \\
    & CF Attribution~\citep{shapley}&     9.71 &       22.99 &         178.47 \\
    \hline
    \multirow{2}{*}{\rot{20}{Ours}} & \our~(CF)  &     0.42 &       0.38 &          8.31 \\
    & \our &     0.37 &       1.29 &          9.62 \\
    \hline
    \end{tabular}}
    \caption{\small{Running time (in seconds) for datasets with \textbf{unique} root causes. "--" indicates that the baseline was not consider for the corresponding dataset.}}
    \label{tab:timing_unique}
\end{table*}}

We present the running time required for predicting the unique root cause across all methods for one test case in Table~\ref{tab:timing_unique}. We show the results for the semi-synthetic PetShop dataset, as well as the Linear and Non-Linear versions of our synthetic datasets. Note that we omit the Non-Linear Non-Invertible cases because their running times were comparable to the Non-Linear Invertible cases. We make the following observations:

\begin{enumerate}[leftmargin=12pt]
    \item The Correlation and Causal Anomaly methods demonstrate the best performance in terms of running time.
    \item Causal Fix approaches, on the contrary, are bottlenecked by the need to learn the Structural Causal Model (SCM). Learning the SCM involves fitting a lightweight regression model $ \hat{f_i}: \Pa{i} \mapsto X_i $ for each node $ i $. Recall that these models are lightweight because they only need to regress the parent covariates of the nodes. The Linear methods incur less time compared to the Non-Linear ones, as they can be learned using closed-form expressions, whereas Non-Linear methods require gradient descent-based training.
    \item For predicting the unique root cause, both \our~(CF) and \our~do not require Shapley value computations, allowing them to run in significantly less time.
    \item The baseline CF Attribution method, however, performs Shapley analysis across all nodes, even for the unique root cause, making it the worst-performing method in terms of running time.
\end{enumerate}

{\renewcommand{\arraystretch}{1}%
\begin{table*}[!h]
    \centering
    \setlength\tabcolsep{4pt}
    \resizebox{0.95\textwidth}{!}{
    \centering
    \begin{tabular}{c|l|r|r|r}
    \hline
    ~ & Method & Syn Linear & Syn Non-Lin Inv & Syn Non-Lin Non-Inv \\ 
    \hline \hline
    \multirow{3}{*}{\rot{20}{Corr.}} & Random Walk~\citep{randomwalk_1} &      8.63 &           6.73 &                9.1 \\
    & Ranked Correlation~\citep{petshop}&     0.26 &           1.54 &                1.9 \\
    \hline
    \multirow{3}{*}{\rot{20}{\causalanomaly}} & Circa~\citep{circa} &     0.38 &           2.34 &               2.23 \\
    & Traversal~\citep{traversal_1} &    0.22 &           2.26 &               1.99 \\
    & Smooth Traversal~\citep{toca} &     0.26 &           2.37 &               1.95 \\
    \hline
    \multirow{2}{*}{\rot{20}{\causalfix}} & TOCA~\citep{toca}&      1.3 &          11.66 &              13.58 \\
    & CF Attribution~\citep{shapley}&      23.48 &          120.61 &              190.08 \\
    \hline
    \multirow{2}{*}{\rot{20}{Ours}} & \our~(CF)  &     7.08 &          40.12 &              73.41 \\
    & \our &       8.2 &           42.6 &              76.24 \\
    \hline
    \end{tabular}}
    \caption{\small{Running time (in seconds) for datasets with \textbf{multiple} root causes. }}
    \label{tab:timing_multiple}
\end{table*}}
Table~\ref{tab:timing_multiple} presents the results for the running time required to predict multiple root causes. Unlike the unique root cause, our method \our~and its CF ablation \our~(CF) require Shapley analysis in this setting. However, Shapley values are computed only for the subset of nodes in $ \candR $, identified after the first step of our algorithm (the Anomaly condition). We make the following observations:

\begin{enumerate}[leftmargin=12pt]
    \item The running time for the random walk-based approach increases due to the presence of multiple anomalous paths leading to the target node.
    \item All other baseline approaches exhibit running times comparable to those observed in the unique root cause test cases.
    \item The running times for \our~and \our~(CF) increase because of the additional Shapley computations. However, this increase is significantly smaller compared to CF Attribution, as the former computes Shapley values over a subset of nodes, while the latter evaluates them across all nodes.
\end{enumerate}

\begin{table}[h]
\centering
\begin{tabular}{l|r|r}
\hline 
{Dataset}                 & {CF Attrib.}       & {IDI (Ours)}            \\ \hline \hline
Linear                           & $36.4 \pm 3.6$           & $6.0 \pm 1.6$             \\ \hline
Non-Linear Invertible            & $35.8 \pm 4.2$           & $6.0 \pm 1.9$             \\ \hline
Non-Linear Non-Invertible        & $37.1 \pm 3.8$           & $6.2 \pm 1.9$           \\ \hline
\end{tabular}
\caption{\small{Number of nodes considered for Shapley Analysis: We report the mean $\pm$ standard deviation computed across all the test cases.}} \label{tab:shapley_num}
\end{table}

We report the number of nodes involved in computing the Shapley values in Table~\ref{tab:shapley_num}. Since Shapley value computations are NP-Hard, in practice, Monte Carlo simulations are commonly used to approximate these values by sampling permutations of the nodes. In our work, we sampled 500 permutations for all methods to ensure tractability. Table~\ref{tab:shapley_num} presents the mean number of nodes involved, along with the standard deviation across all test cases. 
Overall, we observe a $6\times$ reduction in the number of nodes for \our~compared to the CF Attribution baseline. Notably, if exact computation of Shapley values were performed, this reduction factor would be even more significant.

\section{\blue{Experiments with High Variance of $\exogenous_i$}}
\label{sec:app:high_var}
In this section, we experiment with different sampling distributions for $\exogenous_i$ to evaluate their impact on interventional and counterfactual estimates. We begin with a simple four-variable toy example.

\subsection{Synthetic Setting}

In this subsection, we evaluate the impact of $\exogenous_i$ variance on other datasets used in our study.

\xhdr{PetShop} For the real-world PetShop dataset, the exogenous variables $\exogenous$ are latent, preventing us from characterizing or controlling their variance. So we cannot experiment with this dataset.

All the synthetic experiments outlined below use 100 i.i.d. training samples to learn the SCM $\scmhat$.

\xhdr{Linear SCM} In the main paper, we conducted experiments with each $\exogenous_i$ sampled from $U[0, 1]$. Here, we explore broader distributions by sampling from $U[0, b]$, with $b \in \{0.5, 2, 3, 5\}$. To reduce clutter, we focus on the best-performing baselines from the main results. Figure~\ref{fig:var_lin} presents results for the Linear SCM. The left panel corresponds to unique root cause test cases, while the right panel shows multiple root cause test cases, with Recall@1 on the Y-axis. As expected, in the linear setting, both interventional and counterfactual variants of \our\ achieve Recall of 1 across all variance settings, with the CF Attribution baseline standing out as a strong competitor. 

\begin{figure}[!h]
    \centering
    \begin{subfigure}[t]{0.48\textwidth}
        \centering
        \includegraphics[width=\textwidth]{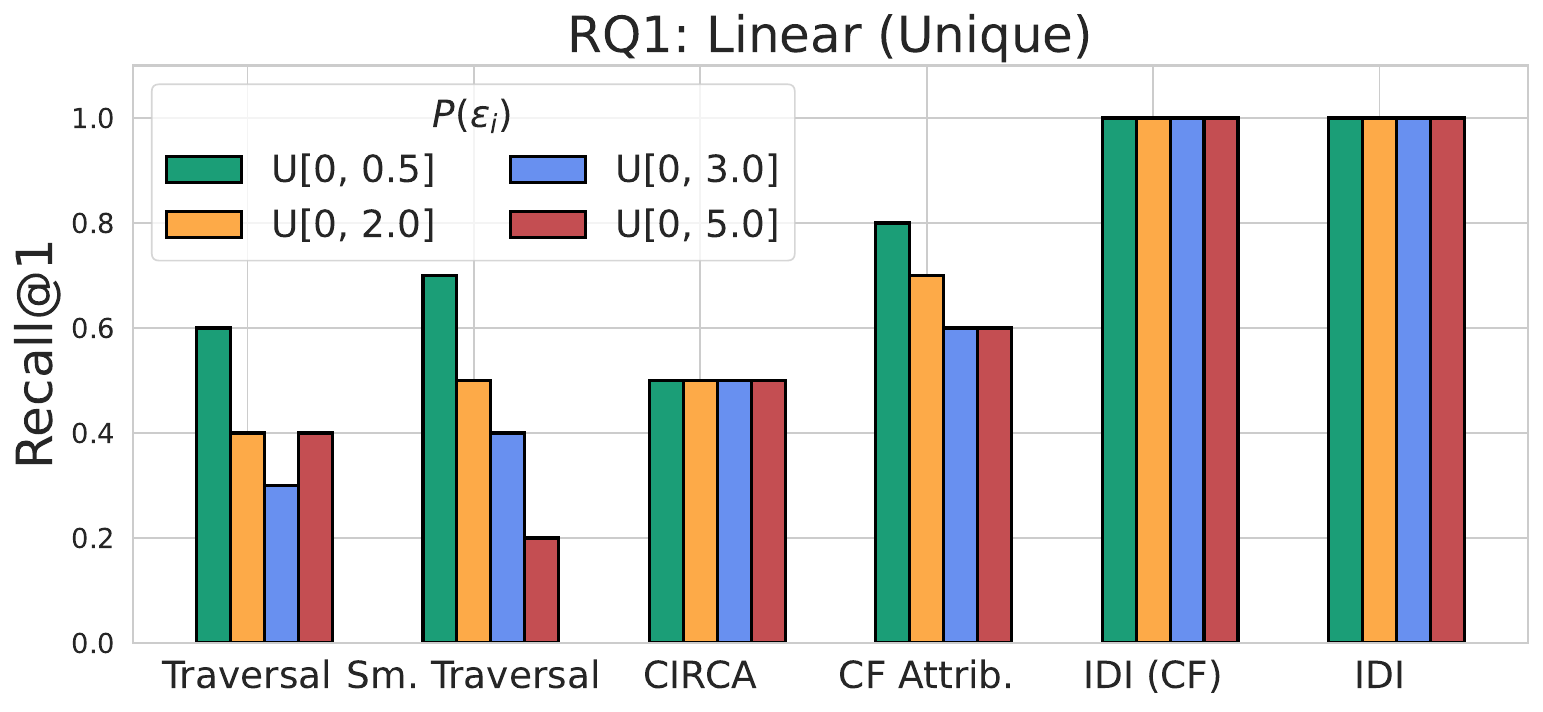}
        \caption{Unique Root Cause}
        \label{fig:sub1}
    \end{subfigure}%
    \hfill
    \begin{subfigure}[t]{0.48\textwidth}
        \centering
        \includegraphics[width=\textwidth]{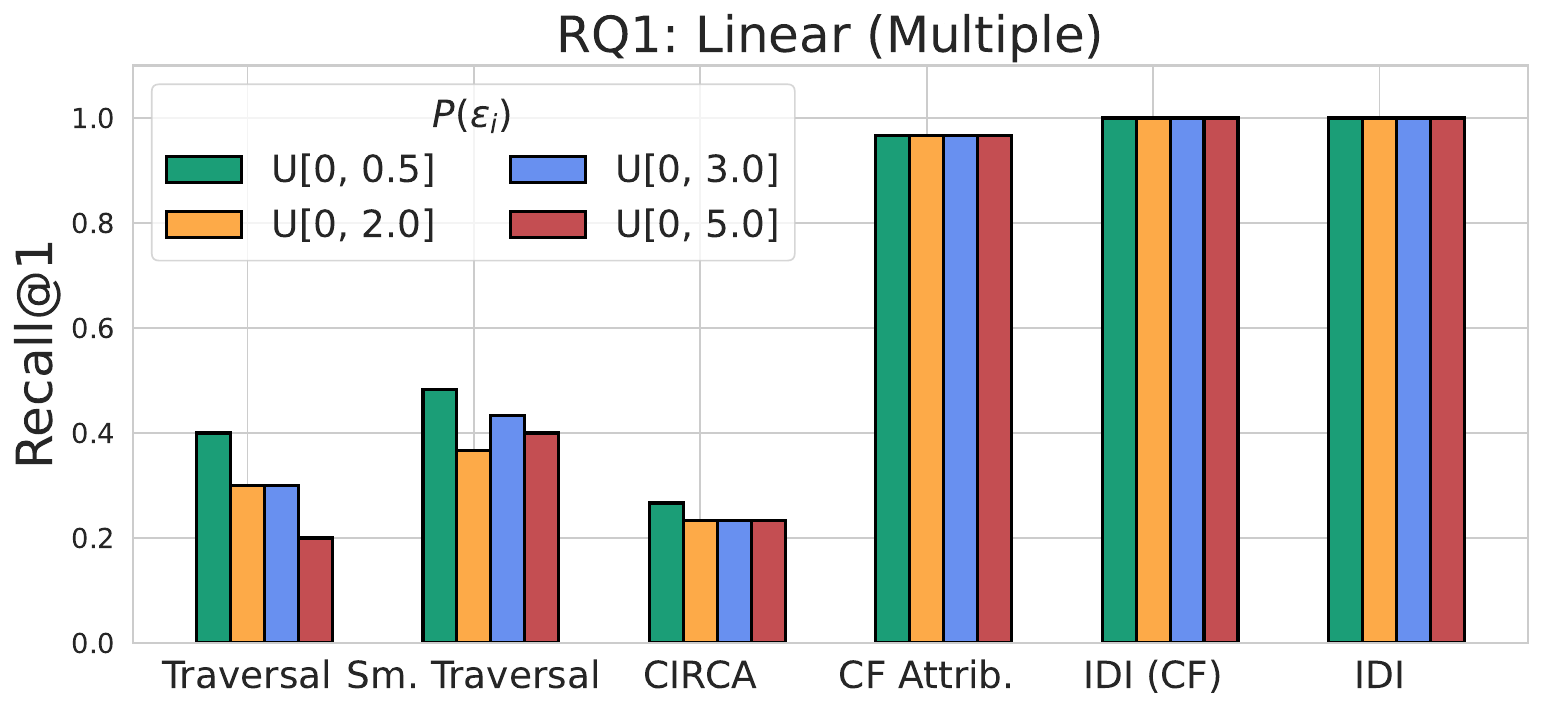}
        \caption{Multiple Root Causes}
        \label{fig:sub3}
    \end{subfigure}%
    \caption{Linear Oracle SCM}
    \label{fig:var_lin}
\end{figure}

\xhdr{Non-Linear Invertible}  
Figure~\ref{fig:var_noninv} presents results for this setting. For unique root cause test cases, \our\ slightly outperforms the \our~(CF) variant of our approach. Among the baselines, CIRCA and Smooth Traversal emerge as strong competitors, while CF Attribution performs poorly at high variance. For multiple root causes, \our\ falls slightly short of \our~(CF) in high variance scenarios, likely due to the overfitting or unstable training of the SCM $\scmhat$. We infact observed high validation errors during training for high variances $\exogenous_i$s. 

\begin{figure}[!h]
    \centering
    \begin{subfigure}[t]{0.48\textwidth}
        \centering
        \includegraphics[width=\textwidth]{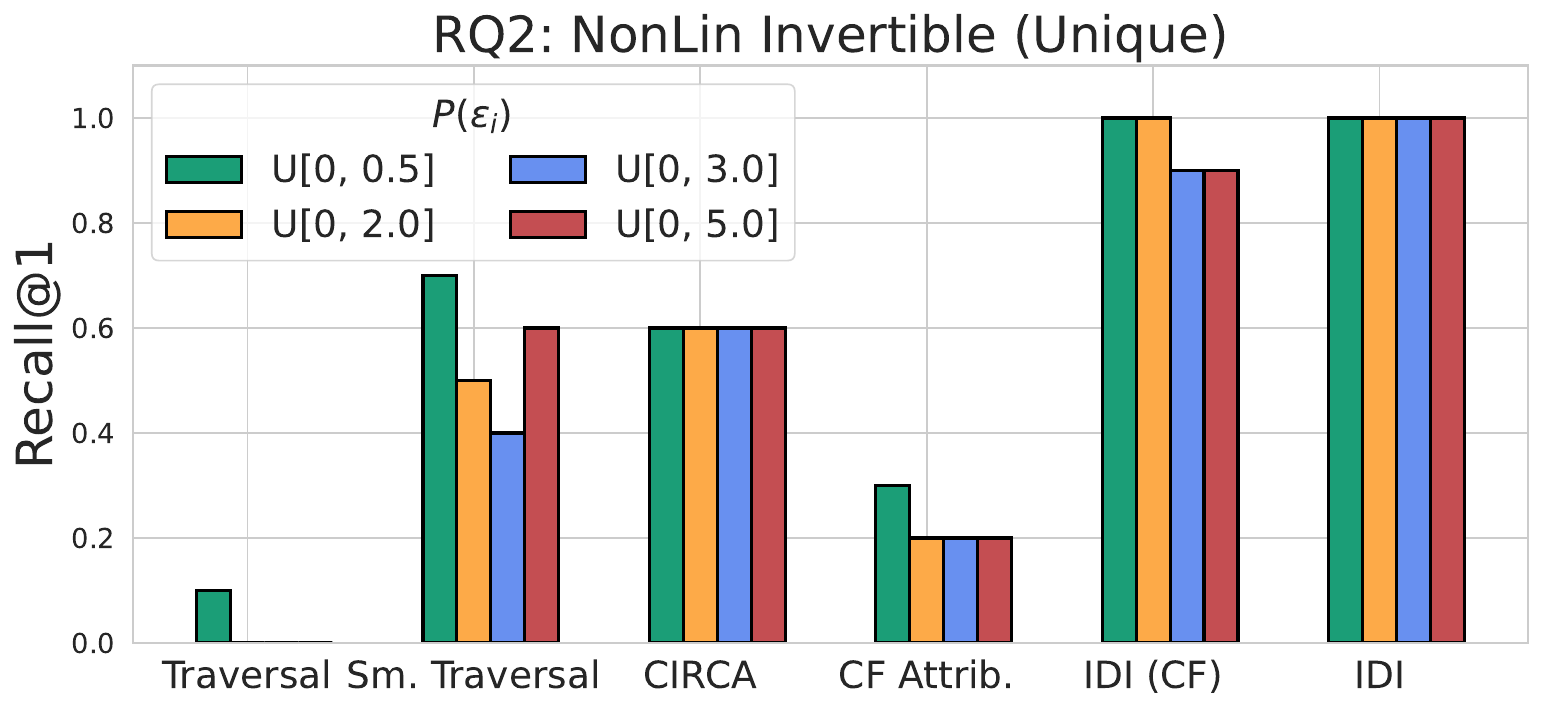}
        \caption{Unique Root Cause}
        \label{fig:sub1}
    \end{subfigure}%
    \hfill
    \begin{subfigure}[t]{0.48\textwidth}
        \centering
        \includegraphics[width=\textwidth]{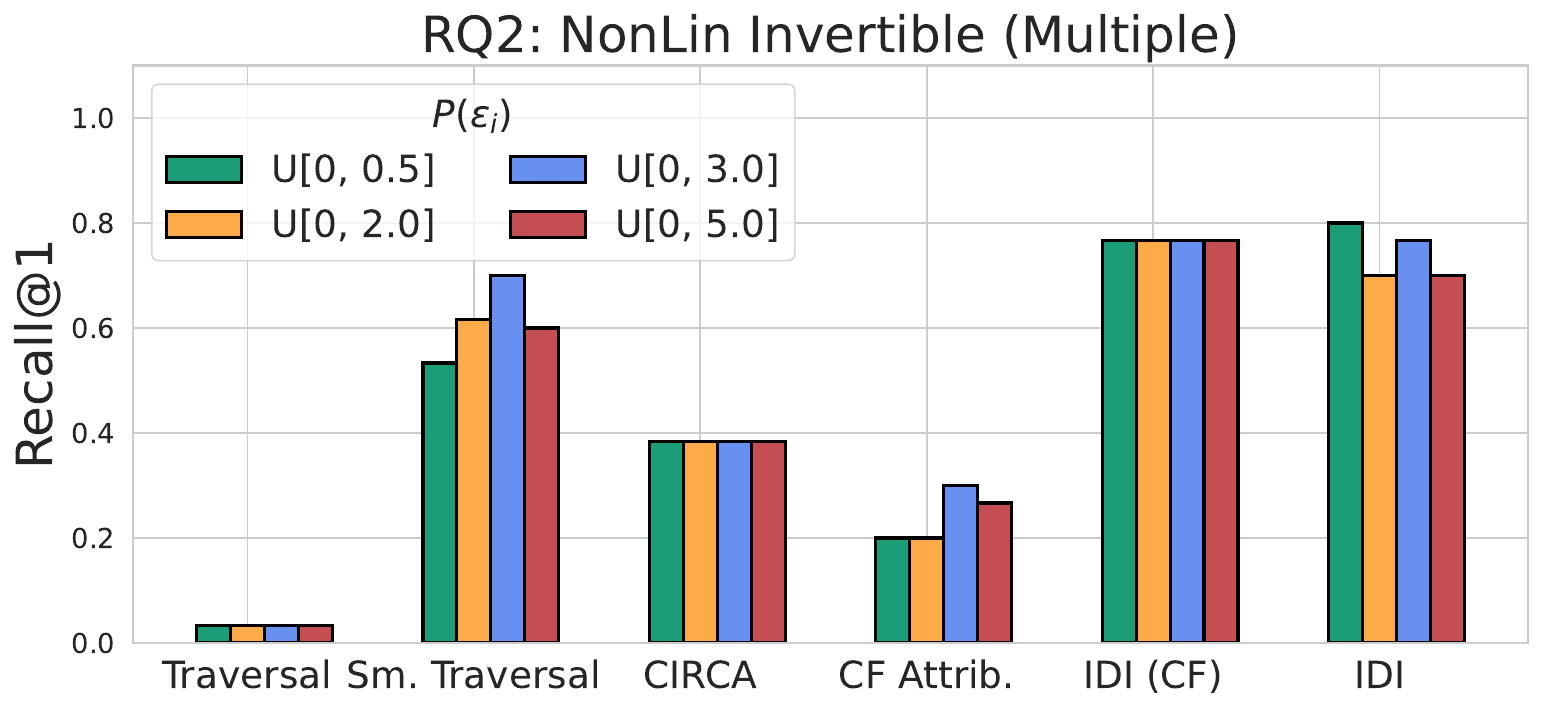}
        \caption{Multiple Root Causes}
        \label{fig:sub3}
    \end{subfigure}%
    \caption{Non Linear Invertible Oracle SCM}
    \label{fig:var_noninv}
\end{figure}

\xhdr{Non-Linear Non-Invertible}  
This setting is the most challenging among all datasets because $\exogenous_i$ is not identifiable, causing CF approaches to struggle. As shown in Fig.~\ref{fig:non_non}, both CF Attribution and our \our~(CF) variant perform poorly, often achieving near-zero recall at high variance. While \our\ also experiences performance drops compared to previous datasets, it remains comparatively robust when compared against the baselines. \our\ stands out as the best approach across all variance settings in this dataset.

\begin{figure}[!h]
    \centering
    \begin{subfigure}[t]{0.48\textwidth}
        \centering
        \includegraphics[width=\textwidth]{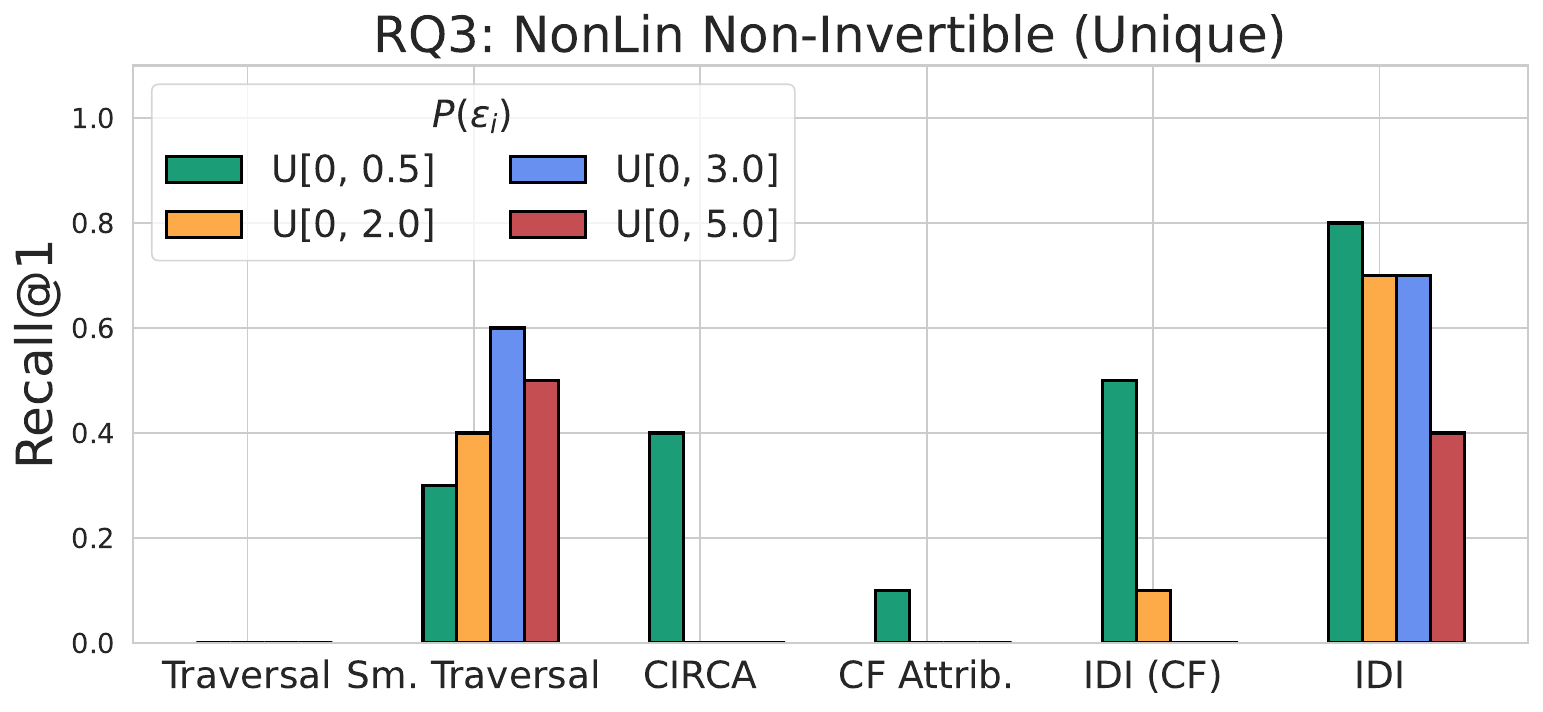}
        \caption{Unique Root Cause}
        \label{fig:sub1}
    \end{subfigure}%
    \hfill
    \begin{subfigure}[t]{0.48\textwidth}
        \centering
        \includegraphics[width=\textwidth]{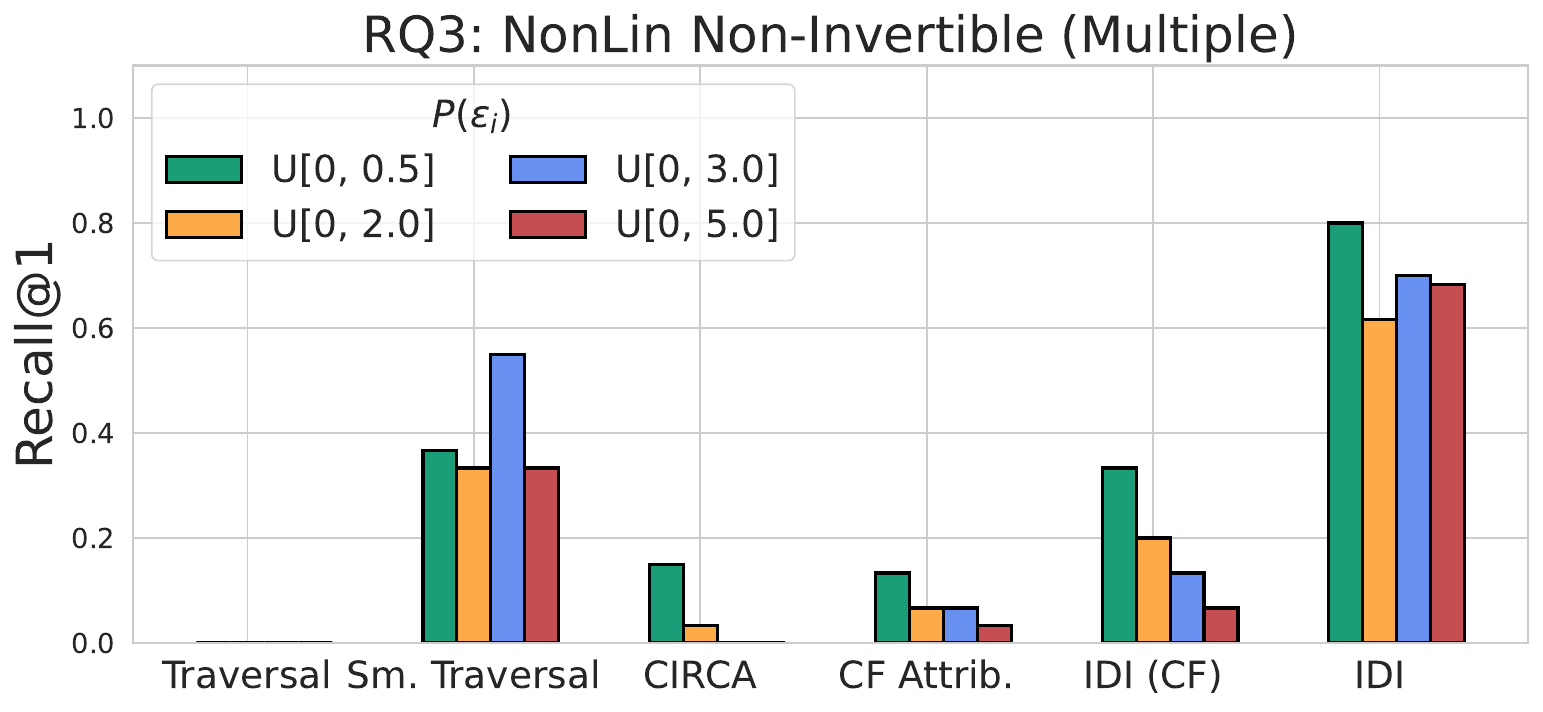}
        \caption{Multiple Root Causes}
        \label{fig:sub3}
    \end{subfigure}%
    \caption{Non Linear Non Invertible Oracle SCM}
    \label{fig:non_non}
\end{figure}

\section{\blue{Sensitivity Analysis of the Anomaly Threshold}}
We conducted this experiment on the PetShop dataset. In our main paper, we used a default anomaly threshold of 5. In this section, we assess the impact of baselines that implement the anomaly condition and \our\ under varying anomaly thresholds, $\tau_i$. For Z-Score, the threshold determines how many standard deviations a sample must deviate from the mean to be considered anomalous. We experimented with thresholds ranging from 2 to 7 and report the results in Table~\ref{tab:anomaly_thresh}. Overall, we observed that \our\ and the other baselines remained robust across different threshold choices, with \our\ only showing performance degradation at a threshold of 2. We acknowledge that tuning the threshold is important in practice. However to tune it, we abnormal root cause test samples during training, since most samples in $\normalD$ are non-anomalous. In the absence of such abnormal test samples, specifying this hyperparameter involves domain expertise.

{\renewcommand{\arraystretch}{1}%
\begin{table*}[!h]
    \centering
    \setlength\tabcolsep{4pt}
    \resizebox{0.9\textwidth}{!}{
    \renewcommand{\arraystretch}{1.05} 
    \centering
\begin{tabular}{l|r|r|r|r|r|r||r|r|r|r|r|r}
\toprule
~ & \multicolumn{6}{c||}{Recall@1} &  \multicolumn{6}{c}{Recall@3} \\ \hline
~  &     2 &     3 &     4 &     5 &     6 &     7 &     2 &     3 &     4 &     5 &     6 &     7  \\
\hline \hline
Traversal                  &  0.73 &  0.80 &  0.87 &  0.90 &  0.87 &  0.80 &  0.73 &  0.83 &  0.87 &  0.90 &  0.87 &  0.80 \\
Smooth Traversal           &  0.30 &  0.48 &  0.30 &  0.30 &  0.30 &  0.30 &  0.73 &  0.73 &  0.70 &  0.67 &  0.73 &  0.67 \\
IDI (CF)                   &  \first{0.80} &  0.80 &  0.80 &  0.80 &  0.80 &  0.80 &  0.90 &  0.90 &  0.90 &  0.90 &  0.90 &  0.90 \\
IDI                        &  0.73 &  \first{0.93} &  \first{0.90} &  \first{0.93} &  \first{0.93} &  \first{0.93} &  \first{0.93} &  \first{0.93} &  \first{0.93} &  \first{0.93} &  \first{0.93} &  \first{0.93} \\
\bottomrule
\end{tabular}}
\caption{Results under variation of the anomaly detection threshold in the PetShop dataset. We show both Recall@1 and Recall@3.}
\label{tab:anomaly_thresh}
\end{table*}

\section{Synthetic Causal Graphs}
\begin{figure}[ht]
    \centering
    \begin{minipage}[b]{0.48\textwidth}
        \centering
        \includegraphics[width=\textwidth]{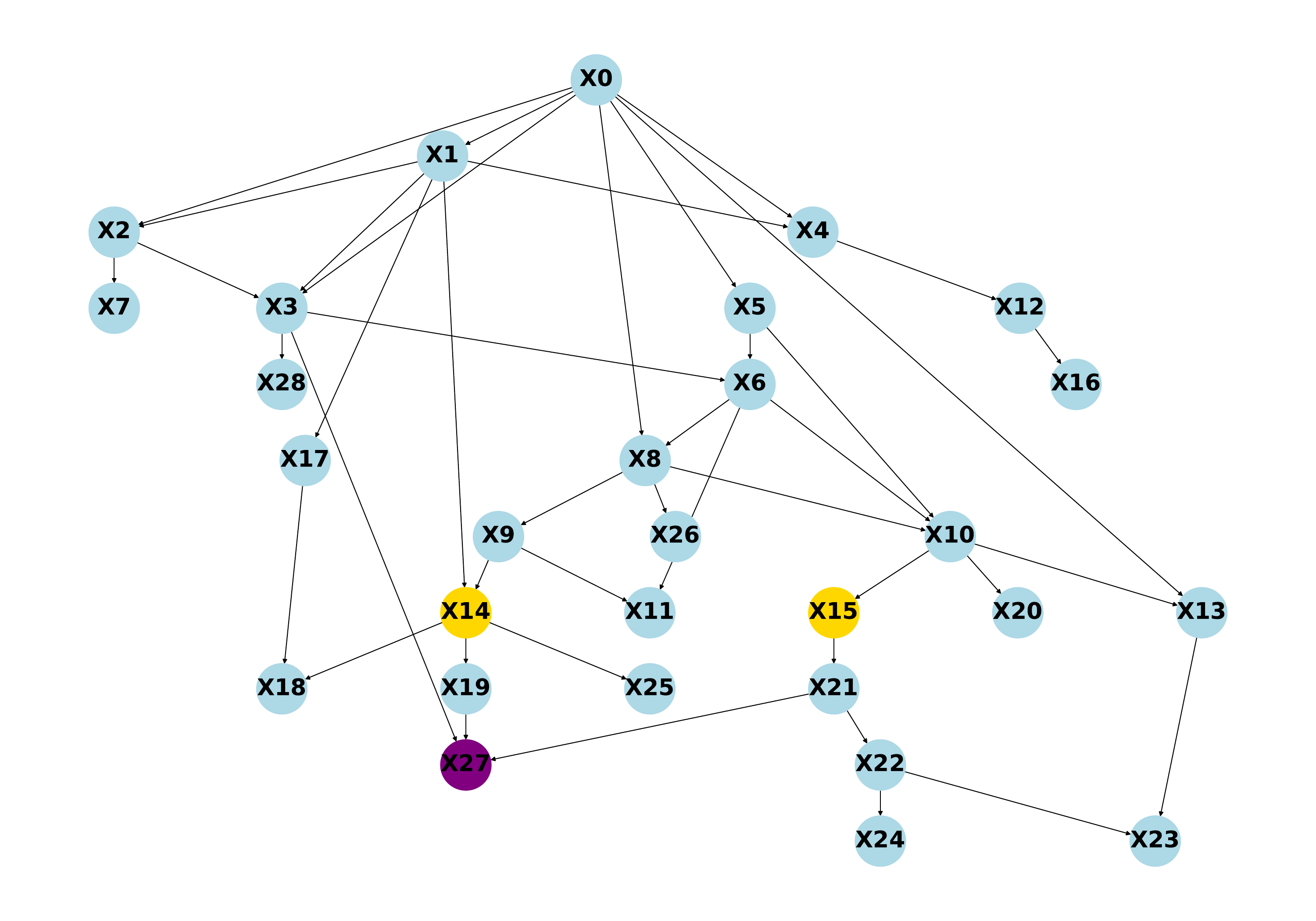}
        \caption{An example Synthetic test case}
        \label{fig:figure1}
    \end{minipage}
    \hfill
    \begin{minipage}[b]{0.48\textwidth}
        \centering
        \includegraphics[width=\textwidth]{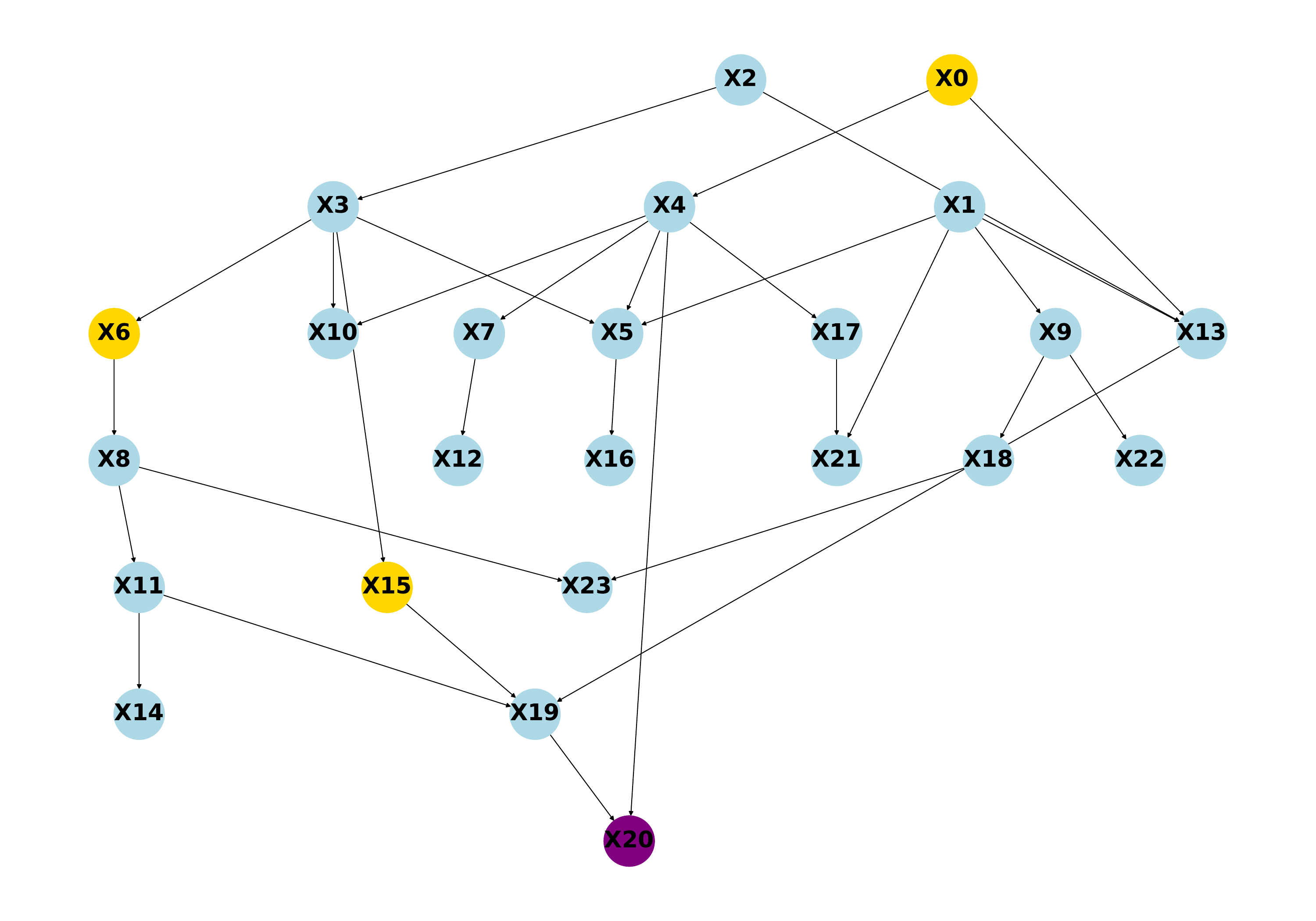}
        \caption{An example Synthetic test case}
        \label{fig:figure2}
    \end{minipage}
    \caption{Random graphs sampled for our Synthetic Experiments}
    \label{fig:two_figures}
\end{figure}

We show two instances of random graphs in Fig. \ref{fig:two_figures}. The purple node is the anomaly for which we need to find the root cause. The ground truth root cause nodes are shown in yellow. We typically observed that all nodes that are descendants of the yellow nodes also tend to exhibit anomalous behavior. 

\label{app:synscm}

\end{document}